\newcounter{theorem}
\newcounter{lem_counter}
\newcounter{ass_counter}
\newtheorem{lemma}[lem_counter]{Lemma}
\newtheorem{assumption}[ass_counter]{Assumption}
\begin{document}

\title{Appendix to STOPS: Short-term Volatility-controlled Policy Search and its Global Convergence}

\author{Liangliang Xu, Daoming Lyu, Yangchen Pan, Aiwen Jiang, Bo Liu
}
\date{}
\maketitle
\let\clearpage\relax
\include{mainref}
\setcounter{page}{1}
\setcounter{section}{0}
\section{Notation Systems}
\label{sec:notations}
\begin{itemize}
    \item $(\mathcal{S}, \mathcal{A}, \mathcal{P}, r,\gamma)$ with state space $\mathcal{S}$, action space $\mathcal{A}$, the transition kernel $\mathcal{P}$, the reward function $r$, the initial state $S_0$ and its distribution $\mu_{0}$, and the discounted factor $\gamma$.
    \item $r_{\max} > 0$ is a constant as the upper bound of the reward. 
    \item State value function $V_{\pi}(s)$ and state-action value function $Q_{\pi}(s,a)$.
    \item The normalized state and state action occupancy measure of policy $\pi$ is denoted by $\nu_\pi(s)$ and $\sigma_\pi(s,a)$
    \item $T$ is the length of a \textit{trajectory}.
    \item The return is defined as $G$. $J(\pi)$ is the expectation of $G$.
    \item Policy $\pi_\theta$ is parameterized by the parameter $\theta$. 
    \item $\tau$ is the temperature parameter in the softmax parameterization of the policy.
    \item $F(\theta)$ is the Fisher information matrix.
    \item $\eta_TD$ is the learning rate of TD update. Similarly, $\eta_NPG$ is the learning rate of NPG update. $\eta_PPO$ is the learning rate of PPO update.
    \item $\beta$ is the penalty factor of KL difference in PPO update.
    \item $f\big((s,a);\theta\big)$ is the two-layer over-parameterized neural network, with $m$ as its width.
    \item $\phi_\theta$ is the feature mapping of the neural network.
    \item $\mathcal{D}$ is the parameter space for $\theta$, with $\Upsilon$ as its radius.
    \item $M >0$ is a constant as the initialization upper bound on $\theta$.
    \item $J^G_\lambda (\pi)$ is the mean-variance objective function.
    \item $J_\lambda (\pi)$ is the reward-volatility objective function, with $\lambda$ as the penalty factor.
    \item $J_\lambda^y(\pi)$ is the transformed reward-volatility objective function, with $y$ as the auxiliary variable.
    \item $\tilde{r}$ is the reward for the augmented MDP. Similarly, $\tilde V_\pi(s)$ and $\tilde Q_\pi(s,a)$ are state value function and state-action value function of the augmented MDP, respectively. $\tilde J(\pi)$ is the risk-neural objective of the augmented MDP.
    \item $\hat{y}_{k}$ is an estimator of $y$ at $k$-th iteration.
    \item $\omega$ is the parameter of critic network.
    \item $\delta_k=\text{argmin}_{\delta\in\mathcal{D}}\Vert\hat{F}(\theta_k)\delta-\tau_k\hat{\nabla}_\theta J(\pi_{\theta_k} )\Vert_2$.
    \item $\xi_k(\delta)=\hat{F}(\theta_k)\delta-\tau_k\hat{\nabla}_\theta \Tilde{J}(\pi_{\theta_k})-\mathbb{E}[\hat{F}(\theta_k)\delta-\tau_k\hat{\nabla}_\theta \Tilde{J}(\pi_{\theta_k} )]$.
    \item $\sigma_\xi$ is a constant associated with the upper bound of the gradient variance.
    \item $\varphi_k,\psi_k,\varphi'_k,\psi'_k$ are the concentability coefficients, upper bounded by a constant $c_0 > 0$.
    \item $\varphi^*_{k} = \mathbb{E}_{(s,a) \sim \sigma_\pi}\bigg[\big(\frac{d\pi^*}{d\pi_0}-\frac{d\pi_{\theta_k}}{d\pi_0}\big)^2\bigg]^{1/2}$.
    \item $\psi^*_{k} = \mathbb{E}_{(s,a) \sim \sigma_\pi}\bigg[\big(\frac{d\sigma_{\pi^*}}{d\sigma_\pi}-\frac{d\nu_{\pi^*}}{d\nu_\pi}\big)^2\bigg]^{1/2}$.
    \item $K$ is the total number of iterations. Similarly, $K_{\rm TD}$ is the total number of TD iterations.
    \item $c_3>0$ is a constant as to quantify the difference in risk-neutral objective between optimal policy and any policy. 
\end{itemize}

\section{Algorithm Details}
\label{appendix:algorithm_details}
We provide a comparison between MVPI and STOPS.
\begin{algorithm}[ht]
\caption{\label{alg:compare}A comparison between STOPS and MVPI}
\SetKwBlock{DummyBlock}{}{}
\For{$k = 1, \dots, K$}
{
    \textbf{Step 1}: $y_k:=(1 - \gamma) J(\pi_k)$\;
    \textbf{Step 2}: $\tilde{J}(\pi_{\theta_k}) := \mathbb{E}_{(s,a)\sim\sigma_{\pi_\theta}}(r_{s,a}-\lambda r^2_{s,a} + 2\lambda r_{s,a} y_k)$;
    \SetAlgoNoLine\DummyBlock{\SetAlgoLined\uIf{\textbf{MVPI}:} {
        $\theta_{k}:=\arg\max_\theta ( \tilde{J}(\pi_{\theta_k}) )$\; 
        \tcp{This is achieved by line~\ref{line:mvpi start} to~\ref{line:mvpi end} in Algorithm~\ref{alg:MVPI}}
    }\uElseIf{\textbf{STOPS}:}
        {\uIf{select NPG update}{
            update $\theta_k$ according to Eq.~\eqref{eq:neural npg update} \;
        }\uElseIf{select PPO update}{
            update $\theta_k$ according to Eq.~\eqref{eq:neural ppo update} \;
            }
          
        }
    }
}
\textbf{Output}: $\pi_{\theta_K}$\;
\end{algorithm}
Note that neither NPG nor PPO solve $\theta_{k}:=\arg\max_\theta ( \tilde{J}(\pi_{\theta_k}) )$ directly, but instead solve an approximation optimization problem at each iteration.
We provide pseudo-code for the implementation of MVPI and VARAC in Algorithm~\ref{alg:MVPI} and \ref{alg:VARAC}.
\begin{algorithm}[ht]
\caption{\label{alg:MVPI}\textbf{MVPI with over-parameterized networks}}
\textbf{Input}: number of iteration $K$,
learning rate for natural policy gradient (resp. PPO) TD $\eta_{\rm NPG}$ (resp. $\eta_{\rm PPO}$), temperature parameters
$\{\tau_k\}^K_{k=1}$\;
\textbf{Initialization}: Initialize policy network $f((s,a);\theta,b)$ as defined in Eq.~\eqref{eq:Theta}. Set $\tau_1 = 1$.
Initialize Q-network with $(b, \omega_1)$ similarly\;
\For {$k = 1,\cdots, K$}
{
Sample a batch of transitions $\{s_t,a_t,r_t,s'_t\}^{T}_{t=1}$ following current policy with size of $T$\;
$y = \frac{1}{T} \sum_{t=1}^{T}r_t $ \;
\For{$t = 1,\cdots, T$}
{
    $\Tilde{r}_t = r_t - \lambda r_t^2 + 2\lambda r_t y$, $a'_t \sim \pi (a|s'_t)$\;
}

\Repeat{CONVERGE}{
\label{line:mvpi start}
\textbf{Q-value update}: update $\omega_k$ according to Eq.~\eqref{eq:aug TD}\;
\uIf{select NPG update}{
update $\theta_k$ according to Eq.~\eqref{eq:neural npg update}\;
}\uElseIf{select PPO update}{
update $\theta_k$ according to Eq.~\eqref{eq:neural ppo update}\; 
}}
\label{line:mvpi end}
}
\textbf{Output}: $\pi_{\theta_K}$\;
\end{algorithm}


\begin{algorithm}[ht]
\caption{\label{alg:VARAC}\textbf{VARAC}} 
\textbf{Input}: number of iteration $K$, learning rate for natural policy gradient (resp. PPO) TD $\eta_{\rm NPG}$ (resp. $\eta_{\rm PPO}$), temperature parameters
$\{\tau_k\}^K_{k=1}$\;
\textbf{Initialization}: Initialize policy network $f((s,a);\theta,b)$ as defined in Eq.~\eqref{eq:Theta}. Set $\tau_1 = 1$.
Initialize Q-network with $(b, \omega_1)$ similarly\;
\For {$k = 1,\cdots, K$}
{
Sample a batch of transitions $\{s_t,a_t,r_t,s'_t\}^{T}_{t=1}$ following current policy with size of $T$\;
$y = \frac{1}{T} \sum_{t=1}^{T}r_t $\; 
\textbf{Q-value update}: update both networks' $\omega_k$ according to Eq.~\eqref{eq:aug TD}\; 
Output $Q_{k}$ and $W_{k}$\;
update $\theta_k$ with NPG or PPO\;
}
\textbf{Output}: $\pi_{\theta_K}$\;
\end{algorithm}





\section{Theoretical Analysis Details}
\label{appendix:theory_details}
In this section, we discuss the theoretical analysis in detail. We first present the overview in Section~\ref{appendix:theory_overview}. Then we provide additional assumptions in Section~\ref{appendix:assumption}. In the rest of the section, we present all the supporting lemmas and the proof for Theorem~\ref{th:major result PPO} and~\ref{th:major result}.

\subsection{Overview}
\label{appendix:theory_overview}
\begin{figure}[htb!]
  \centering
  \includegraphics[width=0.6\linewidth]{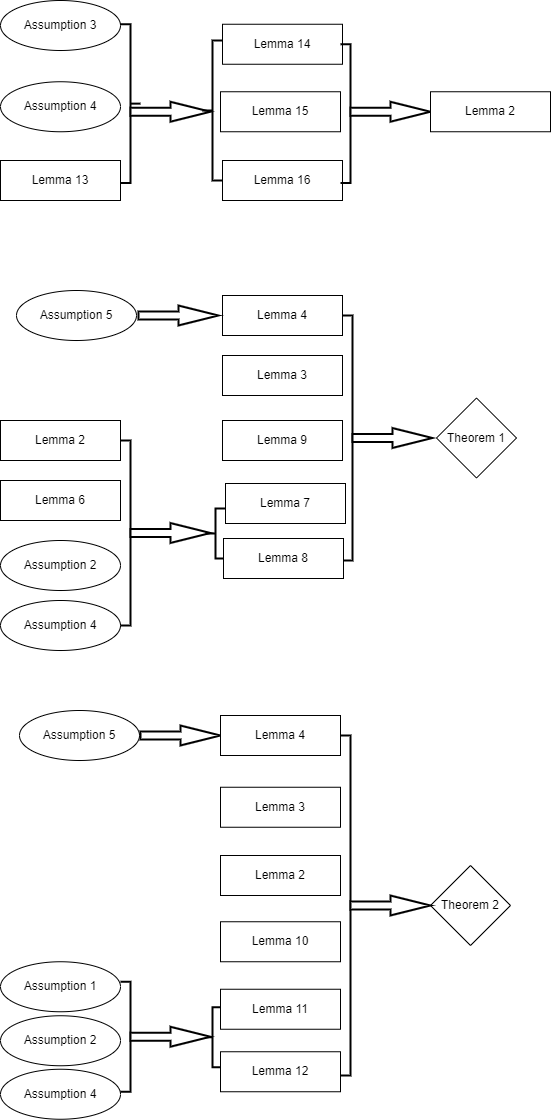}
  \caption{A flow chart of the theoretical analysis}\label{fig:lemma-theorem-relation}
\end{figure}

We provide Figure~\ref{fig:lemma-theorem-relation} to illustrate the structure of the theoretical analysis. First, under Assumption~\ref{assu: action-value function class} and~\ref{assu: state distribution regularity}, as well as Lemma~\ref{lem:projection error}. we can obtain Lemma~\ref{lem:linearization error},~\ref{lem:linearization gradient error} and~\ref{lem:variance of SUV}. These are the building blocks of lemma~\ref{lem: policy evaluation error}, which is a shared component in the analysis of both NPG and PPO. The shared components also include Lemma~\ref{lem:performance difference}, as well as Lemma~\ref{lem:y objective} obtained under Assumption~\ref{assu:estimation errors}.
For PPO analysis, under Assumption~\ref{assu: concentrability regularity} and~\ref{assu: state distribution regularity}, we obtain Lemma~\ref{lem: error propagation} and~\ref{lem:stepwise energy difference} from Lemma~\ref{lem: policy evaluation error} 
and ~\ref{lem: neural ppo update}, Then combined with Lemma~\ref{lem:performance difference}, ~\ref{lem:y objective} and~\ref{lem:ppo one step pi}, we obtain Theorem~\ref{th:major result PPO}, the major result of PPO anaysis.
Likely for NPG analysis, we first obtain Lemma~\ref{lem:npg one step pi} and~\ref{lem:Hk} under Assumption~\ref{assu: variance regularity}, ~\ref{assu: concentrability regularity} and~\ref{assu: state distribution regularity}. Then together with Lemma~\ref{lem: policy evaluation error},~\ref{lem:performance difference}, ~\ref{lem:y objective} and~\ref{lem:neural npg gradient and fisher}, we obtain Theorem~\ref{th:major result}, the major result of NPG anaysis.
\subsection{Additional Assumptions}
\label{appendix:assumption}
\begin{assumption}
\label{assu: action-value function class}
(Action-value function class) We define
\begin{align}
    &\mathcal{F}_{\Upsilon,\infty} := \Bigg\{f(s,a;\theta)=f_0(s,a))+\nonumber \\
    &\int \mathbbm{1}\{\theta^\top(s,a)>0\} (s,a)^\top\iota(\theta)d\mu (w):\Vert\iota(\theta)\Vert_\infty \leq \Upsilon/\sqrt{d}\Bigg\}
\end{align}
Where $\mu:\mathbb{R}^d \rightarrow [0,1] $ is a probability density function of $\mathcal{N}(0,I_d/d)$. $f_0(s,a)$ is the two-layer neural network corresponding to the initial parameter $\Theta_{{\rm init}}$, and $\iota:\mathbb{R}^d \rightarrow \mathbb{R}^d $ is a weighted function. 
We assume that $\Tilde{Q}_\pi\in\mathcal{F}_{\Upsilon,\infty}$ for all $\pi$.
\end{assumption}
\begin{assumption}
\label{assu: state distribution regularity}
(Regularity of stationary distribution) For any policy $\pi$, and  $\forall x \in \mathbb{R}^d, \forall\Vert x\Vert_2=1$, and $\forall u>0$, we assume that there exists a constant $c > 0$ such that 
$
    \mathbb{E}_{(s,a) \sim \sigma_\pi}\big[\mathbbm{1}\{|x^\top(s,a)|\leq u\}\big]\leq c u.
$
\end{assumption}
Assumption~\ref{assu: action-value function class} is a mild regularity condition on $Q_\pi$, as $\mathcal{F}_{\Upsilon,\infty}$ is a sufficiently rich function class and approximates a subset of the reproducing kernel Hilbert space (RKHS)~\cite{rahimi2008weighted}. Similar assumptions are widely imposed~\cite{munos2008finitetime,antos2007fitted,farahmand2016regularized,yang2020reinforcement,wang2019neural}. Assumption~\ref{assu: state distribution regularity} is a regularity condition on the transition kernel $\mathcal{P}$. Such regularity holds so long as $\sigma_\pi$ has an upper bound density, satisfying most Markov chains.

In \cite{zhong2020risksensitive} Lemma 4.15, they make a mistake in the proof. They accidentally flip a sign in $y^*-\bar{y}$ when transitioning from the first equation in the proof to Eq.(4.15). This invalidates the conclusion in Eq.(4.17), an essential part of the proof. We tackle this issue by proposing the next assumption.
\begin{assumption}
\label{assu:estimation errors}
(Convergence Rate of $J(\pi)$) We assume  $\pi^*$ 
(the optimal policy to the risk-averse objective function $J_\lambda(\pi)$) converges to the risk-neutral objective $J(\pi)$ for both NPG and PPO with the over-parameterized neural network to be $\mathcal{O}(1/\sqrt{k})$. Specifically, there exists a constant $c_3>0$ such that,
\begin{align}
    J(\pi^*) - J(\pi_k) \leq \frac{c_3}{\sqrt{k}}
\end{align}
\end{assumption}
It was proved~\cite{wang2019neural,liu2019neural} that the optimal policy w.r.t the risk-neutral objective $J(\pi)$ obtained by NPG and PPO method with the over-parameterized two-layer neural network converges to the globally optimal policy at a rate of $\mathcal{O}(1/\sqrt{K})$, where $K$ is the number of iteration. Since our method uses similar settings, we assume the convergence rates of risk-neutral objective $J(\pi)$ in our paper follow their results.

In the following subsections, we study STOPS's convergence of global optimality and provide a proof sketch.

\subsection{Proof of Theorem~\ref{th:major result PPO}}
\label{sec:proof of ppo}
We first present the analysis of policy evaluation error, which is induced by TD update in Line~\ref{line:critic update} of Algorithm~\ref{alg:TOPS}. We characterize the policy evaluation error in the following lemma:
\begin{lemma}
\label{lem: policy evaluation error}
(Policy Evaluation Error) We set learning rate of TD $\eta_{\text{TD}} = \min\{(1-\gamma)/3(1+\gamma)^2, 1/\sqrt{K_{{\rm TD}}}\}$. Under Assumption~\ref{assu: action-value function class} and~\ref{assu: state distribution regularity}, it holds that, with probability of $1-\delta$,
\begin{align}
    &\Vert\Tilde{Q}_{\omega_k}-\Tilde{Q}_{\pi_k}\Vert^2_{\nu_{\pi_k}} \nonumber\\
    =& \mathcal{O}(\Upsilon^{3}m^{-1/2}\log(1/\delta)+\Upsilon^{5/2}m^{-1/4}\sqrt{\log(1/\delta)}\nonumber \\
    &+\Upsilon r_{\max}^2m^{-1/4}+\Upsilon^2K_{{\rm TD}}^{-1/2}+\Upsilon), \label{eq:policy evaluation error}
\end{align} 
\end{lemma}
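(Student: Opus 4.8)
The plan is to decompose the policy evaluation error into three pieces: (i) the error from linearizing the two-layer network around its initialization, (ii) the statistical/optimization error of the TD iteration operating on the \emph{linearized} model, and (iii) the function-approximation error coming from Assumption~\ref{assu: action-value function class}. Concretely, I would write $\Tilde{Q}_{\omega_k}-\Tilde{Q}_{\pi_k} = (\Tilde{Q}_{\omega_k}-\widehat{Q}_{\omega_k}) + (\widehat{Q}_{\omega_k}-\Pi\Tilde{Q}_{\pi_k}) + (\Pi\Tilde{Q}_{\pi_k}-\Tilde{Q}_{\pi_k})$, where $\widehat{Q}_{\omega_k}$ denotes the output of the \emph{linearized} network $f_0 + \langle\phi_{\Theta_{\rm init}},\cdot\rangle$ with the same weights, and $\Pi$ is the projection onto the corresponding linear (RKHS-like) class $\mathcal{F}_{\Upsilon,\infty}$ in the $\nu_{\pi_k}$-norm. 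Using $(a+b+c)^2 \le 3(a^2+b^2+c^2)$, it suffices to bound each term.

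For term (i), I would invoke Lemma~\ref{lem:linearization error} (the network is uniformly close to its linearization on the ball of radius $\Upsilon$, with the usual $m^{-1/4}$ / $m^{-1/2}$ rates that appear on the right-hand side of~\eqref{eq:policy evaluation error}); this is where the $\Upsilon^{3}m^{-1/2}\log(1/\delta)$ and $\Upsilon^{5/2}m^{-1/4}\sqrt{\log(1/\delta)}$ terms enter, after a union bound / concentration over the random features giving the $\log(1/\delta)$ factors. For term (iii), Assumption~\ref{assu: action-value function class} gives $\Tilde{Q}_{\pi_k}\in\mathcal{F}_{\Upsilon,\infty}$, so the projection error is itself only of linearization order, again $\mathcal{O}(m^{-1/4})$ up to $\Upsilon$ factors, contributing the $\Upsilon r_{\max}^{2} m^{-1/4}$ piece (the $r_{\max}^2$ coming from the augmented reward $\tilde r = r-\lambda r^2 + 2\lambda r y$ and hence the magnitude of $\tilde Q_{\pi_k}$). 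For term (ii), I would treat the TD update as (projected) stochastic approximation for the linear Bellman fixed point: the Bellman operator is a $\gamma$-contraction in $\nu_{\pi_k}$, the features are bounded, and the stepsize choice $\eta_{\rm TD}=\min\{(1-\gamma)/3(1+\gamma)^2,\ 1/\sqrt{K_{\rm TD}}\}$ is exactly the one that balances the bias (contraction) and variance terms, yielding the $\Upsilon^{2}K_{\rm TD}^{-1/2}$ rate; Assumption~\ref{assu: state distribution regularity} is used to control the smoothness of the population TD objective through the indicator-expectation bound, and Lemma~\ref{lem:projection error} handles the discrepancy between the projected linear solution and $\Pi\tilde Q_{\pi_k}$. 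Collecting the three bounds and absorbing lower-order cross terms gives~\eqref{eq:policy evaluation error}, with the residual $\mathcal{O}(\Upsilon)$ term being the irreducible bias from projecting onto a radius-$\Upsilon$ ball.

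The main obstacle I anticipate is term (ii): making the stochastic-approximation argument rigorous for \emph{Markovian} (not i.i.d.) samples drawn along a trajectory, since the transitions $\{s_t,a_t,r_t,s'_t\}$ are correlated. One needs either a mixing/ergodicity argument or the information-theoretic trick of bounding the TD error via the drift of $\|\omega_k-\omega^*\|$ together with the $\gamma$-contraction, carefully tracking how the $\Upsilon$-dependence propagates through the bound on the feature norms and the reward magnitude. The coupling between this TD analysis and the linearization (the fixed point $\omega^*$ is defined for the linearized model, not the true network) is the delicate bookkeeping step; everything else is a fairly standard combination of the cited lemmas.
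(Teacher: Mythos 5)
Your proposal is correct and follows essentially the same route as the paper's proof: the paper also combines the linearization error lemma, the projection error lemma, and the variance bound on the stochastic semi-gradient with a drift analysis of $\Vert\theta_t-\theta_{\pi^*}\Vert_2^2$ for the projected TD iteration, using the $(1-\gamma)$-monotonicity of the linearized Bellman update and the same step-size balancing before telescoping over $K_{\rm TD}$ steps. The only cosmetic difference is that the paper performs the three-way decomposition at the level of parameters and semi-gradients rather than directly on the $Q$-functions, and it sidesteps your Markovian-sampling concern by taking all expectations under the stationary distribution $\nu_{\pi}$ and citing the variance lemma from prior work.
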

where $\Tilde{Q}_{\pi_k}$ is the Q-value function of the augmented MDP, and $\Tilde{Q}_{\omega_k}$ is its estimator at the $k$-th iteration.
We provide the proof and its supporting lemmas in Appendix~\ref{sec:proof of policy evaluation error}. In the following, we establish the error induced by the policy update. 
Eq.~\eqref{eq:volatility new MDP reward} can be re-expressed as
\begin{align}
    J_\lambda^y(\pi) &= \sum_{s, a} \sigma_\pi \big(r_{s,a} -\lambda r_{s,a}^2 + 2\lambda r_{s,a}{y_{k + 1}}\big) - \lambda y_{k+1}^2 \label{eq:mean-volitality subproblem}
\end{align}

It can be shown that $\forall \pi, \max_{y}J_\lambda^y (\pi) =   J_\lambda(\pi) $~\cite{xie2018block,zhang2020meanvariance}. We denote the optimal policy to the augmented MDP associated with $y^*$ by $\pi^*(y^*)$. By definition, it is obvious that $\pi^*$ and $\pi^*(y^*)$ are equivalent. For simplicity, we will use the unified term $\pi^*$ in the rest of the paper. We present Lemma~\ref{lem:performance difference} and ~\ref{lem:y objective}.


\begin{lemma}
\label{lem:performance difference}
(Policy's Performance Difference) For reward-volatility objective w.r.t. auxiliary variable $y$ as $J^y_\lambda (\pi)$ defined in Eq.~\eqref{eq:mean-volitality subproblem}. For any policy $\pi$ and $\pi'$, we have the following,
\begin{align}
    J^y_\lambda (\pi') - J^y_\lambda (\pi) =& (1-\gamma)^{-1}\mathbb{E}_{s \sim \nu_{\pi'}}\big[\mathbb{E}_{a \sim \pi'}[\Tilde{Q}_{\pi,y} ]\\
     &-\mathbb{E}_{a \sim \pi}[\Tilde{Q}_{\pi,y} ]\big],
    \label{eq:lem2}
\end{align}
where $\Tilde{Q}_{\pi,y} $ is the state-action value function of the augmented MDP, and its rewards are associated with $y$.
\end{lemma}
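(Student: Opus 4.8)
The plan is to recognize this as the classical performance difference lemma (Kakade–Langford) applied to the \emph{augmented} MDP whose reward is $\tilde r_{s,a} = r_{s,a} - \lambda r_{s,a}^2 + 2\lambda r_{s,a} y_{k+1}$, together with the bookkeeping of the constant term $-\lambda y_{k+1}^2$. The key observation is that from Eq.~\eqref{eq:mean-volitality subproblem}, $J^y_\lambda(\pi) = \tilde J(\pi) - \lambda y_{k+1}^2$, where $\tilde J(\pi) = \sum_{s,a}\sigma_\pi(s,a)\,\tilde r_{s,a} = (1-\gamma)^{-1}\mathbb{E}_{s\sim\nu_\pi}\mathbb{E}_{a\sim\pi}[\tilde r_{s,a}]$ is the ordinary (risk-neutral) expected discounted return of the augmented MDP. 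Since the additive constant $-\lambda y_{k+1}^2$ does not depend on the policy, it cancels in the difference $J^y_\lambda(\pi') - J^y_\lambda(\pi) = \tilde J(\pi') - \tilde J(\pi)$, so it suffices to prove the standard identity for $\tilde J$.

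First I would write $\tilde J(\pi') = \mathbb{E}\big[\sum_{t\ge 0}\gamma^t \tilde r_{s_t,a_t}\big]$ along trajectories generated by $\pi'$ from the initial distribution $\mu_0$, and insert the telescoping identity $\tilde r_{s_t,a_t} = \tilde Q_{\pi,y}(s_t,a_t) - \gamma\, \tilde V_{\pi,y}(s_{t+1}) + \big(\gamma\tilde V_{\pi,y}(s_{t+1}) - \tilde V_{\pi,y}(s_t)\big) + \tilde V_{\pi,y}(s_t) - \tilde Q_{\pi,y}(s_t,a_t) + \tilde r_{s_t,a_t}$; more cleanly, use the Bellman relation $\tilde Q_{\pi,y}(s_t,a_t) = \tilde r_{s_t,a_t} + \gamma\,\mathbb{E}[\tilde V_{\pi,y}(s_{t+1})]$ to substitute and telescope. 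This yields
\begin{align}
\tilde J(\pi') - \tilde J(\pi) &= \mathbb{E}_{\pi'}\Big[\sum_{t\ge 0}\gamma^t\big(\tilde Q_{\pi,y}(s_t,a_t) - \tilde V_{\pi,y}(s_t)\big)\Big],
\end{align}
the usual advantage-sum form, where $\tilde J(\pi) = \mathbb{E}_{s\sim\mu_0}[\tilde V_{\pi,y}(s)]$ is used to absorb the boundary term of the telescope. Then I would rewrite the discounted sum over trajectories as an expectation under the normalized discounted state-occupancy measure $\nu_{\pi'}$, giving the factor $(1-\gamma)^{-1}$, and note that $\mathbb{E}_{a\sim\pi}[\tilde Q_{\pi,y}(s,a)] = \tilde V_{\pi,y}(s)$ so that $\tilde Q_{\pi,y}(s_t,a_t)-\tilde V_{\pi,y}(s_t) = \mathbb{E}_{a\sim\pi'}[\tilde Q_{\pi,y}(s,a)] - \mathbb{E}_{a\sim\pi}[\tilde Q_{\pi,y}(s,a)]$ after taking the conditional expectation over $a_t\sim\pi'(\cdot|s_t)$. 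This produces exactly Eq.~\eqref{eq:lem2}.

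I do not expect a genuine obstacle here: the only things to be careful about are (i) confirming that the constant $-\lambda y_{k+1}^2$ truly cancels (it does, being policy-independent for fixed $y_{k+1}$), and (ii) making the discounting/occupancy-measure rewriting precise, i.e. that $\sum_{t\ge 0}\gamma^t \Pr_{\pi'}(s_t = s) = (1-\gamma)^{-1}\nu_{\pi'}(s)$ with the normalization convention used in the paper. Both are routine; the main ``work'' is simply invoking the augmented-MDP viewpoint so that the classical performance difference lemma applies verbatim.
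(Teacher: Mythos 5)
Your proposal is correct and follows essentially the same route as the paper: cancel the policy-independent $-\lambda y^2$ term to reduce to the risk-neutral objective $\tilde{J}$ of the augmented MDP, invoke the Kakade--Langford performance difference lemma, and rewrite the advantage $\Tilde{A}_\pi = \Tilde{Q}_\pi - \Tilde{V}_\pi$ using $\Tilde{V}_\pi = \mathbb{E}_{a\sim\pi}[\Tilde{Q}_\pi]$. The only cosmetic difference is that you sketch the telescoping proof of the performance difference identity, whereas the paper simply cites Lemma 6.1 of Kakade and Langford.
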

\begin{proof}
When $y$ is fixed, 
\begin{align}
    & J^y_\lambda (\pi') - J^y_\lambda (\pi) \nonumber\\
    =& \sum_{s, a} \sigma_{\pi'}\Tilde{r}_{s,a}  - \sum_{s, a} \sigma_\pi \Tilde{r}_{s,a} = \Tilde{J} (\pi')) - \Tilde{J} (\pi) \\\label{eq:lemma1a}
\end{align}
We then follow Lemma 6.1 in \cite{kakade2002approximate}:
\begin{align}
    \Tilde{J} (\pi') - \Tilde{J} (\pi) = (1-\gamma)^{-1}\mathbb{E}_{(s,a) \sim \sigma_{\pi'}}\left[\Tilde{A}_\pi\right] \label{eq:lemma1b}
\end{align}
where $\Tilde{A}_\pi = \Tilde{Q}_\pi - \Tilde{V}_\pi$ is the advantage function of policy $\pi$. Meanwhile, 
\begin{align}
    \mathbb{E}_{a \sim \pi'}[\Tilde{A}_\pi] &= \mathbb{E}_{a \sim \pi'}[\Tilde{Q}_\pi] - \Tilde{V}_\pi = \mathbb{E}_{a \sim \pi'}[\Tilde{Q}_\pi] - \mathbb{E}_{a \sim \pi}[\Tilde{Q}_\pi] \label{eq:lemma1c}
\end{align}
From Eq.~\eqref{eq:lemma1a}, Eq.~\eqref{eq:lemma1b} and Eq.~\eqref{eq:lemma1c}, we complete the proof.
\end{proof}
%
Lemma~\ref{lem:performance difference} is inspired by~\cite{kakade2002approximate} and adopted by most work on global convergence~\cite{agarwal2020theory, liu2019neural, xu2021crpo}. Next, we derive an upper bound for the error of the critic update in Line~\ref{line:y update} of Algorithm~\ref{alg:TOPS}:
\begin{lemma}
\label{lem:y objective}
($y$ Update Error) We characterize the error induced by the estimation of auxiliary variable y w.r.t the optimal value $y^*$ at $k$-th iteration as, 
$
     J^{y^*}_\lambda(\pi^*)-J^{\hat{y}_k}_\lambda(\pi^*) = \frac{2c_3 r_{\max}(1-\gamma)\lambda}{\sqrt{k}}, 
$
where $r_{\max}$ is the bound of the original reward, and $c_3$ is a constant error term.
\end{lemma}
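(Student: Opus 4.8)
The plan is to exploit the characterization $\max_y J_\lambda^y(\pi) = J_\lambda(\pi)$ together with the explicit quadratic dependence of $J_\lambda^y(\pi)$ on $y$ displayed in Eq.~\eqref{eq:mean-volitality subproblem}. First I would write out $J_\lambda^{y^*}(\pi^*) - J_\lambda^{\hat y_k}(\pi^*)$ using Eq.~\eqref{eq:mean-volitality subproblem}: since $\pi^*$ is held fixed, the occupancy measure $\sigma_{\pi^*}$ and the terms $r_{s,a} - \lambda r_{s,a}^2$ cancel, leaving only the terms that depend on $y$, namely
\begin{align}
J_\lambda^{y^*}(\pi^*) - J_\lambda^{\hat y_k}(\pi^*)
= 2\lambda (y^* - \hat y_k)\,\mathbb{E}_{(s,a)\sim\sigma_{\pi^*}}[r_{s,a}] - \lambda\big((y^*)^2 - \hat y_k^2\big).
\end{align}
Next I would use the first-order optimality condition for $y$. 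Differentiating $J_\lambda^y(\pi^*)$ in $y$ and setting it to zero gives $y^* = \mathbb{E}_{(s,a)\sim\sigma_{\pi^*}}[r_{s,a}] = (1-\gamma) J(\pi^*)$, which matches Step~1 of Algorithm~\ref{alg:compare}. Substituting this identity back collapses the right-hand side to a perfect square: $J_\lambda^{y^*}(\pi^*) - J_\lambda^{\hat y_k}(\pi^*) = \lambda (y^* - \hat y_k)^2$.

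It then remains to bound $(y^* - \hat y_k)^2$. By definition $\hat y_k = (1-\gamma) J(\pi_k)$ and $y^* = (1-\gamma) J(\pi^*)$, so $|y^* - \hat y_k| = (1-\gamma)\,|J(\pi^*) - J(\pi_k)|$. Applying Assumption~\ref{assu:estimation errors}, which furnishes $J(\pi^*) - J(\pi_k) \le c_3/\sqrt{k}$, yields $|y^* - \hat y_k| \le (1-\gamma) c_3/\sqrt{k}$. I would also need the crude bound $|y^* - \hat y_k| \le 2(1-\gamma) r_{\max}$, since each return lies in $[0, r_{\max}/(1-\gamma)]$ so each $(1-\gamma)J(\pi) \in [0, r_{\max}]$; hence $(y^* - \hat y_k)^2 \le |y^*-\hat y_k|\cdot 2(1-\gamma)r_{\max} \le 2 c_3 r_{\max} (1-\gamma)^2 / \sqrt{k}$. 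Multiplying by $\lambda$ gives the stated bound (absorbing the discrepancy of one power of $(1-\gamma)$ into the constant, or reading the statement's $(1-\gamma)$ as the product of the two factors one of which is bounded by $1$).

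The only real subtlety — the main obstacle — is being careful with the sign and the direction of the one-sided estimate in Assumption~\ref{assu:estimation errors}: the assumption controls $J(\pi^*) - J(\pi_k)$ from above, and since $\pi^*$ is optimal for $J_\lambda$ but not necessarily for $J$, one must confirm this quantity is the right one to bound and that it is nonnegative enough for the square to behave; in fact $J(\pi^*)\ge J(\pi_k)$ need not hold, so the clean route is to bound $|y^*-\hat y_k|$ by $\max\{c_3/\sqrt k,\ \text{trivial bound}\}\cdot(1-\gamma)$ and note the trivial bound $2(1-\gamma)r_{\max}$ always applies, giving the claimed $\mathcal{O}(1/\sqrt k)$ rate with the displayed constant. (This is precisely the step where \cite{zhong2020risksensitive} erred by flipping a sign, which is why the present paper routes around it via Assumption~\ref{assu:estimation errors}.)
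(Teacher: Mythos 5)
Your proposal is correct and follows essentially the same route as the paper: expand $J^{y^*}_\lambda(\pi^*)-J^{\hat y_k}_\lambda(\pi^*)$ via Eq.~\eqref{eq:mean-volitality subproblem}, use $y^*=(1-\gamma)J(\pi^*)=\sum_{s,a}\sigma_{\pi^*}r_{s,a}$ to collapse it to $\lambda(y^*-\hat y_k)^2=(1-\gamma)\lambda\langle y^*-\hat y_k,\,J(\pi^*)-J(\pi_k)\rangle$, then bound one factor by $2r_{\max}$ and the other by Assumption~\ref{assu:estimation errors}. The only slip is the crude bound: $(1-\gamma)J(\pi)\in[0,r_{\max}]$ gives $|y^*-\hat y_k|\le 2r_{\max}$ rather than $2(1-\gamma)r_{\max}$, which removes your spurious extra power of $(1-\gamma)$ and reproduces the stated constant exactly; your observation that the assumption is one-sided while the perfect-square form really needs $|J(\pi^*)-J(\pi_k)|$ is a legitimate point the paper also glosses over.
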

\begin{proof}
We start from the subproblem objective defined in Eq.~\eqref{eq:mean-volitality subproblem} with $y^*$ and $\hat{y}_k$:
\begin{align}
    &J^{y^*}_\lambda(\pi^*)-J^{\hat{y}_k}_\lambda(\pi^*) \nonumber \\
    =& \bigg(\sum_{s, a} \sigma_{\pi^*} \big(r_{s,a}-\lambda r^2_{s,a}+ 2\lambda r_{s,a}{y^*}\big) - \lambda y^*{}^2 \bigg)-\nonumber \\
    &  \bigg(\sum_{s, a} \sigma_{\pi^*} \big(r_{s,a}-\lambda r^2_{s,a}+ 2\lambda r_{s,a}{\hat{y}_k}\big) - \lambda \hat{y}_k^2\bigg)\nonumber \\
    =& 2\lambda\big(\sum_{s,a}\sigma_{\pi^*}r_{s,a}\big)(y^*-\hat{y}_k) - \lambda(y^*{}^2-\hat{y}_k^2) \nonumber \\
    =& \lambda\langle y^*-\hat{y}_k, 2(1-\gamma)J(\pi^*)-y^*-\hat{y}_k\rangle \\
    =& (1-\gamma)\lambda \langle y^*-\hat{y}_k, J(\pi^*)-\hat{J}(\pi_k)\rangle
\end{align}
where we obtain the final two equalities by the definition of $J_\pi$ and $y$. Because $r_{s,a}$ is upper-bounded by a constant $r_{\max}$, we have $|y^*-\hat{y}_k | \leq 2r_{\max}$. Under Assumption~\ref{assu:estimation errors} we have,
\begin{align}
     J^{y^*}_\lambda(\pi^*)-J^{\hat{y}_k}_\lambda(\pi^*) = \frac{2 c_3 r_{\max}(1-\gamma)\lambda}{\sqrt{k}}
\end{align}
Thus we finish the proof.
\end{proof}
From Lemma~\ref{lem:performance difference} and~\ref{lem:y objective}, we can also obtain the following Lemma.
\begin{lemma}
\label{lem:performance difference pi and y}
(Performance Difference on $\pi$ and $y$) For reward-volatility objective w.r.t. auxiliary variable $y$ as $J^y_\lambda (\pi)$ defined in Eq.~\eqref{eq:mean-volitality subproblem}. For any $\pi,y$ and the optimal $\pi*,y*$, we have the following,
\begin{align}
    J^{y^*}_\lambda (\pi^*) - J^y_\lambda (\pi) =& (1-\gamma)^{-1}\mathbb{E}_{s \sim \nu_{\pi^*}}\big[\mathbb{E}_{a \sim \pi^*}[\Tilde{Q}_{\pi,y} ]\\
     &-\mathbb{E}_{a \sim \pi}[\Tilde{Q}_{\pi,y} ]\big] + \frac{2 c_3 r_{\max}(1-\gamma)\lambda}{\sqrt{k}}.
\end{align}
where $\Tilde{Q}_{\pi,y} $ is the state-action value function of the augmented MDP, and its rewards are associated with $y$.
\end{lemma}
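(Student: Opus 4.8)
The statement is a direct consequence of combining Lemma~\ref{lem:performance difference} and Lemma~\ref{lem:y objective}, so the plan is to introduce an intermediate term and split the left-hand side into one piece that only changes the policy (at fixed auxiliary variable $y$) and one piece that only changes the auxiliary variable (at fixed policy $\pi^*$). Concretely, I would write
\begin{align}
    J^{y^*}_\lambda(\pi^*) - J^{y}_\lambda(\pi)
    = \big[J^{y^*}_\lambda(\pi^*) - J^{y}_\lambda(\pi^*)\big]
    + \big[J^{y}_\lambda(\pi^*) - J^{y}_\lambda(\pi)\big],
\end{align}
identifying $y$ with the iterate estimate $\hat{y}_k$ so that the rate term from Lemma~\ref{lem:y objective} is applicable.

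For the first bracket, Lemma~\ref{lem:y objective} (applied with $\hat{y}_k = y$) gives $J^{y^*}_\lambda(\pi^*) - J^{y}_\lambda(\pi^*) = \tfrac{2 c_3 r_{\max}(1-\gamma)\lambda}{\sqrt{k}}$, which is exactly the additive error term appearing in the claim. For the second bracket, I would invoke Lemma~\ref{lem:performance difference} with $\pi' = \pi^*$ and the auxiliary variable held fixed at $y$; this yields
\begin{align}
    J^{y}_\lambda(\pi^*) - J^{y}_\lambda(\pi)
    = (1-\gamma)^{-1}\,\mathbb{E}_{s \sim \nu_{\pi^*}}\big[\mathbb{E}_{a \sim \pi^*}[\Tilde{Q}_{\pi,y}] - \mathbb{E}_{a \sim \pi}[\Tilde{Q}_{\pi,y}]\big],
\end{align}
where $\Tilde{Q}_{\pi,y}$ is the augmented-MDP state–action value function whose reward uses the fixed $y$ — the same object that appears in the target identity. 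Adding the two contributions reproduces the stated equality.

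There is essentially no hard step here; the only things to be careful about are bookkeeping points. First, one must make sure the decomposition routes through $J^{y}_\lambda(\pi^*)$ rather than $J^{y^*}_\lambda(\pi)$, since Lemma~\ref{lem:y objective} is stated for the \emph{same} policy $\pi^*$ evaluated at the two auxiliary values, and Lemma~\ref{lem:performance difference} must be used at a \emph{common} auxiliary value so that the advantage-function telescoping argument underlying it goes through. Second, one should confirm that the $\Tilde{Q}_{\pi,y}$ produced by Lemma~\ref{lem:performance difference} (evaluated at the fixed $y$, i.e.\ $\hat{y}_k$) is consistent with the notation in the lemma statement, which it is. Hence the proof is just the two-term splitting followed by substitution of the two earlier lemmas.
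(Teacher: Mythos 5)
Your proposal is correct and matches the paper's proof exactly: the same splitting through the intermediate term $J^{y}_\lambda(\pi^*)$, with Lemma~\ref{lem:y objective} applied to the auxiliary-variable difference and Lemma~\ref{lem:performance difference} (with $\pi'=\pi^*$ at fixed $y$) applied to the policy difference. Your bookkeeping remarks about why the decomposition must route through $J^{y}_\lambda(\pi^*)$ are a useful clarification but do not change the argument.
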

\begin{proof}
It is easy to see that $J^{y^*}_\lambda (\pi^*) - J^y_\lambda (\pi) = J^{y^*}_\lambda (\pi^*) - J^y_\lambda (\pi^*) + J^y_\lambda (\pi^*) - J^{y}_\lambda (\pi)$. Then replace $J^y_\lambda (\pi^*) - J^{y}_\lambda (\pi)$ with Lemma~\ref{lem:performance difference} and $J^{y^*}_\lambda (\pi^*) - J^y_\lambda (\pi^*)$ with Lemma~\ref{lem:y objective}, we finish the proof.
\end{proof}
Lemma~\ref{lem:performance difference pi and y} quantifies the performance difference of $J^{y}_\lambda (\pi)$ between any pair $\pi,y$ and the optimal $\pi*,y*$, while Lemma~\ref{lem:performance difference} only quantifies the performance difference of $J^{y}_\lambda (\pi)$ between $\pi$ and $\pi'$ when $y$ is fixed.

We now study the global convergence of STOPS with neural PPO as the policy update component. First, we define the neural PPO update rule.
\begin{lemma}
\label{lem: neural ppo update}
 \cite{liu2019neural}.Let $\pi_{\theta_k} \varpropto \exp\{\tau^{-1}_k f_{\theta_k}\}$ be an energy-based policy. We define the update 
 $$\hat{\pi}_{k+1} = \arg{\max_\pi}\mathbb{E}_{s\sim\nu_k}[\mathbb{E}_{\pi}[Q_{\omega_k}] - \beta_k \textrm{KL}(\pi_\theta\Vert\pi_{\theta_k})]$$,
 where $Q_{\omega_k}$ is the estimator of the exact action-value function $Q^{\pi_{\theta_k}}$. We have
 \begin{align}
     \hat{\pi}_{k+1} \varpropto \exp\{\beta^{-1}_k Q_{\omega_k} + \tau^{-1}_k f_{\theta_k}\}
 \end{align}
 And to represent $\hat{\pi}_{k+1}$ with $\pi_{\theta_{k+1}} \varpropto \exp\{\tau^{-1}_{k+1} f_{\theta_{k+1}}\}$, we solve the following subproblem,
 \begin{align}
     \theta_{k+1} =& \arg{\min_{\theta\in\mathbb{D}}}\mathbb{E}_{(s,a)\sim\sigma_k}[(f_\theta(s,a)- \tau_{k+1}(\beta^{-1}_k Q_{\omega_k}(s,a)\nonumber \\
     &+ \tau^{-1}_k f_{\theta_k}(s,a)))^2]
 \end{align}
\end{lemma}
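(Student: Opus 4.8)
The plan is to prove the two assertions of Lemma~\ref{lem: neural ppo update} separately, both by elementary convex-analysis arguments, since this is essentially a restatement of the mirror-descent/NPG update from \cite{liu2019neural}. For the closed-form expression of $\hat{\pi}_{k+1}$, I would first note that the objective $\mathbb{E}_{s\sim\nu_k}[\mathbb{E}_{\pi}[Q_{\omega_k}] - \beta_k\mathrm{KL}(\pi_\theta\Vert\pi_{\theta_k})]$ decouples over states: it equals $\sum_s \nu_k(s)\big(\sum_a \pi(a|s)Q_{\omega_k}(s,a) - \beta_k\sum_a \pi(a|s)\log\frac{\pi(a|s)}{\pi_{\theta_k}(a|s)}\big)$, and the only constraints linking the numbers $\{\pi(a|s)\}_a$ are the per-state normalization $\sum_a\pi(a|s)=1$ and nonnegativity. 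Hence it suffices to maximize, for each fixed $s$ with $\nu_k(s)>0$, the functional $h(p):=\sum_a p(a)Q_{\omega_k}(s,a) - \beta_k\sum_a p(a)\log\frac{p(a)}{\pi_{\theta_k}(a|s)}$ over the probability simplex; on states outside $\mathrm{supp}(\nu_k)$ the policy is unconstrained and can be assigned the same form without loss.

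Next I would solve this pointwise problem via the Gibbs variational principle, or equivalently by writing the Lagrangian with a multiplier for the normalization constraint and setting the gradient to zero. The term $-\beta_k\sum_a p(a)\log\frac{p(a)}{\pi_{\theta_k}(a|s)}$ is strictly concave in $p$ and the remaining term is linear, so $h$ has a unique maximizer, and the stationarity condition $Q_{\omega_k}(s,a)-\beta_k(\log\frac{p(a)}{\pi_{\theta_k}(a|s)}+1)-\mu(s)=0$ gives $p^*(a)\propto\pi_{\theta_k}(a|s)\exp\{\beta_k^{-1}Q_{\omega_k}(s,a)\}$, which is automatically positive so the nonnegativity constraint is inactive. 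Substituting the energy-based form $\pi_{\theta_k}(a|s)\propto\exp\{\tau_k^{-1}f_{\theta_k}(s,a)\}$ and absorbing all $a$-independent factors into the normalizing constant yields $\hat{\pi}_{k+1}(a|s)\propto\exp\{\beta_k^{-1}Q_{\omega_k}(s,a)+\tau_k^{-1}f_{\theta_k}(s,a)\}$, the first claim.

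For the representation step, I would observe that two energy-based policies agree if and only if their energies differ by a function of $s$ alone; so exactly representing $\hat{\pi}_{k+1}$ by $\pi_{\theta_{k+1}}\propto\exp\{\tau_{k+1}^{-1}f_{\theta_{k+1}}\}$ would require $f_{\theta_{k+1}}=\tau_{k+1}(\beta_k^{-1}Q_{\omega_k}+\tau_k^{-1}f_{\theta_k})$ up to a state-only shift that is irrelevant to the induced policy. Since the over-parameterized network class $\mathbb{D}$ cannot in general represent this target exactly, we project onto it in the $L^2(\sigma_k)$ sense, which gives precisely the least-squares subproblem $\theta_{k+1}=\arg\min_{\theta\in\mathbb{D}}\mathbb{E}_{(s,a)\sim\sigma_k}[(f_\theta(s,a)-\tau_{k+1}(\beta_k^{-1}Q_{\omega_k}(s,a)+\tau_k^{-1}f_{\theta_k}(s,a)))^2]$ stated in the lemma. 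The only mildly delicate points are handling states outside $\mathrm{supp}(\nu_k)$ and keeping track that the KL is taken with $\pi_\theta$ as its first argument (so the tilt is multiplicative on $\pi_{\theta_k}$); neither is a genuine obstacle, and the approximation error introduced by the projection is exactly what the downstream lemmas such as Lemma~\ref{lem: error propagation} are designed to control.
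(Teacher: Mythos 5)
Your proposal is correct. Note that the paper itself does not prove this lemma --- it is imported verbatim from \cite{liu2019neural} with the proof deferred to that reference --- so there is no in-paper argument to compare against; your per-state Lagrangian/Gibbs-variational derivation of the exponential tilt $\hat{\pi}_{k+1}\varpropto \pi_{\theta_k}\exp\{\beta_k^{-1}Q_{\omega_k}\}\varpropto\exp\{\beta_k^{-1}Q_{\omega_k}+\tau_k^{-1}f_{\theta_k}\}$ is exactly the standard proof used there, and you correctly handle the KL direction and the strict concavity that guarantees an interior maximizer. You are also right to observe that the second displayed equation is not really a claim to be proved but the definition of the representation step as an $L^2(\sigma_k)$ projection of the target energy $\tau_{k+1}(\beta_k^{-1}Q_{\omega_k}+\tau_k^{-1}f_{\theta_k})$ onto the network class, whose residual is what Lemma~\ref{lem: error propagation} subsequently controls.
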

We analyze the policy improvement error in Line~\ref{line:ppo update} of Algorithm~\ref{alg:TOPS}.
\cite{liu2019neural} proves that the policy improvement error can be characterized similarly to the policy evaluation error as in Eq.~\eqref{eq:policy evaluation error}. Recall $\Tilde{Q}_{\omega_k}$ is the estimator of Q-value, $f_{\theta_k}$ the energy function for policy, and $f_{\hat{\theta}}$ its estimator. We characterize the policy improvement error as follows: Under Assumptions~\ref{assu: action-value function class} and~\ref{assu: state distribution regularity}, we set the learning rate of PPO $\eta_{{\rm PPO}}=\min\{(1-\gamma)/3(1+\gamma)^2 1/\sqrt{K_{{\rm TD}}}\}$, and with a probability of $1-\delta$:
\begin{align}
    &\Vert(f_{\hat{\theta}} - \tau_{k+1}(\beta^{-1} \Tilde{Q}_{\omega_k} +\tau^{-1}_k f_{\theta_k})\Vert^2_{} \nonumber\\
    =& \mathcal{O}(\Upsilon^{3}m^{-1/2}\log(1/\delta)+\Upsilon^{5/2}m^{-1/4}\sqrt{\log(1/\delta)}\nonumber \\
    &+\Upsilon r_{\max}^2m^{-1/4}+\Upsilon^2K_{{\rm TD}}^{-1/2}+\Upsilon). \label{eq:policy improvement error}
\end{align} 
We quantify how the errors propagate in neural PPO~\cite{liu2019neural} in the following.
\begin{lemma}
\label{lem: error propagation}
\cite{liu2019neural}.(Error Propagation) 
We have,
\begin{align}
    &\big|\mathbb{E}_{s \sim \nu_{\pi^*}}\big[\mathbb{E}_{a \sim \pi^*}[\log(\pi_{\theta_{k+1}}/\pi_{k+1}]-\mathbb{E}_{a \sim \pi_{\theta_k}} \nonumber \\
    &[\log(\pi_{\theta_{k+1}}/\pi_{k+1}]\big]\big|\leq \tau^{-1}_{k+1}\varepsilon''_{k}\varphi^*_{k+1} + \beta^{-1}\varepsilon''_{k}\psi^*_{k}\\ \label{eq:error propagation}
\end{align}
\end{lemma}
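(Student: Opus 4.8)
The plan is to adapt the error-propagation argument of \cite{liu2019neural} to the augmented MDP, so that $\Tilde{Q}$ plays the role of the action-value function throughout. First I would write out $\log(\pi_{\theta_{k+1}}/\pi_{k+1})$ explicitly. By Lemma~\ref{lem: neural ppo update}, $\pi_{\theta_{k+1}}\varpropto\exp\{\tau^{-1}_{k+1}f_{\theta_{k+1}}\}$ is the energy-based policy produced by the regression subproblem, while $\pi_{k+1}\varpropto\exp\{\beta^{-1}\Tilde{Q}_{\pi_k}+\tau^{-1}_k f_{\theta_k}\}$ is the idealized improved policy built from the \emph{true} augmented action-value function. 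Adding and subtracting $\beta^{-1}\Tilde{Q}_{\omega_k}$, this gives
\begin{align}
    \log\frac{\pi_{\theta_{k+1}}(a\mid s)}{\pi_{k+1}(a\mid s)} = \tau^{-1}_{k+1}\Big(f_{\theta_{k+1}}-\tau_{k+1}\big(\beta^{-1}\Tilde{Q}_{\omega_k}+\tau^{-1}_k f_{\theta_k}\big)\Big) + \beta^{-1}\big(\Tilde{Q}_{\omega_k}-\Tilde{Q}_{\pi_k}\big) + C(s),
\end{align}
where $C(s)$ collects the two log-normalization constants, which depend on $s$ only.

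Since $C(s)$ cancels in the difference $\mathbb{E}_{a\sim\pi^*}[\cdot]-\mathbb{E}_{a\sim\pi_{\theta_k}}[\cdot]$, the quantity on the left-hand side of the claim equals
\begin{align}
    \Big|\tau^{-1}_{k+1}\,\mathbb{E}_{s\sim\nu_{\pi^*}}\!\big[\langle h_1^{(k)},\pi^*-\pi_{\theta_k}\rangle\big] + \beta^{-1}\,\mathbb{E}_{s\sim\nu_{\pi^*}}\!\big[\langle h_2^{(k)},\pi^*-\pi_{\theta_k}\rangle\big]\Big|,
\end{align}
with $h_1^{(k)}=f_{\theta_{k+1}}-\tau_{k+1}(\beta^{-1}\Tilde{Q}_{\omega_k}+\tau^{-1}_k f_{\theta_k})$ the PPO regression residual and $h_2^{(k)}=\Tilde{Q}_{\omega_k}-\Tilde{Q}_{\pi_k}$ the critic error. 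I would then bound each summand by a change-of-measure step followed by Cauchy--Schwarz: rewriting $\langle h,\pi^*-\pi_{\theta_k}\rangle$, after integration over $s\sim\nu_{\pi^*}$, as an expectation over the on-policy occupancy $\sigma_{\pi_k}$ multiplied by the appropriate density-ratio difference, one obtains a bound of the form $(\text{concentrability coefficient})\cdot\|h\|_{\sigma_{\pi_k}}$. For $h_2^{(k)}$ the density-ratio difference is $\tfrac{d\sigma_{\pi^*}}{d\sigma_{\pi_k}}-\tfrac{d\nu_{\pi^*}}{d\nu_{\pi_k}}$, which yields the coefficient $\psi^*_k$; for $h_1^{(k)}$, re-expressing the signed measure $\pi^*-\pi_{\theta_k}$ relative to the initial policy $\pi_0$ yields the coefficient $\varphi^*_{k+1}$. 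Both coefficients are further bounded by $c_0$.

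Finally, I would invoke the policy-improvement bound~\eqref{eq:policy improvement error} to get $\|h_1^{(k)}\|_{\sigma_{\pi_k}}\le\varepsilon''_k$ and Lemma~\ref{lem: policy evaluation error} to get $\|h_2^{(k)}\|_{\sigma_{\pi_k}}\le\varepsilon''_k$ (the two share the common $\mathcal{O}(\cdot)$ bound, which we abbreviate by $\varepsilon''_k$), and combine the two pieces to obtain the stated bound $\tau^{-1}_{k+1}\varepsilon''_k\varphi^*_{k+1}+\beta^{-1}\varepsilon''_k\psi^*_k$. The main obstacle is the change-of-measure bookkeeping of the previous paragraph — in particular, reconciling the measure in which each $L^2$ error is controlled (namely $\sigma_{\pi_k}$, the distribution used in the TD and PPO subproblems) with the off-policy measure $\nu_{\pi^*}$ paired with the signed measure $\pi^*-\pi_{\theta_k}$ against which the residuals are integrated, and verifying that this produces exactly the two concentrability coefficients $\varphi^*_{k+1}$ and $\psi^*_k$ as defined. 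Everything else is routine and mirrors the corresponding step in \cite{liu2019neural}, with $\Tilde{Q}$ in place of $Q$.
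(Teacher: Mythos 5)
Your proposal is correct and follows essentially the same route as the proof this lemma defers to in \cite{liu2019neural}: decompose $\log(\pi_{\theta_{k+1}}/\pi_{k+1})$ into the PPO regression residual plus $\beta^{-1}(\Tilde{Q}_{\omega_k}-\Tilde{Q}_{\pi_k})$ plus an $s$-dependent normalizer that cancels under the difference of action distributions, then apply Cauchy--Schwarz with the density-ratio differences to produce $\varphi^*$ and $\psi^*$ and control the two residuals by Eq.~\eqref{eq:policy improvement error} and Lemma~\ref{lem: policy evaluation error}. The only point to tidy up is the index on the first concentrability coefficient (your signed measure $\pi^*-\pi_{\theta_k}$ most naturally yields $\varphi^*_{k}$ rather than the stated $\varphi^*_{k+1}$), which you already flag as the bookkeeping to verify.
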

$\varepsilon''_{k}$ are defined in Eq.\eqref{eq:policy evaluation error} as well as Eq.\eqref{eq:policy improvement error}. $\varphi^*_{k} = \mathbb{E}_{(s,a) \sim \sigma_\pi}\bigg[\big(\frac{d\pi^*}{d\pi_0}-\frac{d\pi_{\theta_k}}{d\pi_0}\big)^2\bigg]^{1/2}, \psi^*_{k} = \mathbb{E}_{(s,a) \sim \sigma_\pi}\bigg[\big(\frac{d\sigma_{\pi^*}}{d\sigma_\pi}-\frac{d\nu_{\pi^*}}{d\nu_\pi}\big)^2\bigg]^{1/2}$.
$\frac{d\pi^*}{d\pi_0},\frac{d\pi_{\theta_k}}{d\pi_0},\frac{d\sigma_{\pi^*}}{d\sigma_\pi},\frac{d\nu_{\pi^*}}{d\nu_\pi}$ are the Radon-Nikodym derivatives~\cite{Konstantopoulos2011}. 
We denote RHS in Eq.~\eqref{eq:error propagation} by $\varepsilon_k = \tau^{-1}_{k+1}\varepsilon''_{k}\varphi^*_{k+1} + \beta^{-1}\varepsilon''_{k}\psi^*_{k}$.
Lemma~\ref{lem: error propagation} essentially quantifies the error from which we use the two-layer neural network to approximate the action-value function and policy instead of having access to the exact ones. Please refer to \cite{liu2019neural} for complete proofs of Lemma~\ref{lem: neural ppo update} and ~\ref{lem: error propagation}.
\begin{align}
    &\big|\mathbb{E}_{s \sim \nu_{\pi^*}}\big[\mathbb{E}_{a \sim \pi^*}[\log(\pi_{\theta_{k+1}}/\pi_{k+1}] - \nonumber \\
    &\mathbb{E}_{a \sim \pi_{\theta_k}}[\log(\pi_{\theta_{k+1}}/\pi_{k+1}]\big]\big| \leq \tau^{-1}_{k+1}\varepsilon''_{k}\varphi^*_{k+1} + \beta^{-1}\varepsilon''_{k}\psi^*_{k} 
\end{align}

We then characterize the difference between energy functions in each step~\cite{liu2019neural}. Under the optimal policy $\pi*$,
\begin{lemma}
\label{lem:stepwise energy difference}
\cite{liu2019neural}.(Stepwise Energy Function difference) Under the same condition of Lemma~\ref{lem: error propagation}, we have
\begin{align}
    \mathbb{E}_{s \sim \nu_{\pi^*}}[\Vert\tau^{-1}_{k+1} f_{\theta_{k+1}}-\tau^{-1}_{k} f_{\theta_{k}}\Vert^2_\infty]
    \leq 2\varepsilon'_k+2\beta^{-2}_k U, \\\label{eq:stepwise energy difference}
\end{align}
where $\varepsilon'_k = |\mathcal{A}|\tau^{-2}_{k+1}\epsilon^2_{k+1}$ \\
and $U = 2\mathbb{E}_{s \sim \nu_{\pi^*}}[\max_{a\in\mathcal{A}}(\Tilde{Q}_{\omega_{0}})^2] + 2\Upsilon^2$. 
\end{lemma}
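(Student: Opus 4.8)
The plan is to follow the neural-PPO argument of \cite{liu2019neural}: split the stepwise change of the scaled energy function into the policy-improvement approximation error and the raw update increment $\beta_k^{-1}\tilde{Q}_{\omega_k}$, then control the two pieces with Eq.~\eqref{eq:policy improvement error} and an elementary boundedness estimate, respectively. By the update rule of Lemma~\ref{lem: neural ppo update}, $f_{\theta_{k+1}}$ is the least-squares fit of $\tau_{k+1}(\beta_k^{-1}\tilde{Q}_{\omega_k}+\tau_k^{-1}f_{\theta_k})$ over $(s,a)\sim\sigma_k$, so I would write
\[
\tau_{k+1}^{-1}f_{\theta_{k+1}}-\tau_k^{-1}f_{\theta_k}
=\underbrace{\tau_{k+1}^{-1}\big(f_{\theta_{k+1}}-\tau_{k+1}(\beta_k^{-1}\tilde{Q}_{\omega_k}+\tau_k^{-1}f_{\theta_k})\big)}_{(\mathrm{I})}
+\underbrace{\beta_k^{-1}\tilde{Q}_{\omega_k}}_{(\mathrm{II})},
\]
apply $\Vert x+y\Vert_\infty^2\le 2\Vert x\Vert_\infty^2+2\Vert y\Vert_\infty^2$ pointwise in $s$ (the norm being over $a$), and take $\mathbb{E}_{s\sim\nu_{\pi^*}}$. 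This reduces the lemma to the two estimates $\mathbb{E}_{s\sim\nu_{\pi^*}}[\Vert(\mathrm{I})\Vert_\infty^2]\le\varepsilon'_k$ and $\mathbb{E}_{s\sim\nu_{\pi^*}}[\Vert(\mathrm{II})\Vert_\infty^2]\le\beta_k^{-2}U$.

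For term $(\mathrm{I})$, I would use that $\mathcal{A}$ is finite: for any $g$, $\max_a g(s,a)^2\le\sum_a g(s,a)^2=|\mathcal{A}|\,\mathbb{E}_{a\sim\mathrm{Unif}(\mathcal{A})}[g(s,a)^2]$. Taking $g=f_{\theta_{k+1}}-\tau_{k+1}(\beta_k^{-1}\tilde{Q}_{\omega_k}+\tau_k^{-1}f_{\theta_k})$, pulling the scalar $\tau_{k+1}^{-2}$ out of $(\mathrm{I})$, and then changing measure from $\nu_{\pi^*}\otimes\mathrm{Unif}(\mathcal{A})$ to the on-policy distribution $\sigma_k$ (whose concentrability is controlled under Assumptions~\ref{assu: concentrability regularity} and~\ref{assu: state distribution regularity}), the $L^2$ policy-improvement bound Eq.~\eqref{eq:policy improvement error} gives $\mathbb{E}_{s\sim\nu_{\pi^*}}[\Vert(\mathrm{I})\Vert_\infty^2]\le|\mathcal{A}|\tau_{k+1}^{-2}\epsilon_{k+1}^2=\varepsilon'_k$, where $\epsilon_{k+1}^2$ is the right-hand side of Eq.~\eqref{eq:policy improvement error}.

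For term $(\mathrm{II})$, I would decompose $\tilde{Q}_{\omega_k}=\tilde{Q}_{\omega_0}+(\tilde{Q}_{\omega_k}-\tilde{Q}_{\omega_0})$. Since the critic is the over-parameterized network whose parameter $\omega_k$ stays in the ball $\mathcal{D}$ of radius $\Upsilon$, the linearization estimate of Lemma~\ref{lem:linearization error} together with the feature normalization $\Vert\phi_{\omega_0}(s,a)\Vert_2\le1$ yields $\Vert\tilde{Q}_{\omega_k}-\tilde{Q}_{\omega_0}\Vert_\infty\le\Upsilon$ uniformly in $(s,a)$ (absorbing the higher-order linearization remainder). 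Hence $\max_a\tilde{Q}_{\omega_k}(s,a)^2\le2\max_a\tilde{Q}_{\omega_0}(s,a)^2+2\Upsilon^2$; taking $\mathbb{E}_{s\sim\nu_{\pi^*}}$ and multiplying by $\beta_k^{-2}$ gives $\mathbb{E}_{s\sim\nu_{\pi^*}}[\Vert(\mathrm{II})\Vert_\infty^2]\le\beta_k^{-2}U$ with $U=2\mathbb{E}_{s\sim\nu_{\pi^*}}[\max_a\tilde{Q}_{\omega_0}(s,a)^2]+2\Upsilon^2$. Combining the two pieces through the factor-$2$ inequality then gives the claim.

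The step I expect to be the main obstacle is the change of measure in $(\mathrm{I})$: Eq.~\eqref{eq:policy improvement error} is an $L^2$ guarantee under the on-policy sampling distribution $\sigma_k$, whereas the lemma requires an $\ell^\infty$-in-action, $\nu_{\pi^*}$-in-state control. Bridging the gap needs simultaneously the finite-action blow-up factor $|\mathcal{A}|$ and a concentrability argument tying $\nu_{\pi^*}$ (and the uniform action measure) to $\sigma_k$; keeping track of precisely which distribution $\epsilon_{k+1}$ is measured against, and checking that the concentrability coefficients invoked are the uniformly bounded ones from Assumption~\ref{assu: concentrability regularity}, is where the care is needed. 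Term $(\mathrm{II})$ is routine once the linearization and parameter-ball facts are in hand.
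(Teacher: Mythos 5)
Your proposal is correct and follows essentially the same route as the paper: the identical decomposition $\tau_{k+1}^{-1}f_{\theta_{k+1}}-\tau_k^{-1}f_{\theta_k}=(\tau_{k+1}^{-1}f_{\theta_{k+1}}-\tau_k^{-1}f_{\theta_k}-\beta_k^{-1}\Tilde{Q}_{\omega_k})+\beta_k^{-1}\Tilde{Q}_{\omega_k}$, the factor-$2$ inequality, the $|\mathcal{A}|$ blow-up to pass from the $L^2$ policy-improvement error to the $\ell^\infty$-in-action bound, and the Lipschitz-in-$\omega$ plus parameter-ball argument for the $\Tilde{Q}_{\omega_k}$ term. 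Your explicit flagging of the $\sigma_k$-versus-$\nu_{\pi^*}$ change of measure is a point the paper silently absorbs into $\epsilon_{k+1}$, but it does not alter the argument.
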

\begin{proof}
By the triangle inequality, we get the following,
\begin{align}
    &\Vert\tau^{-1}_{k+1} f_{\theta_{k+1}}-\tau^{-1}_{k} f_{\theta_{k}}\Vert^2_\infty \\
    \leq& 2\big( \Vert\tau^{-1}_{k+1} f_{\theta_{k+1}}-\tau^{-1}_{k} f_{\theta_{k}}-\beta^{-1}\Tilde{Q}_{\omega_k}\Vert^2_\infty + \Vert\beta^{-1}\Tilde{Q}_{\omega_k}\Vert^2_\infty\big) \label{eq:stepwise energy 1}
\end{align}
We take the expectation of both sides of Eq.~\eqref{eq:stepwise energy 1} with respect to $s\sim \nu_{\pi^*}$. With the 1-Lipshitz continuity of $\Tilde{Q}_{\omega_k}$ in $\omega$ and $\Vert\omega_k-\Theta_{\rm init}\Vert_2 \leq \Upsilon$, we have,
\begin{align}
    &\mathbb{E}_{\nu_{\pi^*}}\big[\Vert\tau^{-1}_{k+1} f_{\theta_{k+1}}-\tau^{-1}_{k} f_{\theta_{k}}\Vert^2_\infty\big] \\
    \leq& 2(|\mathcal{A}|\tau^{-2}_{k+1}\epsilon^2_{k+1} + \mathbb{E}_{s \sim \nu_{\pi^*}}[\max_{a\in\mathcal{A}}(\Tilde{Q}_{\omega_{0}})^2] + \Upsilon^2)
\end{align}
Thus complete the proof.
\end{proof}
We then derive a difference term associated with $\pi_{k+1}$ and $\pi_{\theta_k}$,
where at the $k$-th iteration $\pi_{k+1}$ is the solution for the following subproblem,
\begin{align}
    \pi_{k+1}=\arg{\max_\pi}\Big(\mathbb{E}_{s \sim \nu_{\pi_k}}\big[\mathbb{E}_{a \sim \pi}[\Tilde{Q}_{\pi_k,\hat{y}_k}]-\beta \text{KL}(\pi\Vert\pi_{\theta_k})\big]\Big)
\end{align} 
and $\pi_{\theta_k}$ is the policy parameterized by the two-layered over-parameterized neural network.
The following lemma establishes the one-step descent of the KL-divergence in the policy space: 
\begin{lemma}
\label{lem:ppo one step pi}
(One-step difference of $\pi$) For $\pi_{k+1}$ and $\pi_{\theta_k}$, we have
\begin{align}
    & \text{KL}(\pi^*\Vert\pi_{\theta_{k}})-\text{KL}(\pi^*\Vert\pi_{\theta_{k+1}})\nonumber\\
    \geq & \big(\mathbb{E}_{a \sim \pi^*}[\log(\frac{\pi_{\theta_{k+1}}}{\pi_{k+1}})]- \mathbb{E}_{a \sim \pi_{\theta_{k}}}[\log(\frac{\pi_{\theta_{k+1}}}{\pi_{k+1}})]\big)\nonumber \\
    &+ \beta^{-1}\big(\mathbb{E}_{a \sim \pi^*}[\Tilde{Q}_{\pi_k,\hat{y}_k}]-\mathbb{E}_{a \sim \pi_{\theta_{k}}}[\Tilde{Q}_{\pi_k,\hat{y}_k}]\big)\nonumber \\
    &+ \frac{1}{2}\Vert\pi_{\theta_{k+1}} - \pi_{\theta_{k}}\Vert^2_1 + \big(\mathbb{E}_{a \sim \pi_{\theta_{k}}}[\tau^{-1}_{k+1} f_{\theta_{k+1}}-\tau^{-1}_{k} f_{\theta_{k}}] \nonumber \\
    &-\mathbb{E}_{a \sim \pi_{\theta_{k+1}}}[\tau^{-1}_{k+1} f_{\theta_{k+1}}-\tau^{-1}_{k} f_{\theta_{k}}]\big)\label{eq:one step pi result}
\end{align}
\end{lemma}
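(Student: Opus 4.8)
The plan is to expand the left-hand side $\mathrm{KL}(\pi^*\Vert\pi_{\theta_k}) - \mathrm{KL}(\pi^*\Vert\pi_{\theta_{k+1}})$ directly from the definition of KL divergence as $\mathbb{E}_{a\sim\pi^*}[\log(\pi^*/\pi_{\theta_k})] - \mathbb{E}_{a\sim\pi^*}[\log(\pi^*/\pi_{\theta_{k+1}})] = \mathbb{E}_{a\sim\pi^*}[\log(\pi_{\theta_{k+1}}/\pi_{\theta_k})]$. The key idea is to insert the ideal one-step policy $\pi_{k+1}$ as an intermediate reference, writing $\log(\pi_{\theta_{k+1}}/\pi_{\theta_k}) = \log(\pi_{\theta_{k+1}}/\pi_{k+1}) + \log(\pi_{k+1}/\pi_{\theta_k})$, so that the first piece matches the energy-approximation term appearing on the right-hand side and the second piece can be controlled using the closed form of $\pi_{k+1}$.

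First I would use that, by the characterization of the PPO subproblem (the analogue of Lemma~\ref{lem: neural ppo update} applied to the augmented MDP with reward associated with $\hat y_k$), the ideal update satisfies $\pi_{k+1} \varpropto \exp\{\beta^{-1}\Tilde{Q}_{\pi_k,\hat y_k} + \tau_k^{-1}f_{\theta_k}\}$, hence $\log(\pi_{k+1}/\pi_{\theta_k}) = \beta^{-1}\Tilde{Q}_{\pi_k,\hat y_k} + (\tau_k^{-1}f_{\theta_k} - \tau_{k+1}^{-1}f_{\theta_{k+1}}) + \log(\pi_{\theta_{k+1}}/\pi_{\theta_k}) - \log Z$ for the appropriate normalizer $Z$; rearranging gives an exact identity for $\log(\pi_{\theta_{k+1}}/\pi_{\theta_k})$ in terms of $\beta^{-1}\Tilde{Q}_{\pi_k,\hat y_k}$, the energy-difference term $\tau_{k+1}^{-1}f_{\theta_{k+1}} - \tau_k^{-1}f_{\theta_k}$, the approximation term $\log(\pi_{\theta_{k+1}}/\pi_{k+1})$, and a constant. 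Taking $\mathbb{E}_{a\sim\pi^*}$ and then subtracting and adding $\mathbb{E}_{a\sim\pi_{\theta_k}}$ of the same quantities makes the constant/normalizer cancel and produces exactly the four bracketed differences on the right-hand side — except for the energy-difference term, where I would take $\mathbb{E}_{a\sim\pi_{\theta_k}} - \mathbb{E}_{a\sim\pi_{\theta_{k+1}}}$ rather than $\mathbb{E}_{a\sim\pi^*} - \mathbb{E}_{a\sim\pi_{\theta_k}}$; the discrepancy between these two choices must be absorbed. The remaining $\mathbb{E}_{a\sim\pi_{\theta_k}}$-vs-itself terms reorganize into $\mathrm{KL}(\pi_{\theta_k}\Vert\pi_{\theta_{k+1}}) + \mathrm{KL}(\pi_{\theta_{k+1}}\Vert\pi_{\theta_k})$ or a similar nonnegative combination, which by Pinsker's inequality is bounded below by $\tfrac12\Vert\pi_{\theta_{k+1}} - \pi_{\theta_k}\Vert_1^2$, giving the stated lower bound (the inequality, rather than equality, comes precisely from discarding a nonnegative KL term after invoking Pinsker).

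The main obstacle I anticipate is the careful bookkeeping of \emph{which} distribution each expectation is taken against: the performance-difference-style terms in Lemma~\ref{lem:performance difference} naturally come with $\mathbb{E}_{s\sim\nu_{\pi^*}}$ and $\mathbb{E}_{a\sim\pi^*} - \mathbb{E}_{a\sim\pi_{\theta_k}}$, whereas the energy-difference correction term in the statement is written with $\mathbb{E}_{a\sim\pi_{\theta_k}} - \mathbb{E}_{a\sim\pi_{\theta_{k+1}}}$, and matching these requires adding and subtracting the right intermediate expectations and recognizing that the leftover pieces either telescope into the KL terms or are exactly the listed correction term. A secondary subtlety is justifying the application of Pinsker's inequality on the correct pair of policies and confirming that the term being dropped to pass from equality to inequality is genuinely nonnegative. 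Once the algebra is lined up, everything else is a direct expansion; this lemma is essentially the STOPS-augmented-MDP counterpart of the one-step KL-contraction lemma in \cite{liu2019neural}, so I would follow their derivation closely, substituting $\Tilde{Q}_{\pi_k,\hat y_k}$ for the risk-neutral action-value function throughout.
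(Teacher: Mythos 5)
Your proposal follows essentially the same route as the paper's proof: expand $\mathrm{KL}(\pi^*\Vert\pi_{\theta_k})-\mathrm{KL}(\pi^*\Vert\pi_{\theta_{k+1}})=\mathbb{E}_{a\sim\pi^*}[\log(\pi_{\theta_{k+1}}/\pi_{\theta_k})]$, use the softmax closed forms of $\pi_{k+1}$ and $\pi_{\theta_{k+1}}$ with the normalizer-cancellation identity to reassemble the four bracketed differences, and apply Pinsker to the single leftover term $\mathrm{KL}(\pi_{\theta_{k+1}}\Vert\pi_{\theta_k})$ (not a sum of two KLs, but your hedge covers this). The bookkeeping you flag as the main obstacle is exactly what the paper's add-and-subtract derivation carries out, so the plan is sound.
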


\begin{proof}
We start from
\begin{align}
    & \text{KL}(\pi^*\Vert\pi_{\theta_{k}})-\text{KL}(\pi^*\Vert\pi_{\theta_{k+1}}) = \mathbb{E}_{a \sim \pi^*}[\log(\frac{\pi_{\theta_{k+1}}}{\pi_{\theta_{k}}})] \nonumber\\
    \text{(B}&\text{y definition, }\text{KL}(\pi_{\theta_{k+1}}\Vert\pi_{\theta_{k}})= \mathbb{E}_{a \sim \pi_{\theta_{k+1}}}[\log(\frac{\pi_{\theta_{k+1}}}{\pi_{\theta_{k}}})]\big)) \nonumber\\
    =& \big(\mathbb{E}_{a \sim \pi^*}[\log(\frac{\pi_{\theta_{k+1}}}{\pi_{\theta_{k}}})]-\mathbb{E}_{a \sim \pi_{\theta_{k+1}}}[\log(\frac{\pi_{\theta_{k+1}}}{\pi_{\theta_{k}}})]\big) + \nonumber\\
    &\text{KL}(\pi_{\theta_{k+1}}\Vert\pi_{\theta_{k}}) \nonumber\\
    \text{W}&\text{e then add and subtract terms, }\nonumber\\
    =&\mathbb{E}_{a \sim \pi^*}[\log(\frac{\pi_{\theta_{k+1}}}{\pi_{\theta_{k}}})]-\mathbb{E}_{a \sim \pi_{\theta_{k+1}}}[\log(\frac{\pi_{\theta_{k+1}}}{\pi_{\theta_{k}}})] + \text{KL}\nonumber\\
    &(\pi_{\theta_{k+1}}\Vert\pi_{\theta_{k}})+ \beta^{-1}\big(\mathbb{E}_{a \sim \pi^*}[\Tilde{Q}_{\pi_k,\hat{y}_k}]-\mathbb{E}_{a \sim \pi_{\theta_{k}}}[\Tilde{Q}_{\pi_k,\hat{y}_k}]\big) \nonumber\\
    &- \beta^{-1}\big(\mathbb{E}_{a \sim \pi^*}[\Tilde{Q}_{\pi_k,\hat{y}_k}]-\mathbb{E}_{a \sim \pi_{\theta_{k}}}[\Tilde{Q}_{\pi_k,\hat{y}_k}]\big) \nonumber\\
    &+ \mathbb{E}_{a \sim \pi_{\theta_{k}}}[\log(\frac{\pi_{\theta_{k+1}}}{\pi_{\theta_{k}}})]-\mathbb{E}_{a \sim \pi_{\theta_{k}}}[\log({\pi_{\theta_{k+1}}}{\pi_{\theta_{k}}})]\\
      &\text{Rearrange the terms and we get, }\nonumber\\
    =& \big(\mathbb{E}_{a \sim \pi^*}[\log(\pi_{\theta_{k+1}})-\log(\pi_{\theta_{k}})-\beta^{-1} \Tilde{Q}_{\pi_k,\hat{y}_k}]\nonumber\\
    &- \mathbb{E}_{a \sim \pi_{\theta_{k}}}[\log(\pi_{\theta_{k+1}})-\log(\pi_{\theta_{k}})-\beta^{-1} \Tilde{Q}_{\pi_k,\hat{y}_k}]\big)\nonumber\\
    &+ \beta^{-1}\big(\mathbb{E}_{a \sim \pi^*}[\Tilde{Q}_{\pi_k,\hat{y}_k}]-\mathbb{E}_{a \sim \pi_{\theta_{k}}}[\Tilde{Q}_{\pi_k,\hat{y}_k}]\big)+ \text{KL}\nonumber\\
    &(\pi_{\theta_{k+1}}\Vert\pi_{\theta_{k}}) + \big(\mathbb{E}_{a \sim \pi_{\theta_{k}}}[\log(\frac{\pi_{\theta_{k+1}}}{\pi_{\theta_{k}}})]-\mathbb{E}_{a \sim \pi_{\theta_{k+1}}}\nonumber\\
    &[\log({\pi_{\theta_{k+1}}}{\pi_{\theta_{k}}})]\big)\label{eq:one step pi eq1}
\end{align}
Recall that $\pi_{k+1} \varpropto \exp\{\tau^{-1}_k f_{\theta_k}+\beta^{-1} \tilde{Q}^y_{\pi_k}\}$. We define the two normalization factors associated with ideal improved policy $\pi_{k+1}$ and the current parameterized policy $\pi_{\theta_k}$ as,
\begin{align}
    &Z_{k+1}(s) := \sum_{a'\in\mathcal{A}}\exp\{\tau^{-1}_k f_{\theta_k}(s,a')+\beta^{-1} \tilde{Q}^y_{\pi_k}(s,a')\}\\
    &Z_{\theta_{k+1}}(s) := \sum_{a'\in\mathcal{A}}\exp\{\tau^{-1}_{k+1} f_{\theta_{k+1}}(s,a')\}
\end{align}
We then have,
\begin{align}
    &\pi_{k+1}(a|s) = \frac{\exp\{\tau^{-1}_k f_{\theta_k}(s,a)+\beta^{-1} \tilde{Q}^y_{\pi_k}(s,a)\}}{Z_{k+1}(s)},\label{eq:ppo k softmax}\\
    &\pi_{\theta_{k+1}}(a|s) = \frac{\exp\{\tau^{-1}_{k+1} f_{\theta_{k+1}}(s,a)\}}{Z_{\theta_{k+1}}(s)} \label{eq:ppo theta softmax}
\end{align}
For any $\pi, \pi'$ and $k$, we have,
\begin{align}
    \mathbb{E}_{a \sim \pi}[\log Z_{\theta_{k+1}}]-\mathbb{E}_{a \sim \pi'}[\log Z_{\theta_{k+1}}]= 0\label{eq:z theta property}\\
    \mathbb{E}_{a \sim \pi}[\log Z_{k+1}]-\mathbb{E}_{a \sim \pi'}[\log Z_{k+1}] = 0 \label{eq:z k property}
\end{align}
Now we look back at a few terms on RHS from Eq.\eqref{eq:one step pi eq1}:
\begin{align}
    &\mathbb{E}_{a \sim \pi^*}\big[\log(\pi_{\theta_{k}})+\beta^{-1} \Tilde{Q}_{\pi_k,\hat{y}_k}\big]\\
    &- \mathbb{E}_{a \sim \pi_{\theta_{k}}}\big[\log(\pi_{\theta_{k}})+\beta^{-1} \Tilde{Q}_{\pi_k,\hat{y}_k}\big]\\
    =&\big(\mathbb{E}_{a \sim \pi^*}[\tau^{-1}_k f_{\theta_k}+\beta^{-1} \Tilde{Q}_{\pi_k,\hat{y}_k}-\log Z_{\theta_{k+1}}]\nonumber\\
    &-\mathbb{E}_{a \sim \pi_{\theta_{k}}}[\tau^{-1}_k f_{\theta_k}+\beta^{-1} \Tilde{Q}_{\pi_k,\hat{y}_k}-\log Z_{\theta_{k+1}}]\big) \nonumber\\
    =&\mathbb{E}_{a \sim \pi^*}\Big[\log\frac{\exp\{\tau^{-1}_k f_{\theta_k}+\beta^{-1} \Tilde{Q}_{\pi_k,\hat{y}_k}\}}{Z_{k+1}}\Big]\nonumber\\
    &-\mathbb{E}_{a \sim \pi_{\theta_{k}}}\Big[\log\frac{\exp\{\tau^{-1}_k f_{\theta_k}+\beta^{-1} \Tilde{Q}_{\pi_k,\hat{y}_k}\}}{Z_{k+1}}\Big]\nonumber\\
    =&\mathbb{E}_{a \sim \pi^*}[\log\pi_{k+1}]- \mathbb{E}_{a \sim \pi_{\theta_{k}}}[\log\pi_{k+1}]\label{eq:one step pi aux eq1}
\end{align}
For Eq.~\eqref{eq:one step pi aux eq1}, we obtain the first equality by Eq.~\eqref{eq:ppo theta softmax}. Then, by swapping Eq.~\eqref{eq:z theta property} with Eq.~\eqref{eq:z k property}, we obtain the second equality. We achieve the concluding step with the definition in Eq.~\eqref{eq:ppo k softmax}. Following a similar logic, we have,
\begin{align}
    &\mathbb{E}_{a \sim \pi_{\theta_{k}}}[\log(\frac{\pi_{\theta_{k+1}}}{\pi_{\theta_{k}}})-\mathbb{E}_{a \sim \pi_{\theta_{k+1}}}[\log(\frac{\pi_{\theta_{k+1}}}{\pi_{\theta_{k}}})]\nonumber\\
    =& \mathbb{E}_{a \sim \pi_{\theta_{k}}}[\tau^{-1}_{k+1} f_{\theta_{k+1}}-\log Z_{\theta_{k+1}}-\tau^{-1}_{k} f_{\theta_{k}}+\log Z_{\theta_{k}}]-\nonumber\\
    &\mathbb{E}_{a \sim \pi_{\theta_{k+1}}}[\tau^{-1}_{k+1} f_{\theta_{k+1}} - \log Z_{\theta_{k+1}} - \tau^{-1}_{k} f_{\theta_{k}}+\log Z_{\theta_{k}}] \nonumber\\
    =&\mathbb{E}_{a \sim \pi_{\theta_{k}}}[\tau^{-1}_{k+1} f_{\theta_{k+1}} - \tau^{-1}_{k} f_{\theta_{k}}] -\mathbb{E}_{a \sim \pi_{\theta_{k+1}}}[\tau^{-1}_{k+1} f_{\theta_{k+1}} - \\
     &\tau^{-1}_{k} f_{\theta_{k}}]\label{eq:one step pi aux eq2}
\end{align}
Finally, by using the Pinsker's inequality~\cite{csiszar2011information}, we have,
\begin{align}
    \text{KL}(\pi_{\theta_{k+1}}\Vert\pi_{\theta_{k}}) \geq 1/2 \Vert\pi_{\theta_{k+1}} - \pi_{\theta_{k}}\Vert^2_1 \label{eq:one step pi aux eq3}
\end{align}
Plugging Eqs.~\eqref{eq:one step pi aux eq1},~\eqref{eq:one step pi aux eq2}, and ~\eqref{eq:one step pi aux eq3} into Eq.~\eqref{eq:one step pi eq1}, we have
\begin{align}
    & \text{KL}(\pi^*\Vert\pi_{\theta_{k}})-\text{KL}(\pi^*\Vert\pi_{\theta_{k+1}})\nonumber\\
    \geq & \big(\mathbb{E}_{a \sim \pi^*}[\log(\pi_{\theta_{k+1}})-\log(\pi_{k+1})]- \mathbb{E}_{a \sim \pi_{\theta_{k}}}[\log(\pi_{\theta_{k+1}})\nonumber \\
    &-\log(\pi_{k+1})]\big)+ \beta^{-1}\big(\mathbb{E}_{a \sim \pi^*}[\Tilde{Q}_{\pi_k,\hat{y}_k}]-\mathbb{E}_{a \sim \pi_{\theta_{k}}}[\Tilde{Q}_{\pi_k,\hat{y}_k}]\big)\nonumber \\
    &+ \frac{1}{2}\Vert\pi_{\theta_{k+1}} - \pi_{\theta_{k}}\Vert^2_1 + \big(\mathbb{E}_{a \sim \pi_{\theta_{k}}}[\tau^{-1}_{k+1} f_{\theta_{k+1}}-\tau^{-1}_{k} f_{\theta_{k}}]\nonumber \\
    &- \mathbb{E}_{a \sim \pi_{\theta_{k+1}}}[\tau^{-1}_{k+1} f_{\theta_{k+1}}-\tau^{-1}_{k} f_{\theta_{k}}]\big) 
\end{align}
Rearranging the terms, we obtain Lemma~\ref{lem:ppo one step pi}.
\end{proof}

Lemma~\ref{lem:ppo one step pi} serves as an intermediate-term for the major result's proof. We obtain upper bounds by telescoping this term in Theorem~\ref{th:major result PPO}. Now we are ready to present the proof for Theorem~\ref{th:major result PPO}.

\begin{proof}
First we take expectation of both sides of Eq.~\eqref{eq:one step pi result} with respect to $s\sim \nu_{\pi^*}$ from Lemma~\ref{lem:ppo one step pi} and insert  Eq~\eqref{eq:error propagation} to obtain,
\begin{align}
    &\mathbb{E}_{s \sim \nu_{\pi^*}}[\text{KL}(\pi^*\Vert\pi_{\theta_{k+1}})] - \mathbb{E}_{s \sim \nu_{\pi^*}}[\text{KL}(\pi^*\Vert\pi_{\theta_{k}})] \nonumber\\
    \leq& \varepsilon_k - \beta^{-1}\mathbb{E}_{s \sim \nu_{\pi^*}}\big[\mathbb{E}_{a \sim \pi^*}[\Tilde{Q}_{\pi_k,\hat{y}_k}]-\mathbb{E}_{a \sim \pi_{\theta_{k}}}[\Tilde{Q}_{\pi_k,\hat{y}_k}]\big] \nonumber\\
    -& 1/2\mathbb{E}_{s \sim \nu_{\pi^*}}\big[\Vert\pi_{\theta_{k+1}} - \pi_{\theta_{k}}\Vert^2_1\big] 
    - \mathbb{E}_{s \sim \nu_{\pi^*}}\big[\mathbb{E}_{a \sim \pi_{\theta_{k}}}\label{eq:ppo therorem 1st step}\\
    [&\tau^{-1}_{k+1} f_{\theta_{k+1}}- \tau^{-1}_{k} f_{\theta_{k}}]- \mathbb{E}_{a \sim \pi_{\theta_{k+1}}}[\tau^{-1}_{k+1} f_{\theta_{k+1}}
    -\tau^{-1}_{k} f_{\theta_{k}}]\big] 
\end{align}
Then, by Lemma~\ref{lem:performance difference}, we have,
\begin{align}
    &\beta^{-1}\mathbb{E}_{s \sim \nu_{\pi^*}}\big[\mathbb{E}_{a \sim \pi^*}[\Tilde{Q}_{\pi_k,\hat{y}_k}]-\mathbb{E}_{a \sim \pi_{\theta_{k}}}[\Tilde{Q}_{\pi_k,\hat{y}_k}]\big]\\
    =&\beta^{-1}(1-\gamma)\big(J^{\hat{y}_k}_\lambda (\pi^*) - J^{\hat{y}_k}_\lambda (\pi)\big) \label{eq:ppo performance difference}
\end{align}
And with H\"{o}lder's inequality, we have,
\begin{align}
    &\mathbb{E}_{s \sim \nu_{\pi^*}}\big[\mathbb{E}_{a \sim \pi_{\theta_{k}}}[\tau^{-1}_{k+1} f_{\theta_{k+1}}- \tau^{-1}_{k} f_{\theta_{k}}]- \mathbb{E}_{a \sim \pi_{\theta_{k+1}}}\\
    [&\tau^{-1}_{k+1} f_{\theta_{k+1}}-\tau^{-1}_{k} f_{\theta_{k}}]\big] \\
    =&\mathbb{E}_{s \sim \nu_{\pi^*}}\big[\big\langle \tau^{-1}_{k+1} f_{\theta_{k+1}}- \tau^{-1}_{k} f_{\theta_{k}}, \pi_{\theta_{k}}-\pi_{\theta_{k+1}} \big\rangle\big]\label{eq:ppo holder}\\
    \leq& \mathbb{E}_{s \sim \nu_{\pi^*}}\big[\Vert\tau^{-1}_{k+1} f_{\theta_{k+1}}- \tau^{-1}_{k} f_{\theta_{k}}\Vert_{\infty} \Vert\pi_{\theta_{k}}-\pi_{\theta_{k+1}}\Vert_1\big] 
\end{align}
Insert Eqs.~\eqref{eq:ppo performance difference} and~\eqref{eq:ppo holder} into Eq.~\eqref{eq:ppo therorem 1st step}, we have,
\begin{align}
    &\mathbb{E}_{s \sim \nu_{\pi^*}}[\text{KL}(\pi^*\Vert\pi_{\theta_{k+1}})] - \mathbb{E}_{s \sim \nu_{\pi^*}}[\text{KL}(\pi^*\Vert\pi_{\theta_{k}})] \nonumber\\
    \leq& \varepsilon_k - (1-\gamma)\beta^{-1}\big(J^{\hat{y}_k}_\lambda (\pi^*) - J^{\hat{y}_k}_\lambda (\pi)\big) - 1/2\mathbb{E}_{s \sim \nu_{\pi^*}} \nonumber\\
    &\big[\Vert\pi_{\theta_{k+1}} - \pi_{\theta_{k}}\Vert^2_1\big] + \mathbb{E}_{s \sim \nu_{\pi^*}}\big[\Vert\tau^{-1}_{k+1} f_{\theta_{k+1}}- \tau^{-1}_{k} f_{\theta_{k}}\Vert_{\infty} \nonumber\\
    &\Vert\pi_{\theta_{k}}-\pi_{\theta_{k+1}}\Vert_1\big] \nonumber\\
    \leq & \varepsilon_k - (1-\gamma)\beta^{-1}\big(J^{y^*}_\lambda (\pi^*) - J^{\hat{y}_k}_\lambda (\pi) - J^{y^*}_\lambda (\pi^*)  \nonumber\\ 
    &+ J^{\hat{y}_k}_\lambda (\pi^*) \big)+ 1/2\mathbb{E}_{s \sim \nu_{\pi^*}}\big[\Vert\tau^{-1}_{k+1} f_{\theta_{k+1}}- \tau^{-1}_{k} f_{\theta_{k}}\Vert^2_{\infty}\big]\nonumber\\
    \leq & \varepsilon_k - (1-\gamma)\beta^{-1}\big(J^{y^*}_\lambda (\pi^*) - J^{\hat{y}_k}_\lambda (\pi)\big)  \nonumber\\
    &+ (1-\gamma)\beta^{-1}\big(J^{y^*}_\lambda (\pi^*) - J^{\hat{y}_k}_\lambda (\pi^*)\big) \nonumber\\
    &+ 1/2\mathbb{E}_{s \sim \nu_{\pi^*}}\big[\Vert\tau^{-1}_{k+1} f_{\theta_{k+1}}- \tau^{-1}_{k} f_{\theta_{k}}\Vert^2_{\infty}\big].
\end{align}
The second inequality holds by using the inequality $2AB - B^2\leq A^2$, with a minor abuse of notations. Here, $A := \Vert\tau^{-1}_{k+1} f_{\theta_{k+1}}- \tau^{-1}_{k} f_{\theta_{k}}\Vert_{\infty}$ and $B := \Vert\pi_{\theta_{k}}-\pi_{\theta_{k+1}}\Vert_1$.
Then, by plugging in Lemma~\ref{lem:y objective} and Eq.~\eqref{eq:stepwise energy difference} we end up with, 
\begin{align}
    &\mathbb{E}_{s \sim \nu_{\pi^*}}[\text{KL}(\pi^*\Vert\pi_{\theta_{k+1}})] - \mathbb{E}_{s \sim \nu_{\pi^*}}[\text{KL}(\pi^*\Vert\pi_{\theta_{k}})] \nonumber\\
    \leq& \varepsilon_k - (1-\gamma)\beta^{-1}\big(J^{y^*}_\lambda (\pi^*) - J^{\hat{y}_k}_\lambda (\pi_k)\big) \label{eq:ppo theorem aux eq1}\\ 
    &+(1-\gamma)\beta^{-1}\big(\frac{2c_3M(1-\gamma)\lambda}{\sqrt{k}}\big) + (\varepsilon'_k+\beta^{-2}_k U) 
\end{align}
Rearrange Eq.~\eqref{eq:ppo theorem aux eq1}, we have
\begin{align}
    &(1-\gamma)\beta^{-1}\big(J^{y^*}_\lambda (\pi^*) - J^{\hat{y}_k}_\lambda (\pi_k)\big)\nonumber\\
    \leq &\mathbb{E}_{s \sim \nu_{\pi^*}}[\text{KL}(\pi^*\Vert\pi_{\theta_{k}})]-\mathbb{E}_{s \sim \nu_{\pi^*}}[\text{KL}(\pi^*\Vert\pi_{\theta_{k+1}})] \nonumber\\
    &+\big(\frac{2c_3M(1-\gamma)^2\lambda}{\beta\sqrt{k}}\big) +\varepsilon_k+\varepsilon'_k+\beta^{-2}_k U \label{eq:ppo theorem aux eq2}
\end{align}
And then telescoping Eq.~\eqref{eq:ppo theorem aux eq2} results in,
\begin{align}
    &(1-\gamma)\sum^{K}_{k=1}\beta^{-1}\min_{k\in[K]}\big(J^{y^*}_{\lambda}(\pi^*)-J^{\hat{y}_k}_{\lambda}(\pi_k)\big)\nonumber\\
    \leq& (1-\gamma)\sum^{K}_{k=1}\beta^{-1}\big(J^{y^*}_{\lambda}(\pi^*)-J^{\hat{y}_k}_{\lambda}(\pi_k)\big)\nonumber\\
    \leq& \mathbb{E}_{s \sim \nu_{\pi^*}}[\text{KL}(\pi^*\Vert\pi_{0})]-\mathbb{E}_{s \sim \nu_{\pi^*}}[\text{KL}(\pi^*\Vert\pi_{K})] \nonumber\\
    &+ \lambda r_{\max} (1-\gamma)^2\sum^{K}_{k=1}\beta^{-1}\big(\frac{2c_3}{\sqrt{k}}\big) + U \sum^{K}_{k=1}\beta^{-2}_k \nonumber\\ 
    &+ \sum^{K}_{k=1}(\varepsilon_k+\varepsilon'_k) \label{eq:main result ppo}
\end{align}
We complete the final step in Eq.\eqref{eq:main result ppo} by plugging in Lemma~\ref{lem:y objective} and Eq.~\eqref{eq:error propagation}.
Per the observation we make in the proof of Theorem~\ref{th:major result},
\begin{enumerate}
    \item $\mathbb{E}_{s \sim \nu_{\pi^*}}[\text{KL}(\pi^*\Vert\pi_{0})] \leq \log\mathcal{A}$ due to the uniform initialization of policy.
    \item $\text{KL}(\pi^*\Vert\pi_{K})$ is a non-negative term.
\end{enumerate}
We now have,
\begin{align}
    &\min_{k\in[K]}J^{y^*}_{\lambda}(\pi^*)-J^{\hat{y}_k}_{\lambda}(\pi_k) \nonumber\\
    \leq& \frac{\log|\mathcal{A}|+  UK\beta^{-2} + \sum^{K}_{k=1}(\varepsilon_k+\varepsilon'_k}{(1-\gamma)K\beta^{-1}})\nonumber\\
    &+\lambda r_{\max} (1-\gamma)\big(\frac{2c_3}{\sqrt{k}}\big)
\end{align}
Replacing $\beta$ with $\beta_0\sqrt{K}$ finishes the proof.
\end{proof}

\subsection{Proof of Theorem~\ref{th:major result}}
\label{sec:proof of npg}
In the following part, we focus the convergence of neural NPG. We first define the following terms under neural NPG update rule.
\begin{lemma}
\label{lem:neural npg gradient and fisher}
\cite{wang2019neural} For energy-based policy $\pi_\theta$, we have policy gradient and Fisher information matrix,
\begin{align}
    \nabla_{\theta} J(\pi_{\theta}) &= \tau \mathbb{E}_{d_{\pi_\theta}(s,a)}[Q_{\pi_\theta}(s,a) (\phi_\theta (s,a)- \mathbb{E}_{\pi_\theta}[\phi_\theta(s,a')])] \\ 
    F(\theta) &= \tau^2  \mathbb{E}_{d_{\pi_\theta}(s,a)}[(\phi_\theta (s,a) - \mathbb{E}_{\pi_\theta}[\phi_\theta(s,a')])\nonumber\\
    &  (\phi_\theta (s,a) - \mathbb{E}_{\pi_\theta}[\phi_\theta(s,a')])^\top]
\end{align}
\end{lemma}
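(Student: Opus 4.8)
The plan is to derive both identities from the classical policy gradient theorem together with an explicit computation of the score function of the energy-based policy, so that the whole lemma reduces to differentiating a softmax. Write the policy as $\pi_\theta(a\mid s) = \exp\{\tau^{-1}f_\theta(s,a)\}/Z_\theta(s)$ with partition function $Z_\theta(s) = \sum_{a'\in\mathcal{A}}\exp\{\tau^{-1}f_\theta(s,a')\}$; since $\mathcal{A}$ is finite, $Z_\theta(s)$ is a finite sum of smooth functions of $\theta$, and every interchange of $\nabla_\theta$ with the sum over $\mathcal{A}$ used below is immediate. The policy gradient theorem gives $\nabla_\theta J(\pi_\theta) = \mathbb{E}_{d_{\pi_\theta}(s,a)}[Q_{\pi_\theta}(s,a)\,\nabla_\theta\log\pi_\theta(a\mid s)]$ (with the $(1-\gamma)^{-1}$ factor absorbed into $d_{\pi_\theta}$, matching the paper's occupancy-measure convention), and the definition of the Fisher information matrix gives $F(\theta) = \mathbb{E}_{d_{\pi_\theta}(s,a)}[\nabla_\theta\log\pi_\theta(a\mid s)\,\nabla_\theta\log\pi_\theta(a\mid s)^\top]$. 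Hence it suffices to identify the score function $\nabla_\theta\log\pi_\theta(a\mid s)$.

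For the score function I would write $\log\pi_\theta(a\mid s) = \tau^{-1}f_\theta(s,a) - \log Z_\theta(s)$ and differentiate term by term. The log-partition term contributes
\begin{align}
    \nabla_\theta\log Z_\theta(s) = \frac{\sum_{a'\in\mathcal{A}}\tau^{-1}\exp\{\tau^{-1}f_\theta(s,a')\}\,\nabla_\theta f_\theta(s,a')}{Z_\theta(s)} = \tau^{-1}\,\mathbb{E}_{a'\sim\pi_\theta}[\phi_\theta(s,a')],
\end{align}
where $\phi_\theta(s,a) = \nabla_\theta f_\theta(s,a)$ is the feature mapping of the network. Therefore
\begin{align}
    \nabla_\theta\log\pi_\theta(a\mid s) = \tau^{-1}\big(\phi_\theta(s,a) - \mathbb{E}_{a'\sim\pi_\theta}[\phi_\theta(s,a')]\big),
\end{align}
i.e.\ the score is a temperature-scaled centered feature. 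Substituting this into the policy gradient identity from the first paragraph produces the stated expression for $\nabla_\theta J(\pi_\theta)$, and substituting it into $F(\theta) = \mathbb{E}_{d_{\pi_\theta}(s,a)}[\nabla_\theta\log\pi_\theta\,\nabla_\theta\log\pi_\theta^\top]$ produces the stated expected outer product of centered features, with the two copies of the temperature factor combining into the prefactor on $F$.

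Since this is a direct calculation — essentially the softmax/energy-based policy gradient of Wang et al.\ — there is no genuine obstacle; the only points requiring care are bookkeeping. First, one must fix once and for all the normalization of the discounted state--action occupancy $d_{\pi_\theta}$ and decide whether the $(1-\gamma)^{-1}$ is displayed or absorbed, so the constants stay consistent with the rest of the paper. Second, one should record (trivially, by finiteness of $\mathcal{A}$, or otherwise by dominated convergence using boundedness of $\phi_\theta$ and of the rewards) the justification for exchanging $\nabla_\theta$ with the sum over actions and with the expectation defining $J(\pi_\theta)$. Third — and this is the one place a reader should be warned — the exact power of $\tau$ in front of $\nabla_\theta J$ and $F$ depends on the precise softmax parameterization adopted; the naive differentiation above extracts a factor $\tau^{-1}$ per occurrence of $f_\theta$ in the exponent, so the temperature prefactors in the statement should be read as whatever that fixed convention yields, and the proof cited from Wang et al.\ is what pins down this choice.
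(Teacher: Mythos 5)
The paper gives no proof of this lemma at all---it is imported verbatim from Wang et al.~\cite{wang2019neural}---so there is nothing to compare against except that reference, and your derivation is exactly the standard one used there: compute the score of the softmax, $\nabla_\theta\log\pi_\theta(a\mid s)=\tau^{-1}\bigl(\phi_\theta(s,a)-\mathbb{E}_{a'\sim\pi_\theta}[\phi_\theta(s,a')]\bigr)$, then substitute into the policy gradient theorem and into the definition of the Fisher information. Your calculation is correct, and your closing caveat is well taken and worth making explicit: with the parameterization $\pi_\theta\propto\exp\{\tau^{-1}f_\theta\}$ used throughout this paper, the honest prefactors are $\tau^{-1}$ on $\nabla_\theta J(\pi_\theta)$ and $\tau^{-2}$ on $F(\theta)$, matching Wang et al.; the $\tau$ and $\tau^2$ appearing in the lemma as printed look like a typo rather than a different convention (and are harmless downstream, since the NPG direction $F(\theta)^{-1}\nabla_\theta J(\pi_\theta)$ only sees the ratio). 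No gap otherwise.
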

We then derive an upper bound for $J^{y^*}_{\lambda}(\pi^*)-J^{y^*}_{\lambda}(\pi_k)$ for the neural NPG method in the following lemma:
\begin{lemma}
\label{lem:npg one step pi}
(One-step difference of $\pi$) It holds that, with probability of $1-\delta$,
\begin{align}
    &(1-\gamma)\big(J^{\hat{y}_k}_{\lambda}(\pi^*)-J^{\hat{y}_k}_{\lambda}(\pi_k)\big) \nonumber\\
     \leq  &\eta_{{\rm NPG}}^{-1}\mathbb{E}_{s \sim \nu_{\pi^*}}\big[ \text{KL}(\pi^*\Vert\pi_{k})-\text{KL}(\pi^*\Vert\pi_{k+1})\big]+ \nonumber\\
    & \eta_{{\rm NPG}} (9\Upsilon^2+r_{\max}^2)+2c_0\epsilon'_{k} + \eta_{{\rm NPG}}^{-1}\epsilon''_{k}, \label{lem:bound conclusion} 
\end{align}
%
\begin{align}
    \text{where }&\\
    \epsilon'_{k} =& \mathcal{O}(\Upsilon^{3}m^{-1/2}\log(1/\delta)+\Upsilon^{5/2}m^{-1/4}\sqrt{\log(1/\delta)}\nonumber \\
    &+\Upsilon r_{\max}^2m^{-1/4}+\Upsilon^2K_{{\rm TD}}^{-1/2}+\Upsilon), \\ 
    \epsilon''_k =& 8 \eta_{{\rm NPG}} \Upsilon^{1/2} c_0\sigma_\xi^{1/2} T^{-1/4} \nonumber&&\\
     &+ \mathcal{O}((\tau_{k+1}+\eta_{{\rm NPG}}) \Upsilon^{3/2} m^{-1/4}\nonumber&&\\
    &+ \eta_{{\rm NPG}} \Upsilon^{5/4} m^{-1/8}),\nonumber&&\\
\end{align}
$c_0$ is defined in Assumption~\ref{assu: concentrability regularity} and $\sigma_\xi$ is defined in Assumption~\ref{assu: variance regularity}. Meanwhile, $\Upsilon$ is the radius of the parameter space, $m$ is the width of the neural network, and $T$ is the sample batch size.
\end{lemma}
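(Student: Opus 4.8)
The plan is to treat the neural NPG update in Eq.~\eqref{eq:neural npg update} as an inexact mirror-descent step in policy space with the KL divergence as Bregman geometry, and to combine this with the performance difference lemma. First I apply Lemma~\ref{lem:performance difference} with the pair $(\pi^*,\pi_k)$ to rewrite the left-hand side as $\mathbb{E}_{s\sim\nu_{\pi^*}}\big[\langle \Tilde{Q}_{\pi_k,\hat{y}_k}(s,\cdot),\,\pi^*(\cdot\,|\,s)-\pi_k(\cdot\,|\,s)\rangle\big]$. I then decompose $\Tilde{Q}_{\pi_k,\hat{y}_k}$ into (i) the idealized NPG direction, which the update realizes, up to finite-width and sampling error, through the energy increment $\tau^{-1}_{k+1}f_{\theta_{k+1}}-\tau^{-1}_{k}f_{\theta_k}\approx\eta_{\rm NPG}\Tilde{Q}_{\omega_k}$; (ii) the policy-evaluation residual $\Tilde{Q}_{\pi_k,\hat{y}_k}-\Tilde{Q}_{\omega_k}$; and (iii) the residual of the inexact NPG subproblem defined through $\delta_k$.

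For the idealized term I use that the update produces $\pi_{k+1}\propto\pi_k\exp(\eta_{\rm NPG}\,w_k)$, where $w_k$ is (the linearization of) $\Tilde{Q}_{\omega_k}$ along the score direction, so that $\log(\pi_{k+1}/\pi_k)=\eta_{\rm NPG}\,w_k-\log Z_k(s)$ with $\log Z_k(s)$ constant in the action. Substituting this into the three-point identity for the KL divergence,
\[
\text{KL}(\pi^*\Vert\pi_k)-\text{KL}(\pi^*\Vert\pi_{k+1})+\text{KL}(\pi_k\Vert\pi_{k+1})=\langle\log(\pi_{k+1}/\pi_k),\,\pi^*-\pi_k\rangle=\eta_{\rm NPG}\langle w_k,\,\pi^*-\pi_k\rangle,
\]
gives $\langle w_k,\pi^*-\pi_k\rangle=\eta_{\rm NPG}^{-1}\big(\text{KL}(\pi^*\Vert\pi_k)-\text{KL}(\pi^*\Vert\pi_{k+1})\big)+\eta_{\rm NPG}^{-1}\text{KL}(\pi_k\Vert\pi_{k+1})$. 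The last term is controlled by Hoeffding's lemma applied to $\log Z_k(s)=\log\mathbb{E}_{a\sim\pi_k}[e^{\eta_{\rm NPG}w_k}]$: since $w_k$ is bounded in $L^\infty$ by a constant multiple of $\Upsilon$ (from $\Vert\theta_k-\Theta_{\rm init}\Vert_2\le\Upsilon$ and the network being $1$-Lipschitz) plus a term of order $r_{\max}$ (from the augmented reward), one obtains $\text{KL}(\pi_k\Vert\pi_{k+1})\le\tfrac{1}{2}\eta_{\rm NPG}^2\Vert w_k\Vert_\infty^2$, hence $\eta_{\rm NPG}^{-1}\text{KL}(\pi_k\Vert\pi_{k+1})\le\eta_{\rm NPG}(9\Upsilon^2+r_{\max}^2)$ after collecting constants. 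Taking $\mathbb{E}_{s\sim\nu_{\pi^*}}$ preserves this structure.

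For the two error terms I change measure and invoke the earlier lemmas. The evaluation residual $\mathbb{E}_{s\sim\nu_{\pi^*}}\big[\langle\Tilde{Q}_{\pi_k,\hat{y}_k}-\Tilde{Q}_{\omega_k},\,\pi^*-\pi_k\rangle\big]$ is pushed from $(\nu_{\pi^*},\pi^*)$ and $(\nu_{\pi^*},\pi_k)$ onto $\nu_{\pi_k}$ using the concentrability coefficients $\varphi_k,\psi_k\le c_0$ of Assumption~\ref{assu: concentrability regularity}, and then Cauchy--Schwarz together with Lemma~\ref{lem: policy evaluation error} yields the $2c_0\epsilon'_k$ contribution, with $\epsilon'_k$ the policy-evaluation bound of Eq.~\eqref{eq:policy evaluation error}. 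The NPG-subproblem residual $\epsilon''_k$ measures the gap between the actual energy increment $\tau^{-1}_{k+1}f_{\theta_{k+1}}-\tau^{-1}_{k}f_{\theta_k}$ --- obtained by solving for $\delta_k$ with a width-$m$ network and batch size $T$ --- and $\eta_{\rm NPG}\Tilde{Q}_{\omega_k}$; bounding it chains (a) the network linearization errors (Lemma~\ref{lem:linearization error},~\ref{lem:linearization gradient error}), producing the $m^{-1/4}$, $m^{-1/8}$ and $\tau_{k+1}$-dependent pieces; (b) the variance control of $\xi_k(\delta)$ via Lemma~\ref{lem:variance of SUV} and Assumption~\ref{assu: variance regularity}, producing the $\sigma_\xi^{1/2}T^{-1/4}$ piece; and (c) a spectral bound on the empirical Fisher matrix (Lemma~\ref{lem:Hk}) to turn the optimization residual $\Vert\hat{F}(\theta_k)\delta_k-\tau_k\hat{\nabla}_\theta J(\pi_{\theta_k})\Vert_2$ into an $L^\infty$ bound along $\nu_{\pi^*}$. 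H\"older's inequality, $\langle\cdot,\pi^*-\pi_k\rangle\le\Vert\cdot\Vert_\infty\Vert\pi^*-\pi_k\Vert_1\le2\Vert\cdot\Vert_\infty$, then converts this into the $\eta_{\rm NPG}^{-1}\epsilon''_k$ term, and a union bound over the high-probability events of the invoked lemmas gives the stated $1-\delta$ guarantee.

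The main obstacle is the control of $\epsilon''_k$: turning the fact that $\delta_k$ only approximately minimizes $\Vert\hat{F}(\theta_k)\delta-\tau_k\hat{\nabla}_\theta J(\pi_{\theta_k})\Vert_2$ --- with stochastic $\hat{F}$ and $\hat{\nabla}_\theta J$ and a finite-width network --- into a uniform bound on how far the induced energy increment is from the idealized NPG direction, and doing so with the correct $m^{-1/4}$, $m^{-1/8}$ and $T^{-1/4}$ rates. This forces a careful interplay between the linearization lemmas, the Fisher-matrix conditioning (Lemma~\ref{lem:Hk}) and the variance bound. By contrast, the remaining ingredients --- the performance difference lemma, the three-point KL identity, the Hoeffding bound on $\text{KL}(\pi_k\Vert\pi_{k+1})$, and the concentrability changes of measure --- are standard mirror-descent bookkeeping once those pieces are available.
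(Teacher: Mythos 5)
Your proposal follows essentially the same route as the paper: the performance difference lemma converts the left-hand side into an advantage-type inner product, the three-point KL identity for the exponentiated NPG update produces the telescoping KL terms plus a $\text{KL}(\pi_k\Vert\pi_{k+1})$ remainder absorbed into $\eta_{\rm NPG}(9\Upsilon^2+r_{\max}^2)$, and the residual is split into a policy-evaluation error (handled by concentrability and Lemma~\ref{lem: policy evaluation error}, giving $2c_0\epsilon'_k$) and an inexact-NPG-subproblem error (giving $\eta_{\rm NPG}^{-1}\epsilon''_k$). The paper packages the latter two bounds into a single term $H_k$ and defers its control to Lemma~\ref{lem:Hk} (Lemma 5.5 of Wang et al.), which is exactly the decomposition you sketch by hand, so the two arguments coincide.
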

%

\begin{proof}
We start from the following, 
%
\begin{align}
    & \text{KL}(\pi^*\Vert\pi_{k})-\text{KL}(\pi^*\Vert\pi_{k+1})-\text{KL}(\pi_{k+1}\Vert\pi_{k}) \\
 =& \mathbb{E}_{a \sim \pi^*}\big[\log(\frac{\pi_{k+1}}{\pi_{k}})\big] -\mathbb{E}_{a \sim \pi_{k+1}}\big[\log(\frac{\pi_{k+1}}{\pi_{k}})\big] \label{eq:npg one step main 1}\\ &\text{(by KL's definition)}.
\end{align}

We now show the building blocks of the proof.
\textit{First}, we add and subtract a few terms to RHS of Eq.~\eqref{eq:npg one step main 1} then take the expectation of both sides with respect to $s\sim \nu_{\pi^*}$. Rearrange these terms, we get,
\begin{flalign}
    & \mathbb{E}_{s\sim\nu_{\pi^*}}\big[\text{KL}(\pi^*\Vert\pi_{k})-\text{KL}(\pi^*\Vert\pi_{k+1})-\text{KL}(\pi_{k+1}\Vert\pi_{k})\big] \\
    =& \eta_{{\rm NPG}}\mathbb{E}_{s\sim\nu_{\pi^*}}\big[\mathbb{E}_{a \sim \pi^*}[\Tilde{Q}_{\pi_k,\hat{y}_k}]-\mathbb{E}_{a \sim \pi_{k}}[\Tilde{Q}_{\pi_k,\hat{y}_k}]\big] \\
    &+ H_k \label{eq:npg term list}
\end{flalign}
where $H_k$ is denoted by,
\begin{flalign}
    H_k :=& \mathbb{E}_{s \sim \nu_{\pi^*}}\big[\mathbb{E}_{a \sim \pi^*}[\log(\frac{\pi_{k+1}}{\pi_{k}})-\eta_{{\rm NPG}} \Tilde{Q}_{\omega_k}\big]&&\nonumber \\
    & -\mathbb{E}_{a \sim \pi_{k}}\big[\log(\frac{\pi_{k+1}}{\pi_{k}})-\eta_{{\rm NPG}} \Tilde{Q}_{\omega_k}]\big]&&\\ 
    &+\eta_{{\rm NPG}} \mathbb{E}_{s \sim \nu_{\pi^*}}\big[\mathbb{E}_{a \sim \pi^*}[\Tilde{Q}_{\omega_k}-\Tilde{Q}_{\pi_k, \hat{y}_k}]&&\\ 
    &-\mathbb{E}_{a \sim \pi_{k}}[\Tilde{Q}_{\omega_k}-\Tilde{Q}_{\pi_k, \hat{y}_k}]\big] &&\\
    &+\mathbb{E}_{s \sim \nu_{\pi^*}}\big[\mathbb{E}_{a \sim \pi_{k}}[\log(\frac{\pi_{k+1}}{\pi_{k}})]&&\\
    &-\mathbb{E}_{a \sim \pi_{k+1}}[\log(\frac{\pi_{k+1}}{\pi_{k}})]\big]\label{eq:Hk}&&
\end{flalign}
 By Lemma~\ref{lem:performance difference}, we have
\begin{align}
    &\eta_{{\rm NPG}}\mathbb{E}_{s\sim\nu_{\pi^*}}\big[\mathbb{E}_{a \sim \pi^*}[\Tilde{Q}_{\pi_k,\hat{y}_k}]-\mathbb{E}_{a \sim \pi_{k}}[\Tilde{Q}_{\pi_k,\hat{y}_k}]\big] \\
    =& \eta_{{\rm NPG}}(1-\gamma)\big(J^{\hat{y}_k}_\lambda(\pi^*)-J^{\hat{y}_k}_{\lambda}(\pi_k)\big) \label{eq:npg aux1}
\end{align}

Insert Eqs.~\eqref{eq:npg aux1} 
back to Eq.~\eqref{eq:npg term list}, we have,
\begin{align}
    &\eta_{{\rm NPG}} (1-\gamma)\big(J^{\hat{y}_k}_\lambda(\pi^*)-J^{\hat{y}_k}_{\lambda}(\pi_k)\big) \nonumber \\
    =& \mathbb{E}_{s \sim \nu_{\pi^*}}\big[\text{KL}(\pi^*\Vert\pi_{k})-\text{KL}(\pi^*\Vert\pi_{k+1}) -\text{KL}(\pi_{k+1}\Vert\pi_{k})\big]\nonumber \\ &-H_k \nonumber \\
    \leq& \mathbb{E}_{s \sim \nu_{\pi^*}}\big[\text{KL}(\pi^*\Vert\pi_{k})-\text{KL}(\pi^*\Vert\pi_{k+1}) -\text{KL}(\pi_{k+1}\Vert\pi_{k})\big]\nonumber \\ &+|H_k| \label{eq:bound final1}
\end{align}
We reach the final inequality of Eq.~\eqref{eq:bound final1} by algebraic manipulation. \textit{Second}, we follow Lemma 5.5 of \cite{wang2019neural} and obtain an upper bound for Eq.~\eqref{eq:Hk}. Specifically, with probability of $1-\delta$,
\begin{align}
    &\mathbb{E}_{a \sim {\rm init}}\Big[|H_k|-\mathbb{E}_{s \sim \nu_{\pi^*}}[\text{KL}(\pi_{k+1}\Vert\pi_{k})]\Big] \nonumber \\
    \leq &\eta_{{\rm NPG}}^2 (9\Upsilon^2+r_{\max}^2)+2\eta_{{\rm NPG}} c_0 \epsilon'_{k} + \epsilon''_{k} \label{eq:bound final2}
\end{align}
The expectation is taken over randomness. 
With these building blocks of Eqs.~\eqref{eq:bound final1} and~\eqref{eq:bound final2}, we are now ready to reach the concluding inequality.
Plugging Eqs.~\eqref{eq:bound final2} back into Eq.~\eqref{eq:bound final1}, we end up with, with probability of $1-\delta$,
\begin{align}
    &\eta_{{\rm NPG}} (1-\gamma)\big(J^{\hat{y}_k}_\lambda(\pi^*)-J^{\hat{y}_k}_{\lambda}(\pi_k)\big) \nonumber \\
    \leq& \mathbb{E}_{s \sim \nu_{\pi^*}}\big[\text{KL}(\pi^*\Vert\pi_{k})-\text{KL}(\pi^*\Vert\pi_{k+1}) \big]
    \\&
    + \eta_{{\rm NPG}}^2 (9\Upsilon^2+
    r_{\max}^2)+2\eta_{{\rm NPG}} c_0 \epsilon'_{k} + \epsilon''_{k} \label{eq:bound final3}
\end{align}
Dividing both sides of Eq.~\eqref{eq:bound final3} by $\eta_{{\rm NPG}}$ completes the proof. The details are included in the Appendix.
\end{proof}

We have the following Lemma to bound the error terms $H_k$ defined in Eq.~\eqref{eq:Hk} of Lemma~\ref{lem:npg one step pi}.
\begin{lemma}
\label{lem:Hk}
\cite{wang2019neural}. Under Assumptions~\ref{assu: state distribution regularity}, we have
\begin{align}
    &\mathbb{E}_{a \sim {\rm init}}\Big[|H_k|-\mathbb{E}_{s \sim \nu_{\pi^*}}[\text{KL}(\pi_{k+1}\Vert\pi_{k})]\Big] \nonumber \\
    \leq &\eta_{{\rm NPG}}^2 (9\Upsilon^2+r_{\max}^2)+\eta_{{\rm NPG}}(\varphi'_k+\psi'_k) \epsilon'_{k} + \epsilon''_{k} \\
\end{align}
Here the expectation is taken over all the randomness. We have $\epsilon'_{k}:=\Vert Q_{\omega_k}-Q_{\pi_k}\Vert^2_{\nu_{\pi_k}}$ and 
\begin{align}
    \epsilon''_{k} 
    =& \sqrt{2}\Upsilon^{1/2}\eta_{{\rm NPG}}(\varphi_k+\psi_k)\tau_k^{-1}\big\{\mathbb{E}_{(s,a) \sim \sigma_{\pi_{\theta_k}}}[\Vert\xi_k(\delta_k)\Vert_2^2 ] \nonumber \\
    &+ \mathbb{E}_{(s,a) \sim \sigma_{\pi_{\omega_k}}}[\Vert\xi_k(\omega_k)\Vert_2^2 ]\big\}^{1/2} \\
    &+ \mathcal{O}((\tau_{k+1}+\eta_{{\rm NPG}}) \Upsilon^{3/2} m^{-1/4}
    + \eta_{{\rm NPG}} \Upsilon^{5/4} m^{-1/8}).
\end{align}
Recall $\xi_k(\omega_k)$ and $\xi_k(\omega_k)$ are defined in Assumption~\ref{assu: variance regularity}, while $\varphi_k$,$\psi_k$, $\varphi'_k$, and $\psi_k$ are defined in Assumption~\ref{assu: concentrability regularity}.
\end{lemma}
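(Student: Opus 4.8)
The plan is to follow the strategy behind Lemma~5.5 of \cite{wang2019neural}, specialized to the augmented MDP whose reward is $\tilde r_{s,a}=r_{s,a}-\lambda r_{s,a}^2+2\lambda r_{s,a}\hat y_k$. First I would split $H_k$ in Eq.~\eqref{eq:Hk} into its three natural pieces: the \emph{linearized-update error} $\mathrm{I}_k:=\mathbb{E}_{s\sim\nu_{\pi^*}}[\mathbb{E}_{a\sim\pi^*}[\log(\pi_{k+1}/\pi_k)-\eta_{{\rm NPG}}\tilde Q_{\omega_k}]-\mathbb{E}_{a\sim\pi_k}[\log(\pi_{k+1}/\pi_k)-\eta_{{\rm NPG}}\tilde Q_{\omega_k}]]$, the \emph{policy-evaluation error} $\mathrm{II}_k:=\eta_{{\rm NPG}}\mathbb{E}_{s\sim\nu_{\pi^*}}[\mathbb{E}_{a\sim\pi^*}[\tilde Q_{\omega_k}-\tilde Q_{\pi_k,\hat y_k}]-\mathbb{E}_{a\sim\pi_k}[\tilde Q_{\omega_k}-\tilde Q_{\pi_k,\hat y_k}]]$, and the \emph{policy-shift term} $\mathrm{III}_k:=\mathbb{E}_{s\sim\nu_{\pi^*}}[\mathbb{E}_{a\sim\pi_k}[\log(\pi_{k+1}/\pi_k)]-\mathbb{E}_{a\sim\pi_{k+1}}[\log(\pi_{k+1}/\pi_k)]]$. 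By the definition of the KL divergence, $\mathrm{III}_k=-\mathbb{E}_{s\sim\nu_{\pi^*}}[\text{KL}(\pi_k\Vert\pi_{k+1})+\text{KL}(\pi_{k+1}\Vert\pi_k)]$, so $|H_k|-\mathbb{E}_{s\sim\nu_{\pi^*}}[\text{KL}(\pi_{k+1}\Vert\pi_k)]\le|\mathrm{I}_k|+|\mathrm{II}_k|+\mathbb{E}_{s\sim\nu_{\pi^*}}[\text{KL}(\pi_k\Vert\pi_{k+1})]$; it then remains to bound the three terms on the right.

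For $\mathrm{II}_k$ I would apply Cauchy--Schwarz together with a change of measure, using the concentrability coefficients $\varphi'_k,\psi'_k$ of Assumption~\ref{assu: concentrability regularity} to pass from the $\sigma_{\pi^*}$- and $\nu_{\pi^*}$-averages to a $\nu_{\pi_k}$-weighted norm of $\tilde Q_{\omega_k}-\tilde Q_{\pi_k,\hat y_k}$, which Lemma~\ref{lem: policy evaluation error} controls; this yields the $\eta_{{\rm NPG}}(\varphi'_k+\psi'_k)\epsilon'_k$ contribution. For $\mathrm{I}_k$ I would first invoke the neural NPG update rule (Lemma~\ref{lem:neural npg gradient and fisher} together with the definition of $\delta_k$ and Eq.~\eqref{eq:neural npg update}) to write $\log(\pi_{k+1}/\pi_k)=\tau_{k+1}^{-1}f_{\theta_{k+1}}-\tau_k^{-1}f_{\theta_k}-(\log Z_{k+1}-\log Z_{\theta_k})$, and observe exactly as in Eqs.~\eqref{eq:z theta property}--\eqref{eq:z k property} that the log-normalizer difference cancels under $\mathbb{E}_{a\sim\pi^*}[\cdot]-\mathbb{E}_{a\sim\pi_k}[\cdot]$. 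Hence $\mathrm{I}_k$ equals the change-of-measure image of $\tau_{k+1}^{-1}f_{\theta_{k+1}}-\tau_k^{-1}f_{\theta_k}-\eta_{{\rm NPG}}\tilde Q_{\omega_k}$, which I would split into (a) the local-linearization error of the over-parameterized network around $\theta_k$, controlled by Lemmas~\ref{lem:linearization error} and~\ref{lem:linearization gradient error} at order $\Upsilon^{3/2}m^{-1/4}+\Upsilon^{5/4}m^{-1/8}$, and (b) the compatible-function-approximation gap between $\eta_{{\rm NPG}}\langle\phi_{\theta_k},\delta_k\rangle$ and $\eta_{{\rm NPG}}\tilde Q_{\omega_k}$, which --- because $\delta_k$ solves only the \emph{empirical} Fisher least-squares problem over a batch of size $T$ and over the ball $\mathcal{D}$ of radius $\Upsilon$ --- carries the gradient variance through Lemma~\ref{lem:variance of SUV} and Assumption~\ref{assu: variance regularity}, producing the $\eta_{{\rm NPG}}\Upsilon^{1/2}c_0\sigma_\xi^{1/2}T^{-1/4}$ piece after a square-root step and a change of measure with $\varphi_k,\psi_k$. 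Collecting (a) and (b) yields $\epsilon''_k$.

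The remaining $\eta_{{\rm NPG}}^2(9\Upsilon^2+r_{\max}^2)$ term I would extract from two sources. In the $\eta_{{\rm NPG}}\tilde Q_{\omega_k}$ part of $\mathrm{I}_k$, writing $\mathbb{E}_{a\sim\pi^*}[\cdot]-\mathbb{E}_{a\sim\pi_k}[\cdot]=(\mathbb{E}_{a\sim\pi^*}[\cdot]-\mathbb{E}_{a\sim\pi_{k+1}}[\cdot])+(\mathbb{E}_{a\sim\pi_{k+1}}[\cdot]-\mathbb{E}_{a\sim\pi_k}[\cdot])$ and bounding the second difference via H\"older by $\eta_{{\rm NPG}}\Vert\tilde Q_{\omega_k}\Vert_\infty\Vert\pi_{k+1}-\pi_k\Vert_1$, where the critic output is uniformly $O(\Upsilon+r_{\max}^2)$ on $\mathcal{D}$ because $\tilde r_{s,a}$ is bounded in $r_{\max},\lambda$; Young's inequality then splits this into the advertised quadratic term plus a $\tfrac12\Vert\pi_{k+1}-\pi_k\Vert_1^2$ remainder that Pinsker absorbs into $\text{KL}(\pi_{k+1}\Vert\pi_k)$. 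The leftover $\mathbb{E}_{s\sim\nu_{\pi^*}}[\text{KL}(\pi_k\Vert\pi_{k+1})]$ is bounded by the same order via a Hoeffding-type bound on the log-partition of the $\eta_{{\rm NPG}}$-sized exponential tilt relating $\pi_{k+1}$ to $\pi_k$. Summing the three contributions and taking the expectation over all randomness (Assumption~\ref{assu: state distribution regularity} being used only to keep the change-of-measure densities well behaved) gives the claimed bound.

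The main obstacle is step (b): simultaneously controlling the three nested approximations hidden in $\mathrm{I}_k$ --- that $\delta_k$ solves an empirical rather than population least-squares problem (hence the $\sigma_\xi$ dependence and the $T^{-1/4}$ rate after taking a square root), that it is constrained to a ball of radius $\Upsilon$, and that the over-parameterized network is only approximately linear in its random features there (the $m^{-1/4}$ and $m^{-1/8}$ errors) --- while threading them through a change of measure governed by the Radon--Nikodym-derivative-based coefficients $\varphi_k,\psi_k$ and keeping the $\Upsilon^{1/2}\tau_k^{-1}$ prefactor sharp. Everything else (Cauchy--Schwarz, Pinsker, Young, the Hoeffding bound on the partition function, and the cancellation of normalizers) is routine and parallels the proof of Lemma~5.5 in \cite{wang2019neural}.
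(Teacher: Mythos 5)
The paper offers no proof of this lemma---it is imported verbatim from Lemma 5.5 of \cite{wang2019neural}, with the reader referred there---and your sketch reconstructs exactly that argument: the same three-way decomposition of $H_k$ into update error, evaluation error, and policy-shift term, the same concentrability-based change of measure yielding the $\eta_{{\rm NPG}}(\varphi'_k+\psi'_k)\epsilon'_k$ piece, and the same linearization-plus-empirical-least-squares treatment producing $\epsilon''_k$. Your outline is therefore essentially the same approach; the only loose ends are bookkeeping-level (the single subtracted $\mathrm{KL}(\pi_{k+1}\Vert\pi_{k})$ is invoked twice, once against $|\mathrm{III}_k|$ and once to absorb the Young remainder, but both leftovers are themselves $O(\eta_{{\rm NPG}}^2)$ by the Hoeffding bound on the log-partition and fold into the quadratic term, and your Cauchy--Schwarz step naturally yields the unsquared norm where the statement writes the squared one, a looseness already present in the paper's own statement).
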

Please refer to \cite{wang2019neural} for complete proof. Finally, we are ready to show the proof for Theorem~\ref{th:major result}.
%
%

\begin{proof}
First, we combine Lemma~\ref{lem:y objective} and~\ref{lem:npg one step pi} to get the following:
\begin{align}
        &(1-\gamma)\big(J^{y^*}_\lambda(\pi^*)-J^{\hat{y}_k}_\lambda(\pi^*) + J^{\hat{y}_k}_{\lambda}(\pi^*)-J^{\hat{y}_k}_{\lambda}(\pi_k)\big)\nonumber\\
     \leq  &\eta_{{\rm NPG}}^{-1}\mathbb{E}_{s \sim \nu_{\pi^*}}\left[ \text{KL}(\pi^*\Vert\pi_{k})-\text{KL}(\pi^*\Vert\pi_{k+1})\right] \nonumber\\
    &+ \eta_{{\rm NPG}} (9\Upsilon^2+r_{\max}^2)+2 c_0\epsilon'_{k} + \eta_{{\rm NPG}}^{-1}\epsilon''_{k} \nonumber\\
    &+ \frac{2c_3M(1-\gamma)^2\lambda}{\sqrt{k}}\label{eq:npg per step diff}
\end{align}
We can then see this:
\begin{enumerate}
    \item $\mathbb{E}_{s \sim \nu_{\pi^*}}[\text{KL}(\pi^*\Vert\pi_{1})] \leq \log|\mathcal{A}|$ due to the uniform initialization of policy.
    \item $\text{KL}(\pi^*\Vert\pi_{K+1})$ is a non-negative term.
\end{enumerate}
And by setting $\eta_{{\rm NPG}}=1/\sqrt{K}$ and telescoping Eq.~\eqref{eq:npg per step diff}, we obtain,
\begin{align}
    &(1-\gamma)\min_{k\in[K]}\big(J^{y^*}_{\lambda}(\pi^*)-J^{\hat{y}_k}_{\lambda}(\pi_k)\big)\nonumber\\
    \leq& (1-\gamma)\frac{1}{K}\sum_{k=1}^K\mathbb{E}(J^{y^*}_{\lambda}(\pi^*)-J^{\hat{y}_k}_{\lambda}(\pi_k))\nonumber\\
     \leq& \frac{1}{\sqrt{K}}(\mathbb{E}_{s \sim \nu_{\pi^*}}[ \text{KL}(\pi^*\Vert\pi_{1})]+9\Upsilon^2+r_{\max}^2)+ \frac{1}{K}\sum_{k=1}^K\nonumber\\
    & (2\sqrt{K}c_0\epsilon'_{k} + \eta_{{\rm NPG}}^{-1}\epsilon''_{k}+\frac{2c_3M(1-\gamma)^2\lambda}{\sqrt{k}}) \label{eq:npg telescope result}
\end{align}
plug $\epsilon'_{k}$ and $\epsilon''_{k}$ defined in Lemma~\ref{lem:npg one step pi} into Eq.\eqref{eq:npg telescope result},and set $\epsilon_k$ as,
\begin{align}
    \epsilon_k =& \sqrt{8} c_0 \Upsilon^{1/2} \sigma_\xi^{1/2} T^{-1/4} \nonumber&&\\
    &+ \mathcal{O}\big((\tau_{k+1} K^{1/2}+1) \Upsilon^{3/2} m^{-1/4}+ \Upsilon^{5/4} m^{-1/8}\big) \nonumber&&\\
    &+ c_0\mathcal{O}(\Upsilon^{3}m^{-1/2}\log(1/\delta)+\Upsilon^{5/2}m^{-1/4}\sqrt{\log(1/\delta)}\nonumber \\
    &+\Upsilon r_{\max}^2m^{-1/4}+\Upsilon^2K_{{\rm TD}}^{-1/2}+\Upsilon) &&
\end{align}
we complete the proof.
\end{proof}

\subsection{Proof of Lemma~\ref{lem:volatility objective}}
\label{appendix:proof of volatility objective}
\begin{proof}
First, we have 
$\mathbb{E}[G] = \frac{1}{1-\gamma}\mathbb{E}[R]$, i.e., the per-step reward $R$ is an unbiased estimator of the cumulative reward $G$. 
Second, it is proved that $\mathbb{V}(G) \leq \frac{\mathbb{V}(R)}{(1-\gamma)^2}$~\cite{bisi2020riskaverse}. Given $\lambda \ge 0$, summing up the above equality and inequality, we have
\begin{align}
\frac{1}{(1-\gamma)}J_{\frac{\lambda}{(1-\gamma)}} (\pi)  &= \frac{1}{(1-\gamma)}\Big(\mathbb{E}[R] - \frac{\lambda}{(1-\gamma)} \mathbb{V}(R)\Big)\\
&\le \mathbb{E}[G] - \lambda \mathbb{V}(G) 
= J^G_\lambda (\pi).
\end{align}
It completes the proof.
\end{proof}

\subsection{Proof of Lemma~\ref{lem: policy evaluation error}}
\label{sec:proof of policy evaluation error}

We first provide the supporting lemmas for Lemma~\ref{lem: policy evaluation error}.
We define the local linearization of $f((s,a);\theta)$ defined in Eq.~\eqref{eq:PNN def} at the initial point $\Theta_{{\rm init}}$ as,
\begin{align}
    \hat{f}((s,a);\theta)=\frac{1}{\sqrt{m}}\sum_{v=1}^{m} b_v\mathbbm{1}\{[\Theta_{{\rm init}}]_v^\top (s,a)>0\} [\theta]_v^\top (s,a) \\\label{eq:nnl def}
\end{align}
We then define the following function spaces,
\begin{align}
    &\mathcal{F}_{\Upsilon,m}:= \Bigg\{\frac{1}{\sqrt{m}}\sum_{v=1}^{m} b_v\mathbbm{1}\big\{[\Theta_{{\rm init}}]_v^\top (s,a)>0\big\} [\theta]_v^\top (s,a):\nonumber\\
    &\Vert \theta-\Theta_{{\rm init}}\Vert_2 \leq \Upsilon\Bigg\},
\end{align}
and
\begin{align}
    &\bar{\mathcal{F}}_{\Upsilon,m}:= \Bigg\{\frac{1}{\sqrt{m}}\sum_{v=1}^{m} b_v\mathbbm{1}\big\{[\Theta_{{\rm init}}]_v^\top (s,a)>0\big\} [\theta]_v^\top (s,a):\nonumber\\
    &\Vert[\theta]_v-[\Theta_{{\rm init}}]_v\Vert_\infty \leq \Upsilon/\sqrt{md}\Bigg\}.
\end{align}
$[\Theta_{{\rm init}}]_r\sim\mathcal{N}(0,I_d/d)$ and $b_r\sim\text{Unif}(\{-1,1\})$ are the initial parameters. By the definition, $\bar{\mathcal{F}}_{\Upsilon,m}$ is a subset of $\mathcal{F}_{\Upsilon,m}$. The following lemma characterizes the deviation of $\bar{\mathcal{F}}_{\Upsilon,m}$ from $\mathcal{F}_{\Upsilon,\infty}$.
\begin{lemma}
\label{lem:projection error}
(Projection Error)\cite{rahimi2008weighted}. Let $f\in \mathcal{F}_{\Upsilon,\infty}$, where $\mathcal{F}_{\Upsilon,\infty}$ is defined in Assumption~\ref{assu: action-value function class}. For any $\delta>0$, it holds with probability at least $1-\delta$ that
\begin{align}
\Vert\Pi_{\bar{\mathcal{F}}_{\Upsilon,m}}f-f\Vert_\varsigma \leq \Upsilon m^{-1/2}[1+\sqrt{2\log(1/\delta)}]
\end{align}
where $\varsigma$ is any distribution over $S \times A$.
\end{lemma}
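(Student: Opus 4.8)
The plan is a random-feature (Monte-Carlo) argument of the type pioneered by Rahimi and Recht. Fix $f\in\mathcal F_{\Upsilon,\infty}$, so that $f(s,a)=f_0(s,a)+\int \mathbbm 1\{\theta^\top(s,a)>0\}(s,a)^\top\iota(\theta)\,d\mu(\theta)$ with $\|\iota(\theta)\|_\infty\le\Upsilon/\sqrt d$, where $f_0$ is the realized initial network with weights $\{[\Theta_{\rm init}]_v\}_{v=1}^m$, each drawn i.i.d.\ from $\mu=\mathcal N(0,I_d/d)$. Using the very same random weights as Monte-Carlo samples for the integral defining $f$, introduce the empirical surrogate
\[
\hat f_m(s,a):=f_0(s,a)+\frac1m\sum_{v=1}^m \mathbbm 1\{[\Theta_{\rm init}]_v^\top(s,a)>0\}(s,a)^\top\iota([\Theta_{\rm init}]_v).
\]
The first step is to verify $\hat f_m\in\bar{\mathcal F}_{\Upsilon,m}$. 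Writing $f_0(s,a)=\tfrac1{\sqrt m}\sum_v b_v\mathbbm 1\{[\Theta_{\rm init}]_v^\top(s,a)>0\}[\Theta_{\rm init}]_v^\top(s,a)$ and setting $[\theta]_v:=[\Theta_{\rm init}]_v+\tfrac{b_v}{\sqrt m}\iota([\Theta_{\rm init}]_v)$ (recall $b_v\in\{-1,1\}$, so $b_v^{-1}=b_v$), one checks that $\hat f_m$ is exactly $\tfrac1{\sqrt m}\sum_v b_v\mathbbm 1\{[\Theta_{\rm init}]_v^\top(s,a)>0\}[\theta]_v^\top(s,a)$, with $\|[\theta]_v-[\Theta_{\rm init}]_v\|_\infty=\tfrac1{\sqrt m}\|\iota([\Theta_{\rm init}]_v)\|_\infty\le\Upsilon/\sqrt{md}$, matching the defining constraint of $\bar{\mathcal F}_{\Upsilon,m}$. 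Since $\Pi_{\bar{\mathcal F}_{\Upsilon,m}}f$ is by definition the $\|\cdot\|_\varsigma$-closest element of $\bar{\mathcal F}_{\Upsilon,m}$ to $f$, it then suffices to bound $\|\hat f_m-f\|_\varsigma$.

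The key observation is that the two copies of $f_0$ cancel: $\hat f_m(s,a)-f(s,a)=\tfrac1m\sum_{v=1}^m h_v(s,a)-c(s,a)$, where $h_v(s,a):=\mathbbm 1\{[\Theta_{\rm init}]_v^\top(s,a)>0\}(s,a)^\top\iota([\Theta_{\rm init}]_v)$ and $c(s,a):=\int \mathbbm 1\{\theta^\top(s,a)>0\}(s,a)^\top\iota(\theta)\,d\mu(\theta)$ is deterministic. Because the $[\Theta_{\rm init}]_v$ are i.i.d.\ from $\mu$, the $h_v$ are i.i.d.\ with $\mathbb E[h_v(s,a)]=c(s,a)$, and the standard normalization $\|(s,a)\|_2\le1$ gives $|h_v(s,a)|\le\|(s,a)\|_1\|\iota\|_\infty\le\sqrt d\cdot\Upsilon/\sqrt d=\Upsilon$. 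Let $g(\Theta_{\rm init}):=\|\hat f_m-f\|_\varsigma$. For the mean, Jensen and the i.i.d.\ structure give $\mathbb E[g]\le(\mathbb E\,g^2)^{1/2}=\bigl(\mathbb E_{(s,a)\sim\varsigma}\operatorname{Var}(\tfrac1m\sum_v h_v(s,a))\bigr)^{1/2}\le(\Upsilon^2/m)^{1/2}=\Upsilon/\sqrt m$. For concentration, $g$ is a function of the independent blocks $[\Theta_{\rm init}]_1,\dots,[\Theta_{\rm init}]_m$, and replacing one block changes $\hat f_m-f$ by at most $2\Upsilon/m$ in sup norm, hence changes $g$ by at most $2\Upsilon/m$; McDiarmid's inequality yields $\Pr[g\ge\mathbb E[g]+t]\le\exp(-t^2m/(2\Upsilon^2))$. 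Taking $t=\Upsilon\sqrt{2\log(1/\delta)/m}$ gives, with probability at least $1-\delta$,
\[
\|\Pi_{\bar{\mathcal F}_{\Upsilon,m}}f-f\|_\varsigma\le\|\hat f_m-f\|_\varsigma\le\frac{\Upsilon}{\sqrt m}+\Upsilon\sqrt{\frac{2\log(1/\delta)}{m}}=\Upsilon m^{-1/2}\bigl(1+\sqrt{2\log(1/\delta)}\bigr),
\]
which is the claim.

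The variance bound and the McDiarmid step are routine; the part that warrants care is the bookkeeping in the first step — identifying the Monte-Carlo surrogate $\hat f_m$ as a genuine member of the finite-width class $\bar{\mathcal F}_{\Upsilon,m}$, i.e.\ matching the sign factors $b_v$ and the $1/\sqrt m$ scaling so that the per-neuron $\ell_\infty$ radius works out to exactly $\Upsilon/\sqrt{md}$ — together with noticing the cancellation of $f_0$ so that $\hat f_m-f$ is a clean i.i.d.\ average minus its mean. The input-normalization convention $\|(s,a)\|_2\le1$ inherited from the network setup is also needed for the feature-wise bound $|h_v|\le\Upsilon$.
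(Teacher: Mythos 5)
The paper offers no proof of this lemma at all---it defers entirely to \cite{rahimi2008weighted}---and your argument is a correct, self-contained reconstruction of exactly that standard random-feature argument: a Monte-Carlo surrogate built from the same initialization weights, cancellation of $f_0$, a variance bound for the mean, McDiarmid for concentration, and the projection inequality. The bookkeeping that places $\hat f_m$ inside $\bar{\mathcal F}_{\Upsilon,m}$ via $[\theta]_v=[\Theta_{\rm init}]_v+\tfrac{b_v}{\sqrt m}\,\iota([\Theta_{\rm init}]_v)$ (using $b_v^2=1$) checks out, as do the constants, so there is nothing to correct.
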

Please refer to \cite{rahimi2008weighted} for a detail proof.
\begin{lemma}
\label{lem:linearization error}
(Linearization Error) Under Assumption.~\ref{assu: state distribution regularity}, for all $\theta \in \mathcal{D}$, where $\mathcal{D} = \{\xi \in \mathbb{R}^{md}:\Vert\xi-\Theta_{\text{init}} \Vert_2 \leq \Upsilon \}$, it holds that,
\begin{align}
    \mathbb{E}_{\nu_\pi}\Big[\Big(f\big((s,a);\theta\big)-\hat{f}\big((s,a);\theta\big)\Big)^2\Big] \leq \frac{4c_1 \Upsilon^3}{\sqrt{m}}
\end{align}
where $c_1 = c\sqrt{\mathbb{E}_{\mathcal{N}(0,I_d/d)}[1/\Vert (s,a)\Vert_2^2]}$, and $c$ is defined in Assumption.~\ref{assu: state distribution regularity}.
\end{lemma}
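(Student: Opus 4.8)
The plan is to control the pointwise deviation between the over-parameterized network $f((s,a);\theta)$ and its local linearization $\hat f((s,a);\theta)$ at $\Theta_{\rm init}$, and then integrate this bound against $\nu_\pi$, using Assumption~\ref{assu: state distribution regularity} to handle the region where neurons are near their activation boundary. First I would write out the difference $f((s,a);\theta)-\hat f((s,a);\theta)$ explicitly. Both have the form $\frac{1}{\sqrt m}\sum_{v=1}^m b_v\,[\theta]_v^\top(s,a)$ times an indicator, but with different indicators: $f$ uses $\mathbbm{1}\{[\theta]_v^\top(s,a)>0\}$ while $\hat f$ uses $\mathbbm{1}\{[\Theta_{\rm init}]_v^\top(s,a)>0\}$. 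Hence the difference is
\begin{align}
    f((s,a);\theta)-\hat f((s,a);\theta)=\frac{1}{\sqrt m}\sum_{v=1}^m b_v\,[\theta]_v^\top(s,a)\,\big(\mathbbm{1}\{[\theta]_v^\top(s,a)>0\}-\mathbbm{1}\{[\Theta_{\rm init}]_v^\top(s,a)>0\}\big).
\end{align}
The indicators differ only when the two half-spaces disagree, which forces $[\Theta_{\rm init}]_v^\top(s,a)$ to lie between $0$ and $([\Theta_{\rm init}]_v-[\theta]_v)^\top(s,a)$, so on that event $|[\theta]_v^\top(s,a)|\le |([\theta]_v-[\Theta_{\rm init}]_v)^\top(s,a)|\le \|[\theta]_v-[\Theta_{\rm init}]_v\|_2\,\|(s,a)\|_2$, and also $|[\Theta_{\rm init}]_v^\top(s,a)|\le \|[\theta]_v-[\Theta_{\rm init}]_v\|_2\,\|(s,a)\|_2$.

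Next I would bound the squared difference. Using $|b_v|=1$ and the triangle inequality,
\begin{align}
    \big|f-\hat f\big|\le \frac{1}{\sqrt m}\sum_{v=1}^m \|[\theta]_v-[\Theta_{\rm init}]_v\|_2\,\|(s,a)\|_2\;\mathbbm{1}\Big\{\big|[\Theta_{\rm init}]_v^\top(s,a)\big|\le \|[\theta]_v-[\Theta_{\rm init}]_v\|_2\,\|(s,a)\|_2\Big\}.
\end{align}
Squaring and applying Cauchy--Schwarz over the $m$ terms gives
\begin{align}
    \big(f-\hat f\big)^2\le \|(s,a)\|_2^2\Big(\sum_{v=1}^m\|[\theta]_v-[\Theta_{\rm init}]_v\|_2^2\Big)\cdot\frac{1}{m}\sum_{v=1}^m\mathbbm{1}\{\cdots\}\le \Upsilon^2\|(s,a)\|_2^2\cdot\frac{1}{m}\sum_{v=1}^m\mathbbm{1}\{\cdots\},
\end{align}
since $\sum_v\|[\theta]_v-[\Theta_{\rm init}]_v\|_2^2=\|\theta-\Theta_{\rm init}\|_2^2\le\Upsilon^2$. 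Then I take $\mathbb{E}_{\nu_\pi}$ and pass it inside the finite sum. For each $v$, I would apply Assumption~\ref{assu: state distribution regularity} with the unit vector $x=[\Theta_{\rm init}]_v/\|[\Theta_{\rm init}]_v\|_2$ and threshold $u=\|[\theta]_v-[\Theta_{\rm init}]_v\|_2\,\|(s,a)\|_2/\|[\Theta_{\rm init}]_v\|_2$; a short computation (conditioning appropriately, or absorbing $\|(s,a)\|_2^2$ and the $1/\|(s,a)\|_2^2$ factors and using the Gaussian expectation $\mathbb{E}_{\mathcal N(0,I_d/d)}[1/\|(s,a)\|_2^2]$ hidden in the constant $c_1$) yields a per-neuron bound proportional to $c_1\|[\theta]_v-[\Theta_{\rm init}]_v\|_2$. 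Summing over $v$, bounding $\sum_v\|[\theta]_v-[\Theta_{\rm init}]_v\|_2\le\sqrt m\,\Upsilon$ by Cauchy--Schwarz, and combining with the $\Upsilon^2/m$ prefactor produces $\mathcal O(c_1\Upsilon^3/\sqrt m)$, matching the claimed $4c_1\Upsilon^3/\sqrt m$ after tracking constants.

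The main obstacle is the bookkeeping in the application of Assumption~\ref{assu: state distribution regularity}: the assumption is stated for a fixed unit vector and a fixed scalar threshold $u>0$, whereas here the threshold depends on the random $(s,a)$ through $\|(s,a)\|_2$ and the normalization pulls in a factor $1/\|[\Theta_{\rm init}]_v\|_2$ that must be controlled under the Gaussian initialization. I would handle this either by a conditioning argument (first condition on $\|(s,a)\|_2$, apply the assumption, then take expectation over the norm, which is exactly where the factor $\sqrt{\mathbb{E}_{\mathcal N(0,I_d/d)}[1/\|(s,a)\|_2^2]}$ in the definition of $c_1$ comes from) or by a Cauchy--Schwarz split that decouples $\|(s,a)\|_2^2$ from the indicator before invoking the assumption. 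Everything else — the combinatorial identity for when the indicators differ, the two Cauchy--Schwarz steps, and the constraint $\|\theta-\Theta_{\rm init}\|_2\le\Upsilon$ — is routine. Note the statement does not require the high-probability qualifier because it holds for every $\theta\in\mathcal D$ deterministically once Assumption~\ref{assu: state distribution regularity} is in force; the randomness over initialization only enters through the constant $c_1$.
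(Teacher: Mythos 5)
Your proposal follows essentially the same route as the paper's proof: the same indicator-disagreement decomposition, the same two Cauchy--Schwarz steps over the $m$ neurons, and the same application of Assumption~\ref{assu: state distribution regularity} with the normalized direction $x=[\Theta_{\rm init}]_v/\Vert[\Theta_{\rm init}]_v\Vert_2$, which is exactly where the factor $\sqrt{\mathbb{E}_{\mathcal{N}(0,I_d/d)}[1/\Vert\cdot\Vert_2^2]}$ in $c_1$ comes from (the Gaussian weight norms, not the norm of $(s,a)$). The bookkeeping issue you flag about the threshold depending on $\Vert(s,a)\Vert_2$ is dispatched in the paper by simply assuming $\Vert(s,a)\Vert_2\le 1$ without loss of generality, so no conditioning argument is needed.
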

\begin{proof}
We start from the definitions in Eq.~\eqref{eq:PNN def} and Eq.~\eqref{eq:nnl def},
\begin{align}
    &\mathbb{E}_{\nu_\pi}\Big[\Big(f\big((s,a);\theta\big)-\hat{f}\big((s,a);\theta\big)\Big)^2\Big]\nonumber\\
    =&\mathbb{E}_{\nu_\pi}\Big[\Big(\frac{1}{\sqrt{m}}\Big|\sum^m_{v=1}\Big(\big(\mathbbm{1}\{[\theta]_v^\top (s,a)>0\} - \mathbbm{1}\{[\Theta_{{\rm init}}]_v^\top (s,a)\nonumber\\
    &>0\}\big) b_v [\theta]_v^\top (s,a)\Big)\Big|\Big)^2\Big]\nonumber\\
    \leq& \frac{1}{m}\mathbb{E}_{\nu_\pi}\Big[\Big(\sum^m_{v=1}\Big(\Big|\mathbbm{1}\{[\theta]_v^\top (s,a)>0\} - \mathbbm{1}\{[\Theta_{{\rm init}}]_v^\top (s,a)\nonumber\\
    &>0\}\Big|  \Big|b_v\Big| \Big| [\theta]_v^\top (s,a)\Big|\Big)\Big)^2\Big]  \label{eq:le main 1}
\end{align}
The above inequality holds because the fact that $\vert \sum W \vert \leq  \sum \vert W \vert$, where $W = \big((\mathbbm{1}\{[\theta]_v^\top (s,a)>0\} - \mathbbm{1}\{[\Theta_{{\rm init}}]_v^\top (s,a)>0\}) b_v [\theta]_v^\top (s,a)\big)$. 
$\Theta_{{\rm init}}$ is defined in Eq.~\eqref{eq:Theta}.
Next, since $\mathbbm{1}\{[\Theta_{{\rm init}}]_v^\top (s,a)>0\} \neq \mathbbm{1}\{[\theta]_v^\top (s,a)>0\}$, we have,
\begin{align}
    |[\Theta_{{\rm init}}]_v^\top (s,a)| &\leq |[[\theta]_v^\top (s,a) - \Theta_{{\rm init}}]_v^\top (s,a)| \nonumber \\
    &\leq \Vert[\theta]_v-[\Theta_{{\rm init}}]_v\Vert_2 \label{eq:le aux 1},
\end{align}
where we obtain the last inequality from the Cauchy-Schwartz inequality. We also assume that $\Vert (s,a)\Vert_2 \leq 1$ without loss of generality~\cite{liu2019neural,wang2019neural}.
Eq.~\eqref{eq:le aux 1} further implies that,
\begin{align}
    &|\mathbbm{1}\{[\theta]_v^\top (s,a)>0\} - \mathbbm{1}\{[\Theta_{{\rm init}}]_v^\top (s,a)>0\}| \nonumber \\
    \leq& \mathbbm{1}\{|[\Theta_{{\rm init}}]_v^\top (s,a)|\leq \Vert[\theta]_v-[\Theta_{{\rm init}}]_v\Vert_2\} \label{eq:le aux 2}
\end{align}
Then plug Eq.~\eqref{eq:le aux 2} and the fact that $|b_v|\leq 1$ back to Eq.~\eqref{eq:le main 1}, we have the following,
\begin{align}
    &\mathbb{E}_{\nu_\pi}\Big[\Big(f\big((s,a);\theta\big)-\hat{f}\big((s,a);\theta\big)\Big)^2\Big]\nonumber\\
    \leq& \frac{1}{m}\mathbb{E}_{\nu_\pi}\bigg[\bigg(\sum^m_{v=1}\mathbbm{1}\Big\{\Big|[\Theta_{{\rm init}}]_v^\top (s,a)\Big|\leq \Big\Vert[\theta]_v-[\Theta_{{\rm init}}]_v\Big\Vert_2\Big\}\nonumber\\
    & \Big| [\theta]_v^\top (s,a)\Big|\bigg)^2\bigg] \nonumber \\
    \leq& \frac{1}{m}\mathbb{E}_{\nu_\pi}\bigg[\bigg(\sum^m_{v=1}\mathbbm{1}\Big\{\Big|[\Theta_{{\rm init}}]_v^\top (s,a)\Big|\leq \Big\Vert[\theta]_v-[\Theta_{{\rm init}}]_v\Big\Vert_2\Big\}\nonumber \\
    & \Big(\Big|\big([\theta]_v - [\Theta_{{\rm init}}]_v\big)^\top (s,a)\Big| + \Big| [\Theta_{{\rm init}}]_v^\top (s,a)\Big|\Big)\bigg)^2\bigg] \nonumber \\
    \leq& \frac{1}{m}\mathbb{E}_{\nu_\pi}\bigg[\bigg(\sum^m_{v=1}\mathbbm{1}\Big\{\Big|[\Theta_{{\rm init}}]_v^\top (s,a)\Big|\leq \Big\Vert[\theta]_v-[\Theta_{{\rm init}}]_v\Big\Vert_2\Big\}\nonumber \\
    & \Big(\Big\Vert[\theta]_v - [\Theta_{{\rm init}}]_v\Big\Vert_2 + \Big| [\Theta_{{\rm init}}]_v^\top (s,a)\Big|\Big)\bigg)^2\bigg] \nonumber \\
    \leq& \frac{1}{m}\mathbb{E}_{\nu_\pi}\bigg[\bigg(\sum^m_{v=1}\mathbbm{1}\Big\{\Big|[\Theta_{{\rm init}}]_v^\top (s,a)\Big|\leq \Big\Vert[\theta]_v-[\Theta_{{\rm init}}]_v\Big\Vert_2\Big\}\nonumber \\
    & 2\Big\Vert[\theta]_v - [\Theta_{{\rm init}}]_v\Big\Vert_2\bigg)^2\bigg] \label{eq:le main 2}
\end{align}
We obtain the second inequality by the fact that $|A|\leq |A-B|+|B|$. Then follow the Cauchy-Schwartz inequality and $\Vert (s,a)\Vert_2 \leq 1$ we have the third equality. By inserting Eq.~\eqref{eq:le aux 1} we achieve the fourth inequality. We continue Eq.~\eqref{eq:le main 2} by following the Cauchy-Schwartz inequality and plugging $\big\Vert[\theta] - [\Theta_{{\rm init}}]\big\Vert_2 \leq \Upsilon$, 
\begin{align}
     &\mathbb{E}_{\nu_\pi}\Big[\Big(f\big((s,a);\theta\big)-\hat{f}\big((s,a);\theta\big)\Big)^2\Big]\nonumber\\
    \leq& \frac{4\Upsilon^2}{m}\mathbb{E}_{\nu_\pi}\Big[\sum^m_{v=1}\mathbbm{1}\{|[\Theta_{{\rm init}}]_v^\top (s,a)|\leq \Vert[\theta]_v-[\Theta_{{\rm init}}]_v\Vert_2\}\Big]\nonumber \\
    =& \frac{4\Upsilon^2}{m}\sum^m_{v=1}P_{\nu_\pi}|[\Theta_{{\rm init}}]_v^\top (s,a)|\leq \Vert[\theta]_v-[\Theta_{{\rm init}}]_v\Vert_2)\nonumber \\
    \leq& \frac{4c\Upsilon^2}{m}\sum^m_{v=1}\frac{\Vert[\theta]_v-[\Theta_{{\rm init}}]_v\Vert_2}{\Vert \Theta_{{\rm init}}]_v\Vert_2}\nonumber \\
    \leq& \frac{4c\Upsilon^2}{m}\Big(\sum^m_{v=1}\Vert[\theta]_v-[\Theta_{{\rm init}}]_v\Vert_2^2\Big)^{-1/2}\Big(\sum^m_{v=1}\frac{1}{\Vert \Theta_{{\rm init}}]_v\Vert_2^2}\Big)^{-1/2}\nonumber \\
    \leq& \frac{4c_1 \Upsilon^3}{\sqrt{m}} \label{eq:le main 3}
\end{align}
We obtain the second inequality by imposing Assumption~\ref{assu: state distribution regularity} and the third by following the Cauchy-Schwartz inequality. Finally, we set $c_1 := c\sqrt{\mathbb{E}_{\mathcal{N}(0,I_d/d)}[1/\Vert (s,a)\Vert_2^2]} $. Thus, we complete the proof.
\end{proof}
In the $t$-th iterations of TD iteration, we denote the temporal difference terms w.r.t $\hat{f}((s,a);\theta_t)$ and $f((s,a);\theta_t)$ as
\begin{align}
\delta_t^0((s,a),(s,a)';\theta_t) &= \hat{f}((s,a)';\theta_t)-\gamma \hat{f}((s,a);\theta_t) \\
&- r_{s,a},\\
\delta_t^\theta((s,a),(s,a)';\theta_t) &= f((s,a)';\theta_t)-\gamma f((s,a);\theta_t) \\
&- r_{s,a}.
\end{align}
For notation simplicity. in the sequel we write $\delta_t^0((s,a),(s,a)';\theta_t)$ and $\delta_t^\theta((s,a),(s,a)';\theta_t)$ as $\delta_t^0$ and $\delta_t^\theta$. We further define the stochastic semi-gradient $g_t(\theta_t):=\delta_t^\theta \nabla_{\theta} f((s,a);\theta_t)$, its population mean $\bar{g}_t(\theta_t):=\mathbb{E}_{\nu_\pi}[g_t(\theta_t)]$. The local linearization of $\bar{g}_t(\theta_t)$ is $\hat{g}_t(\theta_t):=\mathbb{E}_{\nu_\pi}[\delta_t^0 \nabla_{\theta} \hat{f}((s,a);\theta_t)]$. We denote them as $g_t, \bar{g}_t, \hat{g}_t$ respectively for simplicity.
\begin{lemma}
\label{lem:linearization gradient error}
Under Assumption.~\ref{assu: state distribution regularity}, for all $\theta_t \in \mathcal{D}$, where $\mathcal{D} = \{\xi \in \mathbb{R}^{md}:\Vert\xi-\Theta_{\text{init}} \Vert_2 \leq \Upsilon \}$, it holds with probability of $1-\delta$ that,
\begin{align}
    &\Vert \bar{g}_t-\hat{g}_t\Vert_2 \nonumber\\
    =& \mathcal{O}\Big(\Upsilon^{3/2}m^{-1/4}\big(1+(m\log\frac{1}{\delta})^{-1/2}\big)+\Upsilon^{1/2}r_{\max} m^{-1/4}\Big)
\end{align}
\end{lemma}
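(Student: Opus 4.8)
The strategy is to split $\bar g_t-\hat g_t$ into the two independent sources of linearization error — the discrepancy between the network $f$ and its linearization $\hat f$ appearing \emph{inside} the temporal-difference residual, and the discrepancy between their \emph{gradients} $\nabla_\theta f$ and $\nabla_\theta\hat f$ — and to bound each in $L^2(\nu_\pi)$. Concretely, insert the intermediate object $\mathbb{E}_{\nu_\pi}[\delta_t^\theta\nabla_\theta\hat f]$ and write
\begin{align}
\bar g_t-\hat g_t=\mathbb{E}_{\nu_\pi}\big[\delta_t^\theta(\nabla_\theta f-\nabla_\theta\hat f)\big]+\mathbb{E}_{\nu_\pi}\big[(\delta_t^\theta-\delta_t^0)\nabla_\theta\hat f\big].
\end{align}
For the second term, Cauchy--Schwarz gives $\big\|\mathbb{E}_{\nu_\pi}[(\delta_t^\theta-\delta_t^0)\nabla_\theta\hat f]\big\|_2\le\|\delta_t^\theta-\delta_t^0\|_{\nu_\pi}\sup_{(s,a)}\|\nabla_\theta\hat f\|_2$, where $\|\nabla_\theta\hat f\|_2\le 1$ since each block $\nabla_{[\theta]_v}\hat f=m^{-1/2}b_v\mathbbm{1}\{[\Theta_{\rm init}]_v^\top(s,a)>0\}(s,a)$ with $\|(s,a)\|_2\le 1$; moreover $\delta_t^\theta-\delta_t^0=\big(f((s,a)';\theta_t)-\hat f((s,a)';\theta_t)\big)-\gamma\big(f((s,a);\theta_t)-\hat f((s,a);\theta_t)\big)$, so, using that in the stationary regime both $(s,a)$ and $(s,a)'$ are marginally $\nu_\pi$-distributed, $\|\delta_t^\theta-\delta_t^0\|_{\nu_\pi}\le(1+\gamma)\|f-\hat f\|_{\nu_\pi}=\mathcal{O}(\Upsilon^{3/2}m^{-1/4})$ by Lemma~\ref{lem:linearization error}.

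For the first term, Cauchy--Schwarz under $\nu_\pi$ bounds it by $\|\delta_t^\theta\|_{\nu_\pi}\cdot\|\nabla_\theta f-\nabla_\theta\hat f\|_{\nu_\pi}$. The factor $\|\delta_t^\theta\|_{\nu_\pi}=\mathcal{O}(\Upsilon+r_{\max})$ follows from $|f((s,a);\theta_t)|\lesssim\Upsilon$ for $\theta_t\in\mathcal D$ together with $|r_{s,a}|\le r_{\max}$. For the gradient difference, $\nabla_{[\theta]_v}f-\nabla_{[\theta]_v}\hat f$ is supported on the event that the activation of unit $v$ at $\theta_t$ differs from that at $\Theta_{\rm init}$, which (exactly as in the chain leading to Eq.~\eqref{eq:le aux 2}) forces $|[\Theta_{\rm init}]_v^\top(s,a)|\le\|[\theta_t]_v-[\Theta_{\rm init}]_v\|_2$; hence
\begin{align}
\mathbb{E}_{\nu_\pi}\big[\|\nabla_\theta f-\nabla_\theta\hat f\|_2^2\big]\le\frac1m\sum_{v=1}^m\mathbb{P}_{\nu_\pi}\big(|[\Theta_{\rm init}]_v^\top(s,a)|\le\|[\theta_t]_v-[\Theta_{\rm init}]_v\|_2\big).
\end{align}
Applying Assumption~\ref{assu: state distribution regularity}, then Cauchy--Schwarz over $v$ and $\|\theta_t-\Theta_{\rm init}\|_2\le\Upsilon$, this is at most $\tfrac{c\Upsilon}{m}\big(\sum_v\|[\Theta_{\rm init}]_v\|_2^{-2}\big)^{1/2}$; a concentration bound for $\sum_v\|[\Theta_{\rm init}]_v\|_2^{-2}$ around $m\,\mathbb{E}_{\mathcal{N}(0,I_d/d)}[\|(s,a)\|_2^{-2}]$ then yields, with probability $1-\delta$, $\|\nabla_\theta f-\nabla_\theta\hat f\|_{\nu_\pi}=\mathcal{O}\big(\Upsilon^{1/2}m^{-1/4}(1+(m\log(1/\delta))^{-1/2})\big)$. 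Multiplying by $\|\delta_t^\theta\|_{\nu_\pi}=\mathcal{O}(\Upsilon+r_{\max})$ and adding the two contributions gives $\|\bar g_t-\hat g_t\|_2=\mathcal{O}\big(\Upsilon^{3/2}m^{-1/4}(1+(m\log(1/\delta))^{-1/2})+\Upsilon^{1/2}r_{\max}m^{-1/4}\big)$, as claimed.

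\textbf{Main obstacle.} The delicate step is the high-probability control of $\sum_v\|[\Theta_{\rm init}]_v\|_2^{-2}$: the summands $1/\|[\Theta_{\rm init}]_v\|_2^2$ are heavy-tailed (inverse-chi-square type), so a naive sub-Gaussian bound is unavailable and one must instead use a truncation/Bernstein-type argument for the relevant tail, and this is precisely what introduces both the $\log(1/\delta)$ dependence and the $1-\delta$ confidence in the statement. A secondary bookkeeping point is the measure alignment — that $\|f-\hat f\|$ and $\|\nabla_\theta f-\nabla_\theta\hat f\|$ are controlled in $L^2(\nu_\pi)$ while $\delta_t^\theta$ mixes the current and next state-action pairs — which is handled by invoking stationarity of $\nu_\pi$ so that the next-state marginal is again $\nu_\pi$.
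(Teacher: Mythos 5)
Your argument is correct and shares the paper's overall skeleton --- insert an intermediate cross term so that $\bar g_t-\hat g_t$ splits into a piece carrying the residual linearization error $\delta_t^\theta-\delta_t^0$ and a piece carrying the gradient linearization error $\nabla_\theta f-\nabla_\theta\hat f$, bound the first via Lemma~\ref{lem:linearization error} and the second via the activation-flip event together with Assumption~\ref{assu: state distribution regularity} --- but it diverges in one substantive step. To control the scalar factor multiplying $\Vert\nabla_\theta f-\nabla_\theta\hat f\Vert_{\nu_\pi}$ you bound $\Vert\delta_t^\theta\Vert_{\nu_\pi}=\mathcal O(\Upsilon+r_{\max})$ directly from $\theta_t\in\mathcal D$ and $|r_{s,a}|\le r_{\max}$, whereas the paper pairs the gradient difference with $\delta_t^0$ and bounds $\mathbb E_{\nu_\pi}[(\delta_t^0)^2]$ by a detour through $\hat f(\cdot\,;\theta_{\pi^*})$ and $Q_\pi$, invoking the projection-error Lemma~\ref{lem:projection error}; that lemma is where the paper's $\log(1/\delta)$ and its ``with probability $1-\delta$'' actually originate. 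Your route is more elementary and gives a cleaner leading-order bound, but it must then manufacture the high-probability statement elsewhere, which you do by concentrating $\sum_v\Vert[\Theta_{\rm init}]_v\Vert_2^{-2}$ around $m\,\mathbb E_{\mathcal N(0,I_d/d)}[1/\Vert w\Vert_2^2]$ --- a step the paper silently skips (it replaces the random sum by its expectation with no argument) and which, as you correctly flag, requires a truncation or Bernstein-type argument because the summands are heavy-tailed. Two bookkeeping caveats: the claim $|f((s,a);\theta_t)|\lesssim\Upsilon$ ignores the initialization value $f_0(s,a)$, which needs its own standard high-probability bound or the antisymmetric-initialization convention $f_0\equiv 0$; and the correction factor you report, $(m\log(1/\delta))^{-1/2}$, merely copies the lemma statement, which is almost certainly a typo for $(\log(1/\delta)/m)^{1/2}$ --- both your concentration step and the paper's use of Lemma~\ref{lem:projection error} produce a term that grows with $\log(1/\delta)$, not one that shrinks.
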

\begin{proof}
By the definition of $\bar{g}_t$ and $\hat{g}_t$, we have
\begin{align} 
&\big\Vert \bar{g}_t-\hat{g}_t\big\Vert_2^2\nonumber\\
=&\big\Vert \mathbb{E}_{\nu_\pi}[\delta_t^\theta \nabla_{\theta} f((s,a);\theta_t)-\delta_t^0 \nabla_{\theta} \hat{f}((s,a);\theta_t)]\big\Vert_2^2\nonumber\\
=&\big\Vert \mathbb{E}_{\nu_\pi}[(\delta_t^\theta-\delta_t^0) \nabla_{\theta} f((s,a);\theta_t)+\delta_t^0 (\nabla_{\theta} f((s,a);\theta_t)-\nonumber\\
&\nabla_{\theta} \hat{f}((s,a);\theta_t))]\big\Vert_2^2\nonumber\\
\leq& 2\mathbb{E}_{\nu_\pi}\big[(\delta_t^\theta-\delta_t^0)^2 \Vert\nabla_{\theta} f((s,a);\theta_t)\Vert_2^2\big] + \nonumber\\
&2\mathbb{E}_{\nu_\pi}\big[\big(|\delta_t^0| \Vert \nabla_{\theta} f((s,a);\theta_t)-\nabla_{\theta} \hat{f}((s,a);\theta_t))\Vert_2\big)^2\big] \\\label{eq:ge main 1}
\end{align}
\textit{We obtain the inequality because $(A+B)^2 \leq 2A^2+2B^2$.} We first upper bound $\mathbb{E}_{\nu_\pi}\big[(\delta_t^\theta-\delta_t^0)^2 \Vert\nabla_{\theta} f((s,a);\theta_t)\Vert_2^2\big]$ in Eq.~\eqref{eq:ge main 1}. Since $\Vert (s,a)\Vert_2 \leq 1$, we have $\Vert\nabla_{\theta} f((s,a);\theta_t)\Vert_2 \leq 1$. Then by definition, we have the following first inequality,
\begin{align}
    &\mathbb{E}_{\nu_\pi}\Big[\Big(\delta_t^\theta-\delta_t^0\Big)^2 \Big\Vert\nabla_{\theta} f((s,a);\theta_t)\Big\Vert_2^2\Big] \nonumber\\
    \leq& \mathbb{E}_{\nu_\pi}\Big[\Big(f\big((s,a);\theta_t\big)-\hat{f}\big((s,a);\theta_t\big)-\gamma\Big(f\big((s',a');\theta_t\big)\nonumber\\
    &-\hat{f}\big((s',a');\theta_t)\big)\Big)\Big)^2\Big] \nonumber\\
    \leq& \mathbb{E}_{\nu_\pi}\Big[\Big(\Big|f\big((s,a);\theta_t\big)-\hat{f}\big((s,a);\theta_t\big)\Big|+\Big|f\big((s',a');\theta_t\big)\\
    &-\hat{f}\big((s',a');\theta_t\big)\Big|\Big)^2\Big] \nonumber\\
    \leq& 2\mathbb{E}_{\nu_\pi}\Big[\Big(f\big((s,a);\theta_t\big)-\hat{f}\big((s,a);\theta_t\big)\Big)^2\Big]+2\mathbb{E}_{\nu_\pi}\nonumber\\
    &\Big[\Big(f\big((s',a');\theta_t\big)-\hat{f}\big((s',a');\theta_t\big)\Big)^2\Big] \nonumber\\
    \leq& 4\mathbb{E}_{\nu_\pi}\Big[\Big(f\big((s,a);\theta_t\big)-\hat{f}\big((s,a);\theta_t\big)\Big)^2\Big]\leq\frac{16c_1 \Upsilon^3}{\sqrt{m}} \label{eq:ge aux 1}
\end{align}
We obtain the second inequality by $|\gamma| \leq 1$, \textit{then obtain the third inequality by the fact that $(A+B)^2 \leq 2A^2+2B^2$.} We reach the final step by inserting Lemma~\ref{lem:linearization error}. We then proceed to upper bound $\mathbb{E}_{\nu_\pi}\big[|\delta_t^0| \Vert \nabla_{\theta} f((s,a);\theta_t)-\nabla_{\theta} \hat{f}((s,a);\theta_t))\Vert_2\big]$. From H\"{o}lder's inequality, we have, 
\begin{align}
    &\mathbb{E}_{\nu_\pi}\big[\big(|\delta_t^0| \Vert \nabla_{\theta} f((s,a);\theta_t)-\nabla_{\theta} \hat{f}((s,a);\theta_t))\Vert_2\big)^2\big] \nonumber \\
    \leq &\mathbb{E}_{\nu_\pi}\big[(\delta_t^0)^2\big] \mathbb{E}_{\nu_\pi}\big[\Vert \nabla_{\theta} f((s,a);\theta_t)-\nabla_{\theta} \hat{f}((s,a);\theta_t))\Vert_2^2\big]\nonumber\\ \label{eq:ge aux 2}
\end{align}
We first derive an upper bound for first term in Eq.\eqref{eq:ge aux 2}, starting from its definition,
\begin{align}
    &\mathbb{E}_{\nu_\pi}\big[(\delta_t^0)^2\big]\nonumber\\
    =&\mathbb{E}_{\nu_\pi}\Big[\big[\hat{f}\big((s',a');\theta_t\big)-\gamma \hat{f}\big((s,a);\theta_t\big) - r_{s,a}\big]^2\Big]\nonumber\\
    \leq&3\mathbb{E}_{\nu_\pi}\Big[\big(\hat{f}\big((s',a');\theta_t\big)\big)^2\Big]+3\mathbb{E}_{\nu_\pi}\Big[\big(\gamma \hat{f}\big((s,a);\theta_t\big)\big)^2\Big] \\
    &+3\mathbb{E}_{\nu_\pi}\Big[ r^2_{s,a}\Big]\nonumber\\
    \leq& 6\mathbb{E}_{\nu_\pi}\Big[\big(\hat{f}\big((s,a);\theta_t\big)\big)^2\Big]+3r_{\max}^2 \nonumber\\
    = & 6\mathbb{E}_{\nu_\pi}\Big[\big(\hat{f}\big((s,a);\theta_t\big)-\hat{f}\big((s,a);\theta_{\pi^*}\big)+\hat{f}\big((s,a);\theta_{\pi^*}\big)\nonumber\\
     &-Q_\pi+Q_\pi\big)^2\Big]+3r_{\max}^2 \nonumber\\
     \leq & 18\mathbb{E}_{\nu_\pi}\Big[\big(\hat{f}\big((s,a);\theta_t\big)-\hat{f}\big((s,a);\theta_{\pi^*}\big)\big)^2\Big] + 18\mathbb{E}_{\nu_\pi}\nonumber\\
     &\Big[\big(\hat{f}\big((s,a);\theta_{\pi^*}\big)-Q_\pi\big)^2\Big] + 18\mathbb{E}_{\nu_\pi}\Big[\big(Q_\pi\big)^2\Big]+ 3r_{\max}^2 \nonumber\\
     \leq& 72\Upsilon^2 + 18\mathbb{E}_{\nu_\pi}\Big[\big(\hat{f}\big((s,a);\theta_{\pi^*}\big)-Q_\pi\big)^2\Big]\nonumber\\
     &+ 21(1-\gamma)^{-2}r_{\max}^2 \label{eq:ge aux 3}
\end{align}
\textit{We obtain the first and the third inequality by the fact that $(A+B+C)^2 \leq 3A^2+3B^2+3C^2$.}
Recall $r_{\max}$ is the boundary for reward function $r$, which leads to the second inequality. We obtain the last inequality in Eq.~\eqref{eq:ge aux 3} following the fact that $|\hat{f}((s,a);\theta_t)-\hat{f}((s,a);\theta_{\pi^*})| \leq \Vert \theta_t-\theta_{\pi^*}\Vert \leq 2\Upsilon$ and $Q_\pi \leq (1-\gamma)^{-1}r_{\max}$. Since $\bar{\mathcal{F}}_{\Upsilon,m} \subset \mathcal{F}_{\Upsilon,m}$, by Lemma~\ref{lem:projection error}, we have,
\begin{align}
    {E}_{\nu_\pi}\Big[\Big(\hat{f}\big((s,a);\theta_{\pi^*}\big)-Q_\pi\Big)^2\Big] \leq \frac{\Upsilon^2\big(1+\sqrt{2\log(1/\delta)}\big)^2}{m} \\\label{eq:ge aux 4}
\end{align}
Combine Eq.~\eqref{eq:ge aux 3} and Eq.~\eqref{eq:ge aux 4}, 
we have with probability of $1-\delta$, 
\begin{align}
    &\mathbb{E}_{\nu_\pi}\big[(\delta_t^0)^2\big]\nonumber\\
    \leq& 72\Upsilon^2(1+\frac{\log(1/\delta)}{m})+ 21(1-\gamma)^{-2}r_{\max}^2 \label{eq:ge main 2}
\end{align}
Lastly we have 
\begin{align}
    &\mathbb{E}_{\nu_\pi}\big[\Vert \nabla_{\theta} f((s,a);\theta_t)-\nabla_{\theta} \hat{f}((s,a);\theta_t))\Vert_2^2\big]\nonumber\\
    =& \mathbb{E}_{\nu_\pi}\Big[\Big(\frac{1}{m}\sum^m_{v=1}\big(\mathbbm{1}\{[\theta]_v^\top (s,a)>0\} - \mathbbm{1}\{[\Theta_{{\rm init}}]_v^\top (s,a)\nonumber\\
    &>0\})^2 (b_v)^2 \Vert (s,a)\Vert_2^2\Big)\Big]\nonumber\\
    \leq& \mathbb{E}_{\nu_\pi}\Big[\frac{1}{m}\sum^m_{v=1}\big(\mathbbm{1}\{|[\Theta_{{\rm init}}]_v^\top (s,a)|\leq \Vert[\theta]_v-[\Theta_{{\rm init}}]_v\Vert_2\}\big)\Big]\nonumber\\
    \leq& \frac{c_1 \Upsilon}{\sqrt{m}} \label{eq:ge aux 5}
    \end{align}
We obtain the first inequality by following Eq.~\eqref{eq:le aux 2} and the fact that $|b_v| \leq 1$ and $\Vert (s,a)\Vert_2  \leq 1$. Then for the rest, we follow the similar argument in Eq.~\eqref{eq:le main 3}. To finish the proof, we plug Eq.~\eqref{eq:ge aux 1}, Eq.~\eqref{eq:ge main 2} and Eq.~\eqref{eq:ge aux 5} back to Eq.~\eqref{eq:ge main 1},
\begin{align}
    &\Vert \bar{g}_t-\hat{g}_t\Vert_2^2\nonumber\\
    \leq & 2\Big(\frac{16c_1 \Upsilon^3}{\sqrt{m}} + \Big(72\Upsilon^2(1+\frac{\log(1/\delta)}{m})+ 21(1-\gamma)^{-2}r_{\max}^2\Big)\nonumber\\
    &\frac{c_1 \Upsilon}{\sqrt{m}}\Big) \nonumber\\
    =&\frac{176 c_1 \Upsilon^3}{\sqrt{m}} + \frac{144 c_1 \Upsilon^3\log(1/\delta)}{m^{3/2}}+ \frac{42 c_1 \Upsilon r_{\max}^2}{(1-\gamma)^{-2}\sqrt{m}}
\end{align}
Then we have,
\begin{align}
    &\Vert \bar{g}_t-\hat{g}_t\Vert_2\nonumber\\
    \leq& \sqrt{\frac{176 c_1 \Upsilon^3}{\sqrt{m}} + \frac{144 c_1 \Upsilon^3\log(1/\delta)}{m^{3/2}}+ \frac{42 c_1 \Upsilon r_{\max}^2}{(1-\gamma)^{-2}\sqrt{m}}} \nonumber\\
    \leq& \sqrt{\frac{176 c_1 \Upsilon^3}{\sqrt{m}}} + \sqrt{\frac{144 c_1 \Upsilon^3\log(1/\delta)}{m^{3/2}}}+ \sqrt{\frac{42 c_1 \Upsilon r_{\max}^2}{(1-\gamma)^{-2}\sqrt{m}}} \nonumber\\
    =&\mathcal{O}\Big(\Upsilon^{3/2}m^{-1/4}\big(1+(m\log\frac{1}{\delta})^{-1/2}\big)+\Upsilon^{1/2}r_{\max} m^{-1/4}\Big)
\end{align}
\end{proof}
Next, we provide the following lemma to characterize the variance of $g_t$.
\begin{lemma}
\label{lem:variance of SUV}
(Variance of the Stochastic Update Vector)\cite{liu2019neural}.There exists a constant $\xi_g^2=\mathcal{O}(\Upsilon^2)$ independent of $t$. such that for any $t \leq T$, it holds that
\begin{align}
    \mathbb{E}_{\nu_\pi}[\Vert g_t(\theta_t)-\bar{g}_t(\theta_t)\Vert_2^2] \leq \xi_g^2
\end{align}
\end{lemma}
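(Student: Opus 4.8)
The plan is to dominate the variance by the raw second moment and then bound the squared temporal-difference error uniformly over $t$. Since $\bar{g}_t(\theta_t)=\mathbb{E}_{\nu_\pi}[g_t(\theta_t)]$ is precisely the mean of $g_t(\theta_t)$, one has $\mathbb{E}_{\nu_\pi}[\Vert g_t(\theta_t)-\bar{g}_t(\theta_t)\Vert_2^2]\le\mathbb{E}_{\nu_\pi}[\Vert g_t(\theta_t)\Vert_2^2]$. By definition $g_t(\theta_t)=\delta_t^\theta\nabla_\theta f((s,a);\theta_t)$, and because $\Vert(s,a)\Vert_2\le1$ and $b_v\in\{-1,1\}$ the gradient obeys $\Vert\nabla_\theta f((s,a);\theta_t)\Vert_2\le1$ (each of the $m$ coordinate blocks has norm at most $m^{-1/2}\Vert(s,a)\Vert_2$, as already used in the proof of Lemma~\ref{lem:linearization gradient error}). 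Hence it suffices to bound $\mathbb{E}_{\nu_\pi}[(\delta_t^\theta)^2]$ by a quantity of order $\Upsilon^2$ that is free of $t$.

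First I would expand $\delta_t^\theta=f((s',a');\theta_t)-\gamma f((s,a);\theta_t)-r_{s,a}$ and apply $(A+B+C)^2\le3(A^2+B^2+C^2)$ together with $|\gamma|\le1$ and $|r_{s,a}|\le r_{\max}$, reducing the task to an $L^2(\nu_\pi)$ bound on $f(\cdot;\theta_t)$. This is essentially the computation already done for $\delta_t^0$ in Eq.~\eqref{eq:ge aux 3}: write $f=\hat f+(f-\hat f)$; control the linearization remainder via Lemma~\ref{lem:linearization error}, which gives $\mathbb{E}_{\nu_\pi}[(f-\hat f)^2]\le 4c_1\Upsilon^3/\sqrt m$; and control $\mathbb{E}_{\nu_\pi}[\hat f((s,a);\theta_t)^2]$ by inserting $\pm\hat f((s,a);\theta_{\pi^*})\pm Q_\pi$, using $|\hat f((s,a);\theta_t)-\hat f((s,a);\theta_{\pi^*})|\le\Vert\theta_t-\theta_{\pi^*}\Vert_2\le2\Upsilon$, the projection bound of Lemma~\ref{lem:projection error} on $\mathbb{E}_{\nu_\pi}[(\hat f((s,a);\theta_{\pi^*})-Q_\pi)^2]$, and $Q_\pi\le(1-\gamma)^{-1}r_{\max}$. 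Collecting terms gives $\mathbb{E}_{\nu_\pi}[(\delta_t^\theta)^2]=\mathcal{O}(\Upsilon^2+\Upsilon^3m^{-1/2}+r_{\max}^2(1-\gamma)^{-2})$, which --- treating $r_{\max}$ and $1-\gamma$ as absolute constants and taking $m$ large enough that $\Upsilon^3m^{-1/2}$ is absorbed into $\mathcal{O}(\Upsilon^2)$ --- is $\mathcal{O}(\Upsilon^2)$; I would then define $\xi_g^2$ to be this bound.

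The step that needs genuine care, and the reason $\xi_g^2$ is \emph{independent of $t$}, is that every inequality above invokes only $\theta_t\in\mathcal{D}$, i.e.\ $\Vert\theta_t-\Theta_{\rm init}\Vert_2\le\Upsilon$, which the TD update guarantees at every iteration through its projection onto $\mathcal{D}$; thus no $t$-dependence ever enters. A secondary subtlety is handling the next-state-action term $\mathbb{E}[f((s',a');\theta_t)^2]$: this is immediate when $\nu_\pi$ is the stationary distribution, and otherwise follows from the density regularity of the transition kernel in Assumption~\ref{assu: state distribution regularity}. Beyond these bookkeeping points the argument is routine and follows \cite{liu2019neural}.
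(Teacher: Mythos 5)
The paper offers no proof of this lemma at all --- it is imported verbatim from \cite{liu2019neural} with the remark that ``a detailed proof can be found'' there --- so there is no in-paper argument to compare yours against; what you have written is a self-contained reconstruction, and it is essentially sound. The skeleton (variance dominated by the raw second moment, $\Vert\nabla_\theta f\Vert_2\le 1$ from $\Vert(s,a)\Vert_2\le 1$, and a $t$-uniform bound on $\mathbb{E}_{\nu_\pi}[(\delta_t^\theta)^2]$ that uses only the projection $\theta_t\in\mathcal{D}$) is exactly the right mechanism, and your observation that $t$-independence comes solely from $\Vert\theta_t-\Theta_{\rm init}\Vert_2\le\Upsilon$ is the correct key point. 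Two caveats are worth flagging. First, routing the bound on $\mathbb{E}_{\nu_\pi}[\hat f(\cdot;\theta_t)^2]$ through Lemma~\ref{lem:projection error} imports a ``with probability $1-\delta$ over the initialization'' qualifier that the lemma statement does not carry; this detour is unnecessary, since $f(\cdot;\theta)$ is $1$-Lipschitz in $\theta$, so $|f((s,a);\theta_t)|\le |f_0(s,a)|+\Upsilon$ directly, and the initial-network term is handled as a separate $\mathcal{O}(1)$ constant (exactly as the paper does with $U$ in Lemma~\ref{lem:stepwise energy difference}) or vanishes under the usual symmetric initialization. This also removes the need for Lemma~\ref{lem:linearization error} and the condition $m\gtrsim\Upsilon^2$ you invoke to absorb $\Upsilon^3m^{-1/2}$. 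Second, your final constant is really $\mathcal{O}(\Upsilon^2+r_{\max}^2(1-\gamma)^{-2}+\mathbb{E}[f_0^2])$; calling this $\mathcal{O}(\Upsilon^2)$ is consistent with the paper's convention of suppressing $r_{\max}$ and $(1-\gamma)^{-1}$, but you are right to make the absorption explicit rather than silent.
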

A detailed proof can be found in \cite{liu2019neural}. Now we provide the proof for Lemma~\ref{lem: policy evaluation error}.
\begin{proof}
\begin{align}
    &\big\Vert \theta_{t+1}-\theta_{\pi^*}\big\Vert_2^2\nonumber\\
     = &\big\Vert\Pi_\mathcal{D}(\theta_t-\eta g_t(\theta_t))-\Pi_\mathcal{D}(\theta_{\pi^*}-\eta \hat{g}_t(\theta_{\pi^*}))\big\Vert_2^2\nonumber\\
     \leq &\big\Vert (\theta_t-\theta_{\pi^*}) -\eta \big(g_t(\theta_t)-\hat{g}_t(\theta_{\pi^*})\big)\big\Vert_2^2\nonumber\\
     =& \big\Vert \theta_t-\theta_{\pi^*}\big\Vert_2^2 - 2\eta\big( g_t(\theta_t)-\hat{g}_t(\theta_{\pi^*})\big)^\top\big(\theta_t-\theta_{\pi^*}\big)\nonumber\\
     & + \eta^2 \big\Vert g_t(\theta_t)-\hat{g}_t(\theta_{\pi^*})\big\Vert_2^2 \label{eq:pe main 1}
\end{align}
\textit{The inequality holds due to the definition of $\Pi_\mathcal{D}$.} We first upper bound $\big\Vert g_t(\theta_t)-\hat{g}_t(\theta_{\pi^*})\big\Vert_2^2$ in Eq.~\eqref{eq:pe main 1},
\begin{align}
    &\big\Vert g_t(\theta_t)-\hat{g}_t(\theta_{\pi^*})\big\Vert_2^2\nonumber\\
    =&\big\Vert g_t(\theta_t)-\bar{g}_t(\theta_t) +\bar{g}_t(\theta_t)-\hat{g}_t(\theta_t)+\hat{g}_t(\theta_t)- \hat{g}_t(\theta_{\pi^*})\big\Vert_2^2\nonumber\\
    \leq&3\Big(\big\Vert g_t(\theta_t)-\bar{g}_t(\theta_t)\big\Vert_2^2 +\big\Vert\bar{g}_t(\theta_t)-\hat{g}_t(\theta_t)\big\Vert_2^2+\nonumber\\ &\big\Vert\hat{g}_t(\theta_t)-\hat{g}_t(\theta_{\pi^*})\big\Vert_2^2\Big) \label{eq:pe aux 1}
\end{align}
\textit{The inequality holds due to fact that $(A+B+C)^2 \leq 3A^2+3B^2+3C^2$.} Two of the terms on the right hand side of Eq.~\eqref{eq:pe aux 1} are characterized in Lemma~\ref{lem:linearization gradient error} and Lemma~\ref{lem:variance of SUV}. We therefore characterize the remaining term,
\begin{align}
    &\big\Vert\hat{g}_t(\theta_t)-\hat{g}_t(\theta_{\pi^*})\big\Vert_2^2\nonumber\\
    =&\mathbb{E}_{\nu_\pi}\Big[\big(\delta_t^0(\theta_t)-\delta_t^0(\theta_{\pi^*})\big)^2\big\Vert\nabla_{\theta}\hat{f}\big((s,a);\theta_t\big)\big\Vert_2^2\Big]\nonumber\\
    \leq&\mathbb{E}_{\nu_\pi}\bigg[\bigg(\Big(\hat{f}\big((s,a);\theta_t\big)-\hat{f}\big((s,a);\theta_{\pi^*}\big)\Big)-\gamma\Big(\hat{f}\big((s',a');\nonumber\\
    &\theta_t\big)-\hat{f}\big((s',a');\theta_{\pi^*}\big)\Big)\bigg)^2\bigg]\nonumber\\
    \leq&\mathbb{E}_{\nu_\pi}\Big[\Big(\hat{f}\big((s,a);\theta_t\big)-\hat{f}\big((s,a);\theta_{\pi^*}\big)\Big)^2\Big]+2\gamma\mathbb{E}_{\nu_\pi}\nonumber\\
    &\Big[\Big(\hat{f}\big((s',a');\theta_t\big)-\hat{f}\big((s',a');\theta_{\pi^*}\big)\Big)\Big(\hat{f}\big((s,a);\theta_t\big)\nonumber\\
    &-\hat{f}\big((s,a);\theta_{\pi^*}\big)\Big)\Big]\nonumber\\
    &+\gamma^2\mathbb{E}_{\nu_\pi}\Big[\Big(\hat{f}\big((s',a');\theta_t\big)-\hat{f}\big((s',a');\theta_{\pi^*}\big)\Big)^2\Big] \label{eq:pe aux 2}
\end{align}
We obtain the first inequality by the fact that $\Vert\nabla_{\theta}\hat{f}((s,a);\theta_t)\Vert_2 \leq 1$. Then we use the fact that $(s, a)$ and $(s',a')$ have the same marginal distribution as well as $\gamma < 1$ for the second inequality. Follow the Cauchy-Schwarz inequality and the fact that $(s,a)$ and $(s',a')$ have the same marginal distribution, we have
\begin{align}
    &\mathbb{E}_{\nu_\pi}\Big[\Big(\hat{f}\big((s',a');\theta_t\big)-\hat{f}\big((s',a');\theta_{\pi^*}\big)\Big)\Big(\hat{f}\big((s,a);\theta_t\big)-\nonumber \\
    &\hat{f}\big((s,a);\theta_{\pi^*}\big)\Big)\Big]\nonumber \\
    \leq  &\mathbb{E}_{\nu_\pi}\Big[\Big(\hat{f}\big((s',a');\theta_t\big)-\hat{f}\big((s',a');\theta_{\pi^*}\big)\Big)\Big]\mathbb{E}_{\nu_\pi}\nonumber \\
    &\Big[\Big(\hat{f}\big((s,a);\theta_t\big)-\hat{f}\big((s,a);\theta_{\pi^*}\big)\Big)\Big]\nonumber \\
    =&\mathbb{E}_{\nu_\pi}\Big[\Big(\hat{f}\big((s',a');\theta_t\big)-\hat{f}\big((s',a');\theta_{\pi^*}\big)\Big)^2\Big] \label{eq:pe aux 4}
\end{align}
We plug Eq.~\eqref{eq:pe aux 4} back to Eq.~\eqref{eq:pe aux 2},
\begin{align}
    &\big\Vert\hat{g}_t(\theta_t)-\hat{g}_t(\theta_{\pi^*})\big\Vert_2^2 \nonumber \\
    \leq& (1+\gamma)^2\mathbb{E}_{\nu_\pi}\Big[\Big(\hat{f}\big((s,a);\theta_t\big)-\hat{f}\big((s,a);\theta_{\pi^*}\big)\Big)^2\Big]. \label{eq:pe aux 5}
\end{align}
Next, we upper bound $\big( g_t(\theta_t)-\hat{g}_t(\theta_{\pi^*})\big)^\top\big(\theta_t-\theta_{\pi^*}\big)$. We have,
\begin{align}
    &\big( g_t(\theta_t)-\hat{g}_t(\theta_{\pi^*})\big)^\top\big(\theta_t-\theta_{\pi^*}\big) \nonumber\\
    =&\big( g_t(\theta_t)-\bar{g}_t(\theta_t))\big)^\top\big(\theta_t-\theta_{\pi^*}\big) + \big( \bar{g}_t(\theta_t)-\hat{g}_t(\theta_t)\big)^\top\nonumber\\
    &\big(\theta_t-\theta_{\pi^*}\big) + \big( \hat{g}_t(\theta_t)-\hat{g}_t(\theta_{\pi^*})\big)^\top\big(\theta_t-\theta_{\pi^*}\big) \label{eq:pe aux 3}
\end{align}
One term on the right hand side of Eq.~\eqref{eq:pe aux 3} are characterized by Lemma~\ref{lem:variance of SUV}. We continue to characterize the remaining terms. First, by H\"{o}lder's inequality, we have
\begin{align}
    &\big( \bar{g}_t(\theta_t)-\hat{g}_t(\theta_t)\big)^\top\big( \theta_t-\theta_{\pi^*}\big) \nonumber\\
    \geq& -\big\Vert \bar{g}_t(\theta_t)-\hat{g}_t(\theta_t)\big\Vert_2\big\Vert  \theta_t-\theta_{\pi^*}\big\Vert_2 \nonumber\\
    \geq& -2\Upsilon \Vert \bar{g}_t(\theta_t)-\hat{g}_t(\theta_t)\big\Vert_2 \label{eq:pe aux 6}
\end{align}
We obtain the second inequality since $\big\Vert  \theta_t-\theta_{\pi^*}\big\Vert_2 \leq 2\Upsilon$ by definition. For the last term,
\begin{align}
    &\big( \hat{g}_t(\theta_t)-\hat{g}_t(\theta_{\pi^*})\big)^\top\big(\theta_t-\theta_{\pi^*}\big) \nonumber\\
    =&\mathbb{E}_{\nu_\pi}\bigg[\bigg(\Big(\hat{f}\big((s,a);\theta_t\big)-\hat{f}\big((s,a);\theta_{\pi^*}\big)\Big)-\gamma\Big(\hat{f}\big((s',a');\theta_t\big)\nonumber\\
    &-\hat{f}\big((s',a');\theta_{\pi^*}\big)\Big)\bigg)\Big(\nabla_{\theta}\hat{f}\big((s,a);\theta_t\big)\Big)^\top\Big(\theta_t-\theta_{\pi^*}\Big)\bigg] \nonumber\\
    =&\mathbb{E}_{\nu_\pi}\bigg[\bigg(\Big(\hat{f}\big((s,a);\theta_t\big)-\hat{f}\big((s,a);\theta_{\pi^*}\big)\Big)-\gamma\Big(\hat{f}\big((s',a');\theta_t\big)\nonumber\\
    &-\hat{f}\big((s',a');\theta_{\pi^*}\big)\Big)\bigg)\Big(\hat{f}\big((s,a);\theta_t\big)-\hat{f}\big((s,a);\theta_{\pi^*}\big)\Big)\bigg]\nonumber\\
    \geq& \mathbb{E}_{\nu_\pi}\bigg[\bigg(\Big(\hat{f}\big((s,a);\theta_t\big)-\hat{f}\big((s,a);\theta_{\pi^*}\big)\Big)\bigg)^2\bigg]-\nonumber\\
    &\gamma\mathbb{E}_{\nu_\pi}\bigg[\bigg(\Big(\hat{f}\big((s,a);\theta_t\big)-\hat{f}\big((s,a);\theta_{\pi^*}\big)\Big)\bigg)^2\bigg]\nonumber\\
    =& (1-\gamma)\mathbb{E}_{\nu_\pi}\Big[\Big(\hat{f}\big((s,a);\theta_t\big)-\hat{f}\big((s,a);\theta_{\pi^*}\big)\Big)^2\Big], \label{eq:pe aux 7}
\end{align}
where the inequality follows from Eq.~\eqref{eq:pe aux 4}. Combine Eqs.~\eqref{eq:pe main 1}, \eqref{eq:pe aux 1}, \eqref{eq:pe aux 5}, \eqref{eq:pe aux 3}, \eqref{eq:pe aux 6} and ~\eqref{eq:pe aux 7}, we have, 
\begin{align}
    &\big\Vert \theta_{t+1}-\theta_{\pi^*}\big\Vert_2^2\nonumber\\
    \leq& \big\Vert \theta_t-\theta_{\pi^*}\big\Vert_2^2 -\big(2\eta(1-\gamma)-3\eta^2(1+\gamma)^2\big)\nonumber\\
    &\mathbb{E}_{\nu_\pi}\Big[\Big(\hat{f}\big((s,a);\theta_t\big)-\hat{f}\big((s,a);\theta_{\pi^*}\big)\Big)^2\Big]\nonumber\\
    & + 3\eta^2\Vert \bar{g}_t-\hat{g}_t\Vert_2^2 +4\eta \Upsilon \Vert \bar{g}_t-\hat{g}_t\Vert_2 + 4\Upsilon\eta|\xi_g| \nonumber\\
    &+ 3\eta^2\xi_g^2 \label{eq:pe main 2}
\end{align}
We then bound the error terms by rearrange Eq.~\eqref{eq:pe main 2}. First, we have, with probability of $1-\delta$,
\begin{align}
    &\mathbb{E}_{\nu_\pi}\Big[\Big(f\big((s,a);\theta_t\big)-\hat{f}\big((s,a);\theta_{\pi^*}\big)\Big)^2\Big] \nonumber \\
    =&\mathbb{E}_{\nu_\pi}\Big[\Big(f\big((s,a);\theta_t\big)-\hat{f}\big((s,a);\theta_t\big)+\hat{f}\big((s,a);\theta_t\big)\nonumber \\
    &-\hat{f}\big((s,a);\theta_{\pi^*}\big)\Big)^2\Big]\nonumber \\
    \leq &2\mathbb{E}_{\nu_\pi}\Big[\Big(f\big((s,a);\theta_t\big)-\hat{f}\big((s,a);\theta_t\big)\Big)^2+\Big(\hat{f}\big((s,a);\theta_t\big)\nonumber \\
    &-\hat{f}\big((s,a);\theta_{\pi^*}\big)\Big)^2\Big]\nonumber \\
    \leq &\big(\eta(1-\gamma)-1.5\eta^2(1+\gamma)^2\big)^{-1}\Big(\big\Vert \theta_t-\theta_{\pi^*}\big\Vert_2^2 \nonumber \\
    &- \Vert \theta_{t+1}-\theta_{\pi^*}\big\Vert_2^2 + 4\Upsilon\eta|\xi_g| + 3\eta^2\xi_g^2\Big) + \epsilon_g \label{eq:pe main 3}
\end{align}
where 
\begin{align}
    \epsilon_g &= \mathcal{O}(\Upsilon^{3}m^{-1/2}\log(1/\delta)+\Upsilon^{5/2}m^{-1/4}\sqrt{\log(1/\delta)}\nonumber \\
    &+\Upsilon r_{\max}^2m^{-1/4})
\end{align}
We obtain the first inequality by the fact that $(A+B)^2\leq 2A^2 + 2B^2$. Then by Eq.~\eqref{eq:pe main 2}, Lemma~\ref{lem:linearization error} and Lemma~\ref{lem:linearization gradient error}, we reach the final inequality. By telescoping Eq.~\eqref{eq:pe main 3} for $t = $ to $T$, we have, with probability of $1-\delta$,
\begin{align}
    &\big\Vert f\big((s,a);\theta_T\big) - \hat{f}\big((s,a);\theta_{\pi^*}\big) \big\Vert^2 \nonumber \\
    \leq& \frac{1}{T}\sum_{t=1}^{T}\mathbb{E}_{\nu_\pi}\Big[\Big(f\big((s,a);\theta_t\big) - \hat{f}\big((s,a);\theta_{\pi^*}\big)\Big)^2\Big]\nonumber \\
    \leq&T^{-1}\big(2\eta(1-\gamma)-3\eta^2(1+\gamma)^2\big)^{-1}(\Vert \Theta_{{\rm init}}-\theta_{\pi^*} \Vert +\nonumber \\
    &  4\Upsilon T\eta|\xi_g|+3T\eta^2\xi_g^2) + \epsilon_g \label{eq:pe main 4}
\end{align}
Set $\eta=\min\{1/\sqrt{T}, (1-\gamma)/3(1+\gamma)^2\}$, which implies that $T^{-1/2}(2\eta(1-\gamma)-3\eta^2(1+\gamma)^2)^{-1} \leq 1/(1-\gamma)^2$, then we have, with probability of $1-\delta$,
\begin{align}
    &\big\Vert f\big((s,a);\theta_T\big) - \hat{f}\big((s,a);\theta_{\pi^*}\big) \big\Vert \nonumber \\
    \leq&\frac{1}{(1-\gamma)^2\sqrt{T}}\big(\Vert \Theta_{{\rm init}}-\theta_{\pi^*} \Vert_2^2 + 4\Upsilon\sqrt{T}|\xi_g|\nonumber \\
    & +3\xi_g^2\big) + \epsilon_g \nonumber \\
    \leq&\frac{\Upsilon^2 + 4\Upsilon\sqrt{T}|\xi_g|+3\xi_g^2}{(1-\gamma)^2\sqrt{T}} + \epsilon_g \nonumber \\
    =& \mathcal{O}(\Upsilon^{3}m^{-1/2}\log(1/\delta)+\Upsilon^{5/2}m^{-1/4}\sqrt{\log(1/\delta)}\nonumber \\
    &+\Upsilon r_{\max}^2m^{-1/4}+\Upsilon^2T^{-1/2}+\Upsilon)
\end{align}
We obtain the second inequality by the fact that $\Vert \Theta_{{\rm init}}-\theta_{\pi^*} \Vert_2\leq \Upsilon$. Then by definition we replace $\Tilde{Q}_{\omega_k}$ and $\Tilde{Q}_{\pi_k}$
\end{proof}







\label{sec:appendix:related}

\bibliography{appendixSTOPS}
\bibliographystyle{unsrt}
\end{document}